\pdfoutput=1
\PassOptionsToPackage{table}{xcolor}
\documentclass{article}
\usepackage{iclr2027_conference,times}

\usepackage{xcolor}

\usepackage[T1]{fontenc}
\usepackage[utf8]{inputenc}
\usepackage{microtype}
\usepackage{amsmath}
\usepackage{amssymb}
\usepackage{amsthm}
\usepackage{thm-restate}
\usepackage{refcount}
\usepackage{graphicx}
\usepackage{booktabs}
\usepackage{multirow}
\usepackage{float}
\usepackage{etoolbox}
\makeatletter
\renewcommand\paragraph{\@startsection{paragraph}{4}{\z@}{0.5ex plus 0.2ex minus 0.1ex}{-0.6em}{\normalsize\bf}}
\makeatother
\AtBeginEnvironment{table}{\setlength{\abovecaptionskip}{0pt}\setlength{\belowcaptionskip}{6pt}}
\AtBeginEnvironment{table*}{\setlength{\abovecaptionskip}{0pt}\setlength{\belowcaptionskip}{6pt}}
\AtBeginEnvironment{figure}{\setlength{\abovecaptionskip}{5pt}}
\usepackage{placeins}
\usepackage{algorithm}
\usepackage{algpseudocode}
\algrenewcommand\algorithmicrequire{\textbf{Input:}}
\algrenewcommand\algorithmicensure{\textbf{Output:}}
\newcommand{\algin}[1]{\Statex\hspace{\algorithmicindent}#1}
\usepackage{wrapfig}
\usepackage{needspace}
\usepackage{tikz}
\usetikzlibrary{arrows.meta,positioning,fit,backgrounds,calc}
\usepackage{hyperref}
\hypersetup{pdftitle={CertMark: Distortion-Free Multi-Bit Watermarking with Certified Decoding},pdfauthor={Paweł Batorski, Przemysław Spurek, Paul Swoboda}}
\renewcommand{\theHequation}{\arabic{equation}}
\usepackage{url}

\def\redc{\cellcolor[HTML]{FF999A}}
\def\orangec{\cellcolor[HTML]{FFCC99}}
\def\yellowc{\cellcolor[HTML]{FFF8AD}}
\newcommand{\ours}{CertMark}
\newcommand{\oursAgn}{\ours{} model-agnostic}
\newcommand{\oursMA}{\ours{} model-aware}
\newcommand{\oursAgnK}[1]{\oursAgn{} #1-bit}
\newcommand{\oursMAK}[1]{\oursMA{} #1-bit}
\newcommand{\oursAgnTwo}{\oursAgn{} 2-bit}
\newcommand{\oursMATwo}{\oursMA{} 2-bit}

\newcommand{\PRF}{\mathrm{PRF}}

\newtheorem{theorem}{Theorem}
\theoremstyle{plain}\newtheorem*{remark}{Remark}
\newcounter{savedtheorem}

\title{\ours{}: Distortion-Free Multi-Bit Watermarking with Certified Decoding}

\author{Paweł Batorski\textsuperscript{1} \quad Przemysław Spurek\textsuperscript{2,3} \quad Paul Swoboda\textsuperscript{1} \\
\normalfont\textsuperscript{1}Heinrich Heine University Düsseldorf \quad
\textsuperscript{2}Jagiellonian University \quad
\textsuperscript{3}IDEAS Research Institute}

\iclrfinalcopy

\begin{document}

\maketitle
\lhead{Preprint}
\vspace*{-8pt}

\begin{abstract}%
Leading multi-bit watermarking methods for language models encode messages by biasing the model's next-token probabilities, creating a trade-off between message recovery and text quality. Their decoders typically return the highest-scoring candidate from accumulated token-level evidence, without a certified abstention rule that bounds the probability of outputting an incorrect message. We introduce CertMark, a distribution-preserving multi-bit watermark with certified decoding. Rather than modifying probabilities, CertMark uses the embedded message to seed an exact Gumbel-max sampler, thereby preserving the model's original sampling distribution. We propose two scalable decoders: a model-agnostic, text-only decoder and a model-aware variant that leverages the original next-token distributions for stronger recovery. Both support certified abstention with mathematical bounds on the probability of returning an incorrect message. Across text completion, summarization, and story generation, CertMark matches the perplexity of unwatermarked text while reliably recovering multi-bit messages. The model-aware decoder further achieves higher bit accuracy than probability-biasing baselines.
Our code is publicly available at \url{https://github.com/Batorskq/CertMark}.
\end{abstract}

\section{Introduction}
\label{sec:intro}

Multi-bit watermarks let the provider of a language model attach a payload to text generation, such as a user identifier or a model version, and recover it later from the text alone \citep{wang2024codable,yoo-etal-2024-advancing,xu-etal-2026-xmark}.
The most prominent schemes \citep{xu-etal-2026-xmark} encode the payload by biasing the model's next-token probabilities toward a message-dependent part of the vocabulary, the multi-bit form of the green-list watermark \citep{kirchenbauer2023watermark,yoo-etal-2024-advancing,xu-etal-2026-xmark}.
The bias introduced into the distribution encodes the watermarking signal.
Therefore, watermarking recovery and text quality compete against each other~\citep{kirchenbauer2023watermark,kirchenbauer2024reliability,hu2024unbiased,giboulot2024watermax}.
Additionally, the watermarking decoder returns a vote with no bound on the probability that the recovered message is wrong \citep{yoo-etal-2024-advancing,wang2024codable,xu-etal-2026-xmark}.
Alternatively, distortion-free sampling methods do not decrease text quality.
They are well established for deciding whether a text is watermarked at all \citep{kuditipudi2024robust,christ2024undetectable,hu2024unbiased,wu2024resilient}, and has been extended to the multi-bit case \citep{jiang-etal-2025-stealthink,jiang2026mirrormark,feng2025bimark} but does not come with a certificate for recovering the message and recovers messages less accurately.

\ours{} is the first method that is distortion-free, multi-bit and comes with certificates on message recovery.
Our method encodes messages into the seeds for the random sampling during autoregressive decoding.
Specifically, the message seeds the random draws of a Gumbel-max sampler \citep{maddison2014astar}, every token is still sampled from the deployed model's own distribution, and a text produced this way is, to anyone without the key, distributed exactly like unwatermarked text, the guarantee that distortion-free zero-bit schemes provide \citep{kuditipudi2024robust,christ2024undetectable}.
The key holder, in turn, can replay the draws for every candidate message.
A wrong candidate scores like pure chance whatever the text, so the decoder can attach to each recovered message an exact bound on the probability that it is wrong, and abstain when the evidence is too thin.
When the detector can also run the model, it weighs each token by how surprising the model found it \citep{lu-etal-2024-entropy,lee-etal-2024-who} and recovers the payload where text alone cannot.

Our results are twofold. The distortion-free, chance-level and certificate statements are theorems under the standard idealisation of the pseudorandom function as a random function \citep{christ2024undetectable,golowich2024edit}. The empirical claims come from the XMark benchmark \citep{xu-etal-2026-xmark}, text completion on C4 \citep{raffel2020exploring}, summarization on CNN/DailyMail \citep{hermann2015teaching} and story generation on WritingPrompts \citep{fan-etal-2018-hierarchical} with Qwen3.5-4B and Llama-3.1-8B, together with machine translation, long messages, a copy-paste attack \citep{kirchenbauer2024reliability,xu-etal-2026-xmark}, a runtime comparison and a direct check of the certificate.

In summary, our contributions are as follows:
\begin{itemize}\setlength{\itemsep}{1pt}
  \item[i)] A distortion-free multi-bit watermark whose text-only decoder comes with an exact certificate on the probability of returning a wrong message, valid for any model, prompt and text.
  \item[ii)] A model-aware decoder that reads the same generations through the deployed sampler and remains certified and robust when part of the text has been edited.
  \item[iii)] An evaluation on two model families in which \ours{} keeps text quality at the unwatermarked level, recovers payloads at or above every baseline once the model is available, has the fastest text-only decoder, and whose certificate holds empirically.
\end{itemize}

\section{Related Work}

\paragraph{Zero-bit watermarking.}
Zero-bit watermarks detect whether a text is marked but carry no payload. Schemes either bias the next-token distribution toward keyed tokens \citep{kirchenbauer2023watermark,fernandez2023three,zhao2024provable,kirchenbauer2024reliability,liu2024unforgeable,liu2024semantic,liu2024adaptive,chang-etal-2024-postmark}, or control sampling while preserving the unwatermarked law \citep{kuditipudi2024robust,hu2024unbiased,wu2024resilient,christ2024undetectable,dathathri2024scalable,zhao2025permute,he2025theoretically,chen-etal-2025-improved}. Token-level signals remain vulnerable to strong rewriting and paraphrasing \citep{krishna2023paraphrasing,zhang2024watermarks,pang2024no,jovanovic2024watermark}.

\paragraph{Multi-bit watermarking.}
Multi-bit schemes carry an identifier or metadata rather than a single presence bit. Most use message-conditioned logit shifts, vocabulary partitions, or blockwise allocation, trading payload recovery against decoding cost, robustness, and text quality \citep{fernandez2023three,wang2024codable,li-etal-2024-identifying,yoo-etal-2024-advancing,qu-etal-2025-provably,xu2025majority,xu2025robust,xu-etal-2026-xmark,kim2026blockwise}. Distribution-preserving alternatives use keyed reweighting or place the message in the sampler's randomness \citep{jiang-etal-2025-stealthink,feng2025bimark,boroujeny2024multibit,jiang2026mirrormark}. All return the highest-scoring message with no bound on the probability that it is wrong. \ours{} instead uses the payload to seed an exact Gumbel-max draw, couples its text-only decoder to an exact wrong-message null, and optionally uses the model to decode the same generations more accurately.

\section{Method}
\label{sec:method}

Table~\ref{tab:notation} collects the notation used in this section.

\begin{table}[t]
\centering\footnotesize\setlength{\tabcolsep}{2pt}\renewcommand{\arraystretch}{0.95}
\caption{Notation used in the method.}
\label{tab:notation}
\resizebox{\ifdim\width>\textwidth\textwidth\else\width\fi}{!}{%
\begin{tabular}{@{}ll@{\hspace{5pt}}ll@{}}
\toprule
\multicolumn{2}{@{}l}{\textit{Model and text}} & \multicolumn{2}{l@{}}{\textit{Encoder}} \\
\midrule
$V$ & vocabulary & $h$ & context length in tokens \\
$x$ & prompt & $c_t=y_{t-h:t-1}$ & context of step $t$ \\
$y_{1:n}$, $y_t$, $y_{<t}$ & text, its $t$-th token, its prefix & $i_t$ & chunk carried at step $t$ \\
$Y_{1:n}$ & emitted text as a random variable & $\mathcal{C}_{t}$ & contexts used so far \\
$p_t(\cdot)=p(\cdot\mid x,y_{<t})$ & deployed sampler's law at step $t$ & $a_t$ & argument tuple of step $t$ \\
\multicolumn{2}{@{}l}{\textit{Key and randomness}} & $u_t(v)$ & uniform attached to token $v$ at step $t$ \\
$\kappa\in\mathcal{S}$ & secret key, $\mathcal{S}=\{0,1\}^{64}$ & $S(v)$ & Gumbel-max score of token $v$ \\
$\PRF_\kappa$, $\langle\cdot\rangle$ & keyed PRF, its argument encoding & $\mathcal{E}$ & encoder as a deterministic map \\
$\mathcal{X}$ & typed PRF arguments & \multicolumn{2}{l@{}}{\textit{Decoder and certificate}} \\
$\rho\colon\mathcal{S}\to(0,1)$ & seed to a grid point in $(0,1)$ & $\mathcal{T}$ & visited positions (first occurrences) \\
$\mathsf{chunk},\mathsf{msg},\mathsf{fresh}$ & domain-separating tags & $\mathcal{T}_c$, $n_c=|\mathcal{T}_c|$ & positions of chunk $c$, their count \\
$\Pr_\kappa$ & probability over the idealised key & $\{\mathcal{T}_c\}_{c=1}^C$ & scored positions of all chunks \\
\multicolumn{2}{@{}l}{\textit{Message}} & $u^{(m')}_t(v)$ & uniform under candidate $m'$ \\
$m\in\{0,1\}^{L}$ & message of $L$ bits & $S_c(m')$ & text-only score of candidate $m'$ \\
$m=(m_1,\dots,m_C)$ & its $C$ chunks, $m_c\in\{0,1\}^{k_c}$ & $\hat\delta_c$, $\delta$ & chunk certificate, chosen level \\
$m'$ & candidate value of a chunk & $D_\delta$, $\bot$ & certified decoder, abstention \\
$\hat m_c$, $\hat m$ & decoded chunk, decoded message & $H_1$, $H_0$ & hypotheses $m_c=m'$ and $m_c\neq m'$ \\
$\sigma$ & error budget in \eqref{eq:desiderata} & & \\
\bottomrule
\end{tabular}}
\end{table}

\paragraph{Preliminaries}
\label{app:prelim}

Write $V$ for the vocabulary, $x$ for the prompt, $y_{<t}$ for the tokens emitted before step $t$ and $p_t(\cdot)=p(\cdot\mid x,y_{<t})$ for the law the deployed sampler actually draws $y_t$ from, after temperature and truncation. A multi-bit watermark embeds a message $m\in\{0,1\}^{L}$, such as a user identifier, into the generated text so that the holder of a secret key $\kappa$ can recover it from the text alone \citep{wang2024codable,yoo-etal-2024-advancing,xu-etal-2026-xmark}.

Three properties are wanted of such a scheme at once, for a message $m$, a key $\kappa$ and the generated text $y$:
\begin{equation}
\underbrace{\mathcal{L}(y\mid m,\kappa)=\textstyle\prod_{t} p_t}_{\text{(i) distortion-free}},
\qquad
\underbrace{\hat m(y,\kappa)=m}_{\text{(ii) recoverable}},
\qquad
\underbrace{\Pr[\hat m\neq m]\le\sigma}_{\text{(iii) certified}}.
\label{eq:desiderata}
\end{equation}
Most existing schemes maximise (ii) alone: green-list and other logit bias methods alters the sampling law at every step, so (i) fails.
Typically, the decoder returns the highest-scoring message with no bound of the form (iii). The schemes that do satisfy (i), by reweighting the law so that its mean over the key is unchanged or by moving the message into the sampling randomness \citep{jiang-etal-2025-stealthink,feng2025bimark,jiang2026mirrormark}, still return a vote with no bound of the form (iii). Appendix~\ref{app:mirrormark} compares them with \ours{}.
In this work we develop \ours{}, a distortion-free multi-bit watermark that attains (i)--(iii) that lets the message select the sampler's randomness rather than its law.

\subsection{Encoding and Model-Agnostic Decoding}

\ours{}  uses the key $\kappa$ and message $m\in\{0,1\}^{L}$ to determine the sampler's randomness and lets $p_t$ unchanged.
A key holder can reconstruct this randomness, while anyone without the key sees an
ordinary sample from $p_t$.  The following identity, the Gumbel-max trick
\citep{gumbel1954statistical,maddison2014astar}, makes this possible.

\begin{restatable}[Gumbel-max sampling]{lemma}{certmarkGumbel}
\label{lem:gumbel}
Fix a step $t$ and recall that $p_t$ is the sampler's law over $V$ at that step, redrawn at every step. Draw $u(v)\sim U(0,1)$ i.i.d.\ over $v\in V$, form the Gumbel variates and the scores
\begin{equation}
S(v)\;=\;\log p_t(v)-\log\bigl(-\log u(v)\bigr),
\label{eq:gumbel}
\end{equation}
and emit $y=\arg\max_{v\in V}S(v)$. Then $\Pr[y=v]=p_t(v)$ for every $v\in V$.
\end{restatable}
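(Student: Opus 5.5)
The plan is a direct computation: reduce the Gumbel-max score to an exponential race. Tokens with $p_t(v)=0$ get $S(v)=-\infty$ and are never selected, so I restrict attention to the support $V^+=\{v: p_t(v)>0\}$. I first rewrite the argmax without the logarithms. Since $x\mapsto -\log(-\log x)$ is strictly increasing on $(0,1)$, maximising $S(v)$ is the same as maximising $p_t(v)/E(v)$, where $E(v)=-\log u(v)$. Equivalently, $y$ minimises $T(v)=E(v)/p_t(v)$ over $V^+$. Because $u(v)\sim U(0,1)$, each $E(v)$ is a standard exponential, $\Pr[E(v)>s]=\Pr[u(v)<e^{-s}]=e^{-s}$. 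Hence $T(v)\sim\mathrm{Exp}(p_t(v))$, and the $T(v)$ are independent across $v$.

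The claim then becomes the classical fact that in a race of independent exponentials with rates $\lambda_v$, the winner is $v$ with probability $\lambda_v/\sum_w\lambda_w$. I will prove this by conditioning on $T(v)=s$ and integrating:
\begin{equation*}
\Pr[y=v]=\int_0^\infty p_t(v)e^{-p_t(v)s}\prod_{w\neq v}e^{-p_t(w)s}\,ds=\int_0^\infty p_t(v)e^{-s\sum_{w}p_t(w)}\,ds=p_t(v).
\end{equation*}
The last step uses $\sum_{w\in V^+}p_t(w)=1$.

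Ties and zero-probability tokens need a short remark, since the argmax must be well defined almost surely. Ties among the continuous independent $T(v)$ have probability zero, because $V$ is finite. Tokens outside the support are excluded with probability one, since $\log 0=-\infty$.

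The main obstacle is only bookkeeping: getting the monotone transformation and the direction of the argmax versus argmin right. Once the problem is cast as an exponential race, the integral is routine. An alternative check is to compute directly with Gumbel densities, using $\Pr[y=v]=\int f_G(g-\log p_t(v))\prod_{w\ne v}F_G(g-\log p_t(w))\,dg$. I would keep the exponential route because it is cleaner.
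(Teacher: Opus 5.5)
Your proposal is correct and follows essentially the same route as the paper's proof: restrict to the support, set $E(v)=-\log u(v)\sim\mathrm{Exp}(1)$, rewrite the argmax of $S(v)$ as the argmin of $T(v)=E(v)/p_t(v)\sim\mathrm{Exp}(p_t(v))$, dismiss ties as null events, and evaluate the same exponential-race integral using $\sum_{w\in V^+}p_t(w)=1$. One small wording point: the argmax equivalence follows directly from $S(v)=\log\bigl(p_t(v)/E(v)\bigr)$ and the monotonicity of $\log$; the monotonicity of $x\mapsto-\log(-\log x)$ is not what is doing the work.
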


The lemma separates the sampling law from the realized draw: $p_t$ fixes the
law, while $u_t$ selects the sample.  We generate $u_t$ pseudorandomly from
$(\kappa,m)$, so the encoder and detector can reproduce it without changing
$p_t$.  Each $u_t$ must remain i.i.d.\ uniform conditional on the prefix and must
be fresh across steps.  The pseudorandom function and freshness rule below ensure this.  The lemma
and chain rule then yield Theorem~\ref{thm:exact}.  The decoder uses the same
randomness to distinguish the correct message from wrong candidates.

\paragraph{Pseudorandom function.}
As is standard in multi-bit watermarking, the sampling randomness comes from a pseudorandom function $\PRF_\kappa:\{0,1\}^*\to\mathcal{S}$, $\mathcal{S}=\{0,1\}^{64}$, keyed by the secret key $\kappa$ \citep{goldreich1986construct}. It is deterministic, so anyone holding $\kappa$ can recompute any of its values, while without the key its values at distinct inputs are computationally indistinguishable from independent uniform elements of $\mathcal{S}$. We evaluate it at tuples of typed arguments from $\mathcal{X}$ (tags, token sequences and integers), mapped to a single input by an injective, prefix-free encoding, $\PRF_\kappa(x_1,\dots,x_r):=\PRF_\kappa(\langle x_1,\dots,x_r\rangle)$, so distinct tuples are distinct inputs, and the tags $\mathsf{chunk}$, $\mathsf{msg}$ and $\mathsf{fresh}$ separate its three uses. An output $z\in\mathcal{S}$, read as an unsigned integer, becomes a number in $(0,1)$ through $\rho(z)=(\lfloor z/2^{11}\rfloor+\frac12)\,2^{-53}$, which is uniform on the $53$-bit midpoint grid of $(0,1)$ when $z$ is uniform. Throughout the analysis we use the standard random-function idealisation: $\PRF_\kappa$ is treated as a uniformly random function, so the values $\rho(\PRF_\kappa(a))$ at distinct inputs $a$ are independent $U(0,1)$ variables. A uniform is therefore fresh exactly when its argument tuple has not been queried before, which the encoder below guarantees. Appendix~\ref{app:prf} gives our instantiation.

\begin{figure}[t]
  \centering
  \resizebox{\textwidth}{!}{%
\pgfdeclarelayer{frames}
\pgfsetlayers{frames,background,main}
\begin{tikzpicture}[
  font=\small,
  >={Latex[length=1.8mm,width=1.5mm]},
  box/.style={draw, rounded corners=2.5pt, align=center, inner sep=3.5pt, minimum height=9mm, line width=0.6pt},
  enc/.style={box, fill=blue!7, draw=blue!55!black},
  dec/.style={box, fill=orange!12, draw=orange!75!black},
  sh/.style={box, fill=black!4, draw=black!45},
  arr/.style={->, line width=0.7pt, draw=black!65},
  link/.style={<->, dashed, line width=0.5pt, draw=black!35},
  lab/.style={font=\footnotesize\itshape, text=black!55, align=center},
  tok/.style={draw=black!45, fill=white, minimum width=7mm, minimum height=5.6mm, inner xsep=2.5pt, inner ysep=0pt, font=\footnotesize, rounded corners=1pt},
  chunk/.style={draw=blue!55!black, fill=white, minimum width=7mm, minimum height=5.6mm, inner xsep=2.5pt, inner ysep=0pt, font=\footnotesize, rounded corners=1pt},
]
\def\xa{0.45}\def\xb{4.8}\def\xc{9.85}\def\xd{13.9}\def\xe{18.5}
\def\ye{0}\def\yd{-5.4}
\node[tok] (t1) at (\xa-0.75,\ye) {$y_{t-h}$};
\node[right=1mm of t1, inner sep=0pt, font=\footnotesize] (t2) {$\cdots$};
\node[tok, right=1mm of t2] (t3) {$y_{t-1}$};
\begin{scope}[on background layer]
\node[sh, fit=(t1)(t3), inner sep=5pt, label={[lab]above:context $c_t=y_{t-h:t-1}$}] (E1) {};
\end{scope}
\node[sh, text width=4.25cm, label={[lab]above:chunk selected by the context}] (E2) at (\xb,\ye) {$i_t=\PRF_\kappa(\mathsf{chunk},c_t)$\\[2pt]$\bmod\ C$};
\node[enc, text width=3.0cm] (E3a) at (\xc,\ye+0.78) {$a_t=(\mathsf{msg},c_t,m_{i_t})$};
\node[enc, text width=3.0cm, label={[lab]below:repeated context}] (E3b) at (\xc,\ye-0.78) {$a_t=(\mathsf{fresh},c_t,t)$};
\node[enc, text width=3.35cm, label={[lab]above:uniforms for every token}] (E4) at (\xd,\ye) {$u_t(v)=\rho\bigl(\PRF_\kappa(a_t,v)\bigr)$\\[2pt]$v\in V$};
\node[enc, text width=3.8cm, label={[lab, xshift=-2mm]above:Gumbel-max: emit $y_t$}] (E5) at (\xe,\ye) {$y_t=\arg\max_{v}\bigl[\log p_t(v)$\\[1pt]$\qquad-\log(-\log u_t(v))\bigr]$};
\node[chunk] (c1) at (\xc-1.35,\ye+2.3) {$m_1$};
\node[chunk, right=1mm of c1, fill=blue!22] (c2) {$m_{i_t}$};
\node[right=1mm of c2, inner sep=0pt, font=\footnotesize] (c3) {$\cdots$};
\node[chunk, right=1mm of c3] (c4) {$m_C$};
\begin{scope}[on background layer]
\node[enc, fit=(c1)(c4), inner sep=4pt, minimum height=0pt, label={[lab]left:message $m\in\{0,1\}^{L}$\\in $C$ chunks of $k_c$ bits}] (E0) {};
\end{scope}
\node[sh, minimum height=6mm, inner sep=3pt] (LM) at (\xe,\ye+2.25) {language model: $p_t(\cdot\mid x,y_{<t})$};
\draw[arr] (E1) -- (E2);
\draw[arr] (E2.east) -- node[lab, above, sloped, pos=0.45, inner sep=0.5pt] {new $c_t$} (E3a.west);
\draw[arr] (E2.east) -- node[lab, below, sloped, pos=0.45, inner sep=0.5pt] {seen $c_t$} (E3b.west);
\draw[arr] (E3a.east) -- (E4.west);
\draw[arr] (E3b.east) -- (E4.west);
\draw[arr] (E4) -- (E5);
\draw[arr] ([xshift=1.55cm]LM.south) -- ([xshift=1.55cm]E5.north);
\draw[arr, blue!55!black] (c2.south) -- node[lab, right, text=blue!55!black, pos=0.5, inner sep=1.5pt] {chunk $i_t$} (c2.south |- E3a.north);
\coordinate (fb) at (0,\ye-2.0);
\draw[arr, dashed, black!50, rounded corners=6pt] (E5.south) -- (E5.south |- fb) -- node[lab, above, pos=0.72, inner sep=1.5pt] {append $y_t$ to the text and move to step $t+1$} (E1.south |- fb) -- (E1.south);
\node[tok] (d1) at (\xa-0.75,\yd) {$y_{t-h}$};
\node[right=1mm of d1, inner sep=0pt, font=\footnotesize] (d2) {$\cdots$};
\node[tok, right=1mm of d2] (d3) {$y_{t-1}$};
\node[tok, right=2.2mm of d3, fill=orange!25, draw=orange!75!black] (d4) {$y_t$};
\begin{scope}[on background layer]
\node[sh, fit=(d1)(d3), inner sep=5pt, label={[lab]below:first occurrence\\of $c_t$ in the text $y$}] (D1) {};
\end{scope}
\node[sh, text width=4.25cm, label={[lab]below:recomputed from $y$ and $\kappa$}] (D2) at (\xb,\yd) {$i_t=\PRF_\kappa(\mathsf{chunk},c_t)$\\[2pt]$\bmod\ C$};
\node[dec, text width=3.0cm, fill=orange!5, draw=orange!55!black, label={[lab, yshift=-1mm]below:message unknown:\\try all $2^{k_c}$ candidates}] (D3s2) at (\xc+0.26,\yd-0.26) {\phantom{every candidate}\\[2pt]\phantom{$(\mathsf{msg},c_t,m')$}};
\node[dec, text width=3.0cm, fill=orange!8, draw=orange!65!black] (D3s1) at (\xc+0.13,\yd-0.13) {\phantom{every candidate}\\[2pt]\phantom{$(\mathsf{msg},c_t,m')$}};
\node[dec, text width=3.0cm] (D3) at (\xc,\yd) {every $m'\in\{0,1\}^{k_c}$:\\[2pt]$(\mathsf{msg},c_t,m')$};
\node[dec, text width=3.35cm, label={[lab]below:uniform of the emitted token}] (D4) at (\xd,\yd) {$u^{(m')}_t(y_t)=$\\[2pt]$\rho\bigl(\PRF_\kappa(\mathsf{msg},c_t,m',y_t)\bigr)$};
\node[dec, text width=4.7cm, label={[lab]below:accumulate and decide after the last token}] (D5) at (\xe,\yd) {$S_{i_t}(m')\mathrel{+}=-\log\bigl(1-u^{(m')}_t(y_t)\bigr)$\\[2pt]$\hat m_c=\arg\max_{m'}S_c(m')$, certificate $\hat\delta_c$};
\begin{scope}[on background layer]\draw[arr] (D1) -- (D2);\end{scope}
\draw[arr] (D2) -- (D3);
\draw[arr] (D3) -- (D4);
\draw[arr] (D4) -- (D5);
\coordinate (ytl) at (0,\yd+1.0);
\draw[arr, orange!75!black, rounded corners=6pt] (d4.north) -- (d4.north |- ytl) -- node[lab, above, pos=0.5, text=orange!75!black, inner sep=1pt] {$y_t$} (D4.north |- ytl) -- (D4.north);
\coordinate (encpad) at (\xc,\ye-2.15);
\coordinate (decpad) at ($(D5.south)+(0,-0.8)$);
\coordinate (decpadtop) at (\xb,\yd+1.3);
\begin{pgfonlayer}{frames}
\node[draw=blue!40, rounded corners=4pt, fill=blue!2, fit=(E1)(E0)(E3b)(LM)(E5)(encpad), inner xsep=8pt, inner ysep=7pt] (encbox) {};
\node[draw=orange!50, rounded corners=4pt, fill=orange!3, fit=(D1)(D3s2)(D5)(decpad)(decpadtop), inner xsep=8pt, inner ysep=7pt] (decbox) {};
\end{pgfonlayer}
\draw[link] (E1.south |- encbox.south) -- node[lab, fill=white, inner sep=1pt] {same} (D1.north |- decbox.north);
\draw[link] (E2.south |- encbox.south) -- node[lab, fill=white, inner sep=1pt] {same} (D2.north |- decbox.north);
\node[rotate=90, anchor=south, font=\small\bfseries, text=blue!55!black, align=center] at (encbox.west) {Encoder\\{\normalfont\footnotesize\itshape holds $\kappa$, $m$, model}};
\node[rotate=90, anchor=south, font=\small\bfseries, text=orange!75!black, align=center] at (decbox.west) {Decoder\\{\normalfont\footnotesize\itshape holds $\kappa$, text only}};
\end{tikzpicture}
  }
  \caption{One step of \ours{}. The encoder (top) hashes the context to pick a chunk, reads the uniforms at an argument tuple carrying that chunk's value, or at a fresh position tuple when the context recurs, and emits the Gumbel-max token. The decoder (bottom) recomputes context and chunk from text and key alone, tries every value of the chunk, and accumulates $-\log(1-u)$ over the emitted tokens.}
  \label{fig:method}
\end{figure}

\paragraph{Encoder}
Rather than encoding the full message $m$ at once, the encoder divides it into
chunks: for $m\in\{0,1\}^{L}$, we write $m=(m_1,\dots,m_C)$ with
$m_c\in\{0,1\}^{k_c}$ and $\sum_{c=1}^{C}k_c=L$.  When a previously unseen
context appears, the encoder uses that step to encode one selected chunk.

Ordinary sampling turns a prefix into $p_t(\cdot | x,y_{<t})$ and then draws $y_t\sim p_t$ using fresh randomness.
Below we detail our new encoder
\begin{equation}
\mathcal{E}:\;(y_{<t},\,p_t,\,\kappa,\,m)\;\longmapsto\;y_t \,.
\end{equation}
$\mathcal{E}$ consumes no randomness of its own but replaces it deterministically by the message $m$ and key $\kappa$, yet, by Lemma~\ref{lem:gumbel}, emits $y_t\sim p_t$.

At step $t$ the model defines the next-token distribution $p_t(v)$ of Section~\ref{app:prelim}, and we take as context the last $h$ emitted tokens, $c_t=y_{t-h:t-1}$.
The context alone selects the message chunk,
\begin{equation}
i_t\;=\;\PRF_\kappa(\mathsf{chunk},c_t)\bmod C ,
\label{eq:chunkidx}
\end{equation}
and that chunk's value then enters the arguments at which the uniforms are read.

Lemma~\ref{lem:gumbel} requires the uniforms at step $t$ to be new: were $u_t$ to repeat a vector already used in this generation, it would no longer be independent of the prefix. A context occurring for the first time guarantees this, because the tuple $(\mathsf{msg},c_t,m_{i_t})$ is then one the encoder has never queried. A context that recurs does not, so at those steps the encoder replaces the message by the position $t$, which is new by construction. Writing $\mathcal{C}_{t-1}$ for the contexts already used in this generation, the step reads its uniforms at the argument tuple $a_t$ given by
\begin{equation}
a_t=\begin{cases}
(\mathsf{fresh},\,c_t,\,t), & c_t\in\mathcal{C}_{t-1},\\[2pt]
(\mathsf{msg},\,c_t,\,m_{i_t}), & \text{otherwise, and } \mathcal{C}_t=\mathcal{C}_{t-1}\cup\{c_t\},
\end{cases}
\qquad
u_t(v)=\rho\bigl(\PRF_\kappa(a_t,v)\bigr),
\label{eq:encoder}
\end{equation}
and the encoder emits the token that Lemma~\ref{lem:gumbel} produces from these uniforms,
\begin{equation}
y_t\;=\;\arg\max_{v\in V}\;\Bigl[\log p_t(v)-\log\bigl(-\log u_t(v)\bigr)\Bigr].
\label{eq:emit}
\end{equation}
Thus, a repeated context encodes no message chunk. Figure~\ref{fig:method} summarises the encoder and, aligned beneath it, the decoder of the next paragraph. Appendix~\ref{app:pseudocode} gives both as pseudocode.

The key property of the encoder is that it changes the source of the sampling
randomness without changing the distribution of the generated text.  Because
each uniform vector is independent of the prefix, Lemma~\ref{lem:gumbel} applies
at every step and gives the same sequence distribution as unwatermarked LLM
generation, as formalized next.

\begin{restatable}[Distortion-free generation]{theorem}{certmarkExact}
\label{thm:exact}
Treat $\PRF$ as a random function and draw the key $\kappa$ uniformly from $\mathcal{S}$. Let $Y_{1:n}$ be the sequence emitted by \eqref{eq:encoder}--\eqref{eq:emit} on prompt $x$ carrying message $m\in\{0,1\}^{L}$. Then for every such $m$ and $x$ and every $w\in V^{n}$,
\begin{equation}
\Pr_{\kappa}\bigl[Y_{1:n}=w\bigr]\;=\;\prod_{t=1}^{n}p\bigl(w_t\mid x,w_{<t}\bigr),
\label{eq:exact}
\end{equation}
which is the law of unwatermarked sampling and, in particular, does not depend on $m$.
\end{restatable}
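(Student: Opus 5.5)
The plan is to apply the chain rule and reduce to a single-step statement. We show that for every $t$ and every prefix $w_{<t}$ with positive probability,
\[
\Pr_\kappa\bigl[Y_t=w_t\mid Y_{<t}=w_{<t}\bigr]=p(w_t\mid x,w_{<t}).
\]
Multiplying over $t=1,\dots,n$ then gives \eqref{eq:exact}. The message $m$ enters only through the argument tuples, so once the one-step identity holds, independence of $m$ follows at once. Throughout, ``$\Pr_\kappa$'' means probability over the idealised random function $\PRF_\kappa$. We identify $u_t(v)$ with exact $U(0,1)$ variables, as the idealisation stated before the encoder allows, and ignore the measure-zero issue of ties in the argmax.

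The key step is a freshness claim. Fix the realised prefix $w_{<t}$. The encoder is deterministic given $\PRF_\kappa$, so the event $\{Y_{<t}=w_{<t}\}$ is a function of $\PRF_\kappa$ only through the values queried at steps $1,\dots,t-1$. These are
\begin{itemize}
\item the chunk-selector values $\PRF_\kappa(\mathsf{chunk},c_s)$, and
\item the token uniforms $\PRF_\kappa(a_s,v)$ for $s<t$ and $v\in V$.
\end{itemize}
The context $c_t$ and the case in \eqref{eq:encoder} are determined by $w_{<t}$. The chunk index $i_t$ depends on $\PRF_\kappa(\mathsf{chunk},c_t)$, but that value carries the $\mathsf{chunk}$ tag. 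By the prefix-free injective encoding, it never coincides with any input of the form $(a,v)$ whose first field is $\mathsf{msg}$ or $\mathsf{fresh}$.

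We then show $a_t\neq a_s$ for all $s<t$, in two cases.
\begin{itemize}
\item If $c_t\in\mathcal{C}_{t-1}$, then $a_t=(\mathsf{fresh},c_t,t)$. Every earlier $\mathsf{fresh}$ tuple has a position $s\neq t$, and the $\mathsf{msg}$ tuples differ in their tag.
\item Otherwise $c_t$ is a first occurrence. Every earlier $\mathsf{msg}$ tuple $(\mathsf{msg},c_s,m_{i_s})$ has $c_s\in\mathcal{C}_{t-1}$, so $c_s\neq c_t$. The tag again separates $a_t$ from the $\mathsf{fresh}$ tuples.
\end{itemize}
This holds whatever value $m_{i_t}$ takes. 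Hence the inputs $(a_t,v)$, $v\in V$, are distinct from one another and from every input queried before step $t$.

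Under the random-function model, the family $\{\PRF_\kappa(a_t,v)\}_v$ is therefore i.i.d.\ uniform and independent of all previously queried values. It is also independent of $\PRF_\kappa(\mathsf{chunk},c_t)$, though $a_t$ depends on $i_t$, so some care is needed. To handle this, condition additionally on the value of $i_t$. For each fixed value, $a_t$ is a fixed fresh tuple, and the uniforms read there are independent of $\mathcal{F}_{t-1}$ and of $i_t$. Conditional on $\mathcal{F}_{t-1}$ and $i_t$, the vector $u_t$ is then i.i.d.\ $U(0,1)$, where $\mathcal{F}_{t-1}$ denotes all PRF values queried before step $t$ together with $\PRF_\kappa(\mathsf{chunk},c_t)$. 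Since $p_t=p(\cdot\mid x,w_{<t})$ is fixed by the prefix, Lemma~\ref{lem:gumbel} gives
\[
\Pr\bigl[Y_t=w_t\mid\mathcal{F}_{t-1},\,i_t\bigr]=p(w_t\mid x,w_{<t}).
\]
Averaging over $i_t$ and over $\mathcal{F}_{t-1}$ restricted to the event $\{Y_{<t}=w_{<t}\}$, which is $\mathcal{F}_{t-1}$-measurable, gives the one-step identity.

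The main obstacle is this freshness and conditioning argument. One must check that no value used to decide the prefix, or to route the chunk index, is reused as a token uniform, and that the data-dependent choice of $a_t$ does not bias $u_t$. I would handle it by the two-case injectivity check above and by conditioning on everything queried so far, which turns the adaptive choice into a fixed fresh query. After that, the chain rule finishes the proof, and the absence of $m$ on the right-hand side gives the final claim.
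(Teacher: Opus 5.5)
Your proposal is correct and follows essentially the same route as the paper's proof: condition on the full query transcript including the current chunk-selection answer, check with the two-case tag/context/position argument that every sampling input $(a_t,v)$ is new, use the fact that a fresh input of a random function stays uniform even when chosen adaptively together with Lemma~\ref{lem:gumbel}, and finish with the chain rule. The paper spells out the chain rule for prefixes of probability zero, but your averaging step already covers it, since it gives $\Pr_\kappa[Y_{1:t}=w_{1:t}]=p(w_t\mid x,w_{<t})\,\Pr_\kappa[Y_{<t}=w_{<t}]$ directly; your separate conditioning on $i_t$ is redundant, because $\mathcal{F}_{t-1}$ already fixes it.
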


\noindent\textit{Proof.} The proof can be found in Appendix~\ref{app:exact}.

The theorem shows that, over an
unknown random key, \ours{} generates the same sequence distribution as the
unwatermarked LLM, independently of the encoded message.  As a corollary, a
reader without the key cannot distinguish the two distributions from the text
alone.

\begin{remark}[Deployment model]
Theorem~\ref{thm:exact} is a single-generation guarantee. In our experiments,
as is standard for PRF-seeded distortion-free samplers
\citep{kuditipudi2024robust,dathathri2024scalable}, we reuse one key. The encoder is therefore deterministic in the
prompt and message: repeated prompts reproduce the same text, and shared
contexts reuse the same uniforms. Each generation still has the exact
unwatermarked marginal law, but generations need not be independent \citep{fu-etal-2024-gumbelsoft,wu2025distortion,gloaguen2025blackbox}.
Theoretically, Theorem~\ref{thm:nonce} in Appendix~\ref{app:additional} resolves
this by adding a fresh $r$-bit nonce to every encoder argument. We leave its evaluation to future work.
\end{remark}

\paragraph{Model-Agnostic Decoder}
The decoder holds only the text and $\kappa$, the same access every baseline detector has.
It re-tokenises the text and, with $c_t=y_{t-h:t-1}$ as in the encoder, visits the positions whose context occurs for the first time \emph{in that text},
\begin{equation}
\mathcal{T}\;=\;\bigl\{\,t\in\{h+1,\dots,n\}:\ c_t\neq c_s\ \text{for all}\ h+1\le s<t\,\bigr\},
\label{eq:visited}
\end{equation}
recomputing $i_t$ at each $t\in\mathcal{T}$ exactly as the encoder did. Positions $t\le h$ have no full context and a repeated context was read under the tag $\mathsf{fresh}$ by \eqref{eq:encoder}, so neither carries a message and both are skipped.
Writing $\mathcal{T}_c=\{t\in\mathcal{T}: i_t=c\}$ for the positions so assigned to chunk $c$ and $n_c=|\mathcal{T}_c|$, it scores every candidate value $m'\in\{0,1\}^{k_c}$ of that chunk by
\begin{equation}
S_c(m')\;=\;\sum_{t\in \mathcal{T}_c}-\log\bigl(1-u^{(m')}_t(y_t)\bigr),
\qquad
u^{(m')}_t(v)=\rho\bigl(\PRF_\kappa(\mathsf{msg},c_t,m',v)\bigr),
\label{eq:score}
\end{equation}
and outputs $\hat m_c=\arg\max_{m'}S_c(m')$. This is a joint decision over the whole codebook of $2^{k_c}$ words. Chunking is what keeps this at $C\cdot2^{k_c}$ rather than $2^{L}$. After decoding all $C$ chunks, we concatenate $\hat m_1,\dots,\hat m_C$ to obtain the final message $\hat m=(\hat m_1,\dots,\hat m_C)$.

\begin{remark}[Choice of the score]
We use our specific form for scores~\eqref{eq:score}, because it allows us to derive that the score for wrong messages follows a Gamma distribution.
Alternatively, we might use the simpler score $\sum_{t\in \mathcal{T}_c}u^{(m')}_t(y_t)$, which would, however, result in harder to handle Irwin-Hall laws~\citep{irwin1927frequency,hall1927distribution}.
\end{remark}

\subsection{Certification and Theoretical Analysis}
\label{sec:cert}

From this point through the end of Section~\ref{sec:modelaware}, we use the
random-function idealisation of $\PRF$ from Section~\ref{sec:method} (Appendix~\ref{app:prf}). We fix the
deployed model, prompt $x$, sampling settings, embedded message $m$, and a
chunk $c$ with $k_c\ge1$. Unless stated otherwise, probabilities are over the
idealised keyed randomness. For this fixed chunk, the decoder considers every candidate
$m'\in\{0,1\}^{k_c}$, calling $m_c$ the correct candidate and every
$m'\neq m_c$ a wrong candidate.
The decoder returns the candidate with
the largest score.  To analyze this comparison, Theorem~\ref{thm:null} gives the
score distribution of every wrong candidate, while Theorem~\ref{thm:gain}
gives the per-token distributions for both cases. Its full statement in
Appendix~\ref{app:gain} also gives their expected scores.

\begin{restatable}[Score distribution for wrong candidates]{theorem}{certmarkNull}
\label{thm:null}
For a wrong candidate $m'\neq m_c$, conditional on the emitted text and the scored positions
$\mathcal{T}_c$ selected by the chunk assignments, whenever $n_c=|\mathcal{T}_c|\ge1$,
\begin{equation}
S_c(m')\;\sim\;\Gamma(n_c,1)\qquad\text{exactly,}
\label{eq:null}
\end{equation}
where $\Gamma(n_c,1)$ has shape $n_c$ and rate $1$. If $n_c=0$, then
$S_c(m')=0$.
\end{restatable}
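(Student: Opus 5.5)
The plan is to show that, conditional on the emitted text and on $\mathcal{T}_c$, the values $u^{(m')}_t(y_t)$ for $t\in\mathcal{T}_c$ are i.i.d.\ $U(0,1)$. Each summand $-\log(1-u)$ is then $\mathrm{Exp}(1)$, and a sum of $n_c$ independent $\mathrm{Exp}(1)$ variables is $\Gamma(n_c,1)$. The case $n_c=0$ is immediate, since the sum in \eqref{eq:score} is empty.

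\textbf{Step 1: the wrong-candidate arguments are never used by the encoder.} For $t\in\mathcal{T}_c$, the wrong-candidate uniform is read at the PRF input $(\mathsf{msg},c_t,m',y_t)$. Every query the encoder made has one of three forms: $(\mathsf{chunk},c_s)$, $(\mathsf{fresh},c_s,s,v)$, or $(\mathsf{msg},c_s,m_{i_s},v)$, where $c_s$ is a first-occurrence context. The tags and the injective prefix-free encoding rule out collisions with the first two forms. A collision with the third would need $c_s=c_t$ and $m_{i_s}=m'$. Since $i_s$ is a deterministic function of the context, $c_s=c_t$ forces $i_s=i_t=c$. That would give $m_c=m'$, which contradicts $m'\neq m_c$.

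\textbf{Step 2: distinct positions give distinct arguments.} The positions in $\mathcal{T}$ have pairwise distinct contexts by \eqref{eq:visited}. Hence the inputs $(\mathsf{msg},c_t,m',y_t)$ for $t\in\mathcal{T}_c$ are pairwise distinct.

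\textbf{Step 3: conditioning does not disturb these values.} Under the random-function idealisation, the PRF values at these inputs are independent of every PRF value the encoder queried. The emitted text $Y$ is a deterministic function of the queried values alone. This is the main subtlety: the context is decided by $Y$ itself, and the decoder's first-occurrence set need not coincide with the encoder's set $\mathcal{C}$ when text is edited. To handle this, I would fix an arbitrary text $y$ and argue directly that the candidate inputs for $y$ are determined by $y$ and by the chunk values $\PRF_\kappa(\mathsf{chunk},\cdot)$.

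The conditioning event $\{Y=y,\ \mathcal{T}_c=T\}$ is measurable with respect to the PRF values at inputs outside the set $A=\{(\mathsf{msg},c_t,m',y_t):t\in T\}$. This holds because the chunk tags and the encoder's queries never enter $A$ (Step 1 applied to $y$). Conditional on this event, the PRF values on $A$ are therefore still independent uniforms.

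\textbf{Step 4: the Gamma law.} After the map $\rho$, the variables $U_t=u^{(m')}_t(y_t)$ are i.i.d.\ $U(0,1)$. Then $-\log(1-U_t)\sim\mathrm{Exp}(1)$, and summing $n_c$ of them gives $S_c(m')\sim\Gamma(n_c,1)$.

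The uniformity holds up to the $53$-bit grid of $\rho$. I would note that the paper's idealisation treats $\rho(\PRF_\kappa(a))$ as exactly $U(0,1)$, so "exactly" in \eqref{eq:null} is within that idealisation.

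If the edited-text case makes the measurability argument in Step 3 awkward, I would fall back to the following: condition on all PRF values outside $A$, observe that $y$ and $T$ are fixed by them, and integrate.
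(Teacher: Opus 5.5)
Your proposal is correct and follows essentially the same route as the paper. The wrong-candidate inputs $\langle\mathsf{msg},c_t,m',y_t\rangle$ are pairwise distinct and never queried by the encoder, because a shared context forces the same chunk and hence the true value $m_c$; this gives independent uniforms, hence $\mathrm{Exp}(1)$ summands and a $\Gamma(n_c,1)$ sum. The only difference is how the conditioning is formalised: the paper conditions on the full adaptive query transcript via lazy sampling and then applies the tower property, whereas you note that, for fixed $(y,T)$, the event $\{Y=y,\ \mathcal{T}_c=T\}$ depends only on the function's values off the fixed set $A$, which is an equally valid and slightly more direct way to reach the same conclusion.
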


\noindent\textit{Proof.} The proof can be found in Appendix~\ref{app:null}.

\begin{theorem}[Candidate distributions]
\label{thm:gain}
For $t\in \mathcal{T}_c$, condition on
$y_{<t}$ and $p=p_t(y_t)\in(0,1]$, the sampler's probability of the emitted
token. For every candidate $m'$, its reconstructed value satisfies
\begin{equation}
u_t^{(m')}(y_t)\sim
\begin{cases}
\mathrm{Beta}(1/p,1), & m'=m_c \quad\text{(correct candidate)},\\[2pt]
U(0,1), & m'\neq m_c \quad\text{(wrong candidate)},
\end{cases}
\label{eq:candidate_laws}
\end{equation}
where $u_t^{(m_c)}(y_t)=u_t(y_t)$.
\end{theorem}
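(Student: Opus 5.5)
The plan is to work under the random-function idealisation and treat the two cases separately, conditioning throughout on $y_{<t}$. The first step is to show that, for $t\in\mathcal{T}_c$ and in particular for $t\in\mathcal{T}$, the uniforms $u_t(v)$, $v\in V$, are i.i.d.\ $U(0,1)$ and independent of $y_{<t}$. This is the freshness property already used for Theorem~\ref{thm:exact}. Since $c_t$ occurs for the first time in the text, the encoder read the tuple $(\mathsf{msg},c_t,m_{i_t})$ at step $t$ with $i_t=c$. The tuples $(\mathsf{msg},c_t,m_c,v)$ were never queried earlier in the generation, because earlier $\mathsf{msg}$ queries carry a different context and $\mathsf{fresh}$ and $\mathsf{chunk}$ queries are domain-separated by the injective prefix-free encoding. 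Conditioning on the prefix, which is a function of earlier PRF values, therefore leaves these values i.i.d.\ uniform. We ignore the $53$-bit grid discretisation, as the paper's idealisation does.

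For the correct candidate, $u_t^{(m_c)}(y_t)=u_t(y_t)$ holds by definition, since the arguments coincide. Write $G(v)=-\log(-\log u_t(v))$, so the $G(v)$ are i.i.d.\ standard Gumbel. Equivalently, set $E(v)=-\log u_t(v)\sim\mathrm{Exp}(1)$. The argmax of $\log p_t(v)+G(v)$ is then the argmin of $E(v)/p_t(v)$, and each ratio $E(v)/p_t(v)$ is exponential with rate $p_t(v)$. Two standard facts about competing exponentials give the result. First, the index of the minimum has law $p_t$, which is Lemma~\ref{lem:gumbel}. Second, the minimum $M$ is $\mathrm{Exp}(\sum_v p_t(v))=\mathrm{Exp}(1)$ and is independent of the argmin.

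Conditional on $y_t=v$ with $p=p_t(v)$, we have $E(v)=pM$ with $M\sim\mathrm{Exp}(1)$. Hence
\[
\Pr\bigl[u_t(y_t)\le s\mid y_t=v\bigr]=\Pr\bigl[e^{-pM}\le s\bigr]=\Pr\bigl[M\ge -\log(s)/p\bigr]=s^{1/p},
\]
which is the $\mathrm{Beta}(1/p,1)$ CDF. Conditioning on $p_t(y_t)=p$ rather than on $y_t$ itself then follows by averaging over tokens that share this probability. The main care point is justifying the independence of the minimum and the argmin; I would verify it directly by computing $\Pr[y_t=v,\,M>s]=p_t(v)e^{-s}$ with a one-line integral.

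For a wrong candidate $m'\neq m_c$, the argument tuple $(\mathsf{msg},c_t,m',y_t)$ differs from every tuple the encoder queried up to and including step $t$. At step $t$ the encoder used $m_c\neq m'$, and earlier steps used other contexts or other tags. Since $(y_{<t},y_t)$ is a function of PRF values at other inputs, the value at this input remains $U(0,1)$ conditional on $y_{\le t}$, and hence conditional on $p$. One subtlety is that this tuple might be queried by the encoder at a later step with the same context. That cannot happen at a $\mathsf{msg}$ step, since $c_t$ recurs and later uses of $c_t$ are tagged $\mathsf{fresh}$. So conditioning only on $y_{<t}$ and $p$ is safe.
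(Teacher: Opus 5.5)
Your proposal is correct and follows the paper's proof. Both arguments rest on the same three steps: freshness of the step-$t$ uniforms under the random-function idealisation together with $u_t^{(m_c)}=u_t$; the conditional law $\Pr[u_t(y_t)\le s\mid y_t=v]=s^{1/p_t(v)}$, averaged over tokens of equal probability; and the never-queried-input argument for wrong candidates.

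The differences are only in execution. The paper obtains $\Pr[u_t(v)\le x,\,Y_t=v]=q\,x^{1/q}$ by integrating directly over the value of $u_t(v)$, whereas you use min--argmin independence of competing exponentials; this is the same computation, and the paper uses your form in Appendix~\ref{app:scaling}. For the wrong candidate the paper conditions on the full transcript, while you condition only on the history through step $t$. Your conditioning already suffices for this statement, so your remark about later queries is true but not needed.
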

\stepcounter{equation}\xdef\gainEquationNumber{\theequation}

\noindent\textit{Proof.} The proof can be found in
Appendix~\ref{app:gain}.

We now describe the certified decoder, which uses the null distribution from
Theorem~\ref{thm:null} to decide whether to return the predicted chunk or
abstain.  An abstention means that the decoder returns no value for a particular
message chunk because its certificate does not meet the chosen confidence
level.  Theorem~\ref{thm:cert} shows that the probability of returning an
incorrect chunk is at most the chosen level $\delta$.

\begin{restatable}[Certified decoding]{theorem}{certmarkCert}
\label{thm:cert}
For a level $\delta\in(0,1]$, let $\hat m_c(y)=\arg\max_{m'}S_c(m')$ and define
\begin{equation}
\hat\delta_c(y)\;=\;\min\Bigl\{1,\;\bigl(2^{k_c}-1\bigr)\,
\Pr\bigl[\Gamma(n_c,1)\ge S_c(\hat m_c(y))\bigr]\Bigr\},
\label{eq:certdef}
\end{equation}
with $\hat\delta_c(y)=1$ when $n_c=0$.  Let $\bot$ denote abstention and define
the certified decoder $D_\delta\colon V^{n}\to\{0,1\}^{k_c}\cup\{\bot\}$ by
\begin{equation}
D_\delta(y)\;=\;\begin{cases}\hat m_c(y), & \hat\delta_c(y)\le\delta,\\[2pt] \bot, & \hat\delta_c(y)>\delta,\end{cases}
\label{eq:rule}
\end{equation}
where the scores are computed from the text $y$ and the key.  Then, for the text
$Y_{1:n}$ emitted by \eqref{eq:encoder}--\eqref{eq:emit},
\begin{equation}
\Pr_{\kappa}\bigl[D_\delta(Y_{1:n})\notin\{m_c,\bot\}\bigr]\;\le\;\delta,
\label{eq:cert}
\end{equation}
Here the probability is over the idealised PRF randomness induced by the
one-time draw of $\kappa$.
\end{restatable}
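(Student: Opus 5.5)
The plan is to reduce the event of a wrong certified output to a union over wrong candidates of tail events for their scores, and then control each tail with Theorem~\ref{thm:null} through the probability integral transform. Write $F_n(s)=\Pr[\Gamma(n,1)\ge s]$ for the Gamma survival function, continuous and nonincreasing in $s$. If $D_\delta(Y)\notin\{m_c,\bot\}$, then $n_c\ge1$ (otherwise $\hat\delta_c=1$ and, for $\delta<1$, the decoder abstains; the case $\delta=1$ is trivial). Moreover $\hat m_c=m'$ for some wrong $m'\neq m_c$, and $(2^{k_c}-1)F_{n_c}(S_c(m'))\le\delta$. Hence
\[
\bigl\{D_\delta(Y)\notin\{m_c,\bot\}\bigr\}\subseteq\bigcup_{m'\neq m_c}\Bigl\{n_c\ge1,\ F_{n_c}\bigl(S_c(m')\bigr)\le \tfrac{\delta}{2^{k_c}-1}\Bigr\}.
\]
Applying a union bound over the $2^{k_c}-1$ wrong candidates, it suffices to show that each event on the right has probability at most $\delta/(2^{k_c}-1)$.

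For a fixed wrong $m'$, I would condition on the emitted text $Y_{1:n}$ together with the chunk assignments, which determine $\mathcal{T}_c$ and $n_c$. By Theorem~\ref{thm:null}, conditionally on this information with $n_c\ge1$, we have $S_c(m')\sim\Gamma(n_c,1)$ exactly. Since the Gamma law is continuous, $F_{n_c}(S_c(m'))$ is then conditionally $U(0,1)$ by the probability integral transform. So the conditional probability of $F_{n_c}(S_c(m'))\le\delta/(2^{k_c}-1)$ equals $\min\{1,\delta/(2^{k_c}-1)\}$. Taking expectation over the conditioning variables, on the event $n_c\ge1$, gives the per-candidate bound, and summing over candidates yields \eqref{eq:cert}.

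The main obstacle is justifying that Theorem~\ref{thm:null} may legitimately be applied conditionally on the whole text even though the text is generated using the key. The point to check is that the uniforms $u^{(m')}_t(y_t)=\rho(\PRF_\kappa(\mathsf{msg},c_t,m',y_t))$ for $t\in\mathcal{T}_c$ are evaluated at argument tuples the encoder never queried. The encoder only queries $(\mathsf{msg},c,m_{i})$ with the true chunk value, $(\mathsf{fresh},\dots)$ tuples, and $(\mathsf{chunk},c)$ tuples, all of which are distinct from these by injective, tagged encoding. The candidate tuples are also pairwise distinct across $t\in\mathcal{T}_c$, since first-occurrence contexts are distinct. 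Under the random-function idealisation these values are therefore independent of everything determining $Y_{1:n}$ and $\mathcal{T}_c$, and are i.i.d.\ uniform. Each $-\log(1-u)$ is then $\mathrm{Exp}(1)$, so the sum is $\Gamma(n_c,1)$. I would cite this as the content of Theorem~\ref{thm:null}, noting that the $53$-bit grid discretisation of $\rho$ is absorbed into the idealisation. A possible subtlety is that the argmax tie-breaking is also a function of the conditioning plus the other candidates' scores. Because the union bound sidesteps the dependence among wrong candidates' scores entirely, only marginal uniformity of each $F_{n_c}(S_c(m'))$ is needed, so no joint independence across candidates is required.
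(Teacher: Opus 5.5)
Your proposal is correct and is essentially the paper's proof. You condition on the text and scored positions, apply Theorem~\ref{thm:null} to each fixed wrong candidate, cover the event of accepting a wrong chunk by the union over the $2^{k_c}-1$ wrong candidates (so neither the law of the selected maximum nor any dependence among candidates matters), and then average out the conditioning. Your probability-integral-transform step, $\Pr[F_{n_c}(S_c(m'))\le\delta/(2^{k_c}-1)]=\delta/(2^{k_c}-1)$, is the paper's explicit threshold $s_\delta$ with $\Pr[\Gamma(n_c,1)\ge s_\delta]=\delta/(2^{k_c}-1)$ written in another form.
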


\noindent\textit{Proof.} The proof can be found in Appendix~\ref{app:cert}.

The theorem provides certified
decoding because the probability of returning an incorrect chunk is at most
$\delta$, with abstention when the evidence is insufficient.  As a corollary,
decoding each of the $C$ chunks at level $\delta/C$ bounds the probability of
returning any incorrect chunk in the final message by $\delta$.

Appendix~\ref{app:scaling} adds a payload-scaling bound, Theorem~\ref{thm:scaling}, under which the error of a chunk falls exponentially in the accumulated evidence and each additional bit costs about $\ln 2$ of it.

\subsection{Model-Aware Decoder}
\label{sec:modelaware}

A model-aware decoder also has access to the prompt and deployed model, allowing
it to recompute $p_t=p_t(y_t)$ for each scored token under the complete sampling
procedure. By Theorem~\ref{thm:gain}, the candidate distributions in
\eqref{eq:candidate_laws} have densities
$f_{\mathrm{correct}}(u\mid p)=p^{-1}u^{1/p-1}$ and
$f_{\mathrm{wrong}}(u)=1$ on $(0,1)$. A single scored token therefore
contributes the likelihood ratio
\begin{equation}
f_{p}(u)\;=\;\frac{f_{\mathrm{correct}}(u\mid p)}{f_{\mathrm{wrong}}(u)}\;=\;\frac{p^{-1}u^{1/p-1}}{1}\;=\;p^{-1}u^{1/p-1},
\label{eq:lr_token}
\end{equation}
which is identically $1$ when $p=1$.
A token, where the sampler was certain, carries no evidence.
This happens for large for $u$ near $1$ when $p<1$, so a large uniform at a token the sampler was unsure of is strong evidence for the candidate.

For the results in this subsection, we additionally condition on the emitted
text and the scored-position sets $\{\mathcal{T}_c\}_{c=1}^C$, which fixes
$p_t=p_t(y_t)\in(0,1]$ for $t\in \mathcal{T}_c$. For a candidate $m'$ we test the hypothesis
$H_1\colon m_c=m'$ against the alternative $H_0\colon m_c\neq m'$.

\begin{theorem}[Exact likelihood ratio]
\label{thm:llr}
For a fixed candidate $m'$, the log-likelihood ratio of $H_1$ to $H_0$
based on its scored uniforms is
\begin{equation}
\ell_c(m')=\sum_{t\in \mathcal{T}_c}\Bigl[
  \bigl(\tfrac{1}{p_t}-1\bigr)\log u^{(m')}_t(y_t)
  +\log\tfrac{1}{p_t}\Bigr].
\label{eq:llr}
\end{equation}
\end{theorem}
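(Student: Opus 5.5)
The plan is to compute the joint density of the scored uniforms $\{u^{(m')}_t(y_t)\}_{t\in\mathcal{T}_c}$ under $H_1$ and under $H_0$, conditional on the emitted text and the scored-position sets, show that each factorises over $t\in\mathcal{T}_c$, and take the log of the ratio.

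\textbf{Per-token laws.} For a single $t\in\mathcal{T}_c$, Theorem~\ref{thm:gain} gives the law of $u^{(m')}_t(y_t)$ given $y_{<t}$ and $p_t=p_t(y_t)$. Under $H_1$ it is $\mathrm{Beta}(1/p_t,1)$, with density $p_t^{-1}u^{1/p_t-1}$. Under $H_0$ it is $U(0,1)$, with density $1$. Their ratio is \eqref{eq:lr_token}, whose logarithm is
\[
\bigl(\tfrac{1}{p_t}-1\bigr)\log u+\log\tfrac{1}{p_t}.
\]

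\textbf{Factorisation.} I will show that, conditional on the text and $\{\mathcal{T}_c\}$, the scored uniforms at distinct $t\in\mathcal{T}_c$ are independent under either hypothesis. The argument uses the random-function idealisation.

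\begin{itemize}
\item The visited contexts are first occurrences, so the argument tuples $(\mathsf{msg},c_t,m',y_t)$ differ across $t\in\mathcal{T}_c$. These tuples are also disjoint from the $\mathsf{chunk}$ queries that determine $\mathcal{T}_c$.
\item Under $H_0$, each $u^{(m')}_t(y_t)$ is a PRF value at a tuple the encoder never queried, since the encoder used $m_c\neq m'$. These values are therefore independent uniforms regardless of the text, as in the proof of Theorem~\ref{thm:null}.
\item Under $H_1$, the uniform at step $t$ is the encoder's own $u_t(y_t)$. Write the joint law of the text and these uniforms by the chain rule over steps. At step $t$, the vector $u_t(\cdot)$ is fresh and independent of the past (Theorem~\ref{thm:exact}'s argument). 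Hence, given $y_{<t}$, the conditional law of $u_t(y_t)$ given $y_t$ is $\mathrm{Beta}(1/p_t,1)$, and it depends on the past only through $p_t$.
\item Multiplying these conditional densities across $t$ gives the product density under $H_1$. The unobserved coordinates $u_t(v)$, $v\neq y_t$, are integrated out, and the fresh steps contribute factors that do not depend on the scored uniforms.
\end{itemize}

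\textbf{Main obstacle.} The hardest step is the $H_1$ factorisation. Conditioning on the full text, including future tokens $y_{>t}$, might seem to change the law of $u_t(y_t)$, since later tokens depend on the prefix through $p_s$.

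I would handle this by writing the full joint density of $(Y_{1:n}, \{u_t(y_t)\})$ sequentially. Given $y_{\le t}$, later steps use fresh uniforms at new tuples, so their contribution $\Pr[y_s\mid y_{<s}]$ does not involve $u_t(y_t)$ once $y_{\le t}$ is fixed. The factor for $u_t(y_t)$ is therefore unaffected by conditioning on the future.

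The chunk-assignment events defining $\mathcal{T}_c$ depend only on $\mathsf{chunk}$-tagged PRF values, which are independent of the $\mathsf{msg}$ values. Conditioning on them therefore changes nothing either.

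\textbf{Conclusion.} Summing the per-token log-ratios over $t\in\mathcal{T}_c$ gives exactly \eqref{eq:llr}. Tokens with $p_t=1$ contribute $0$, consistent with the remark after \eqref{eq:lr_token}.
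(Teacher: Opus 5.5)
Your proposal is correct and follows the paper's proof. Conditional on the text and scored sets, the candidate's column has product density $\prod_{t\in\mathcal{T}_c} f_{p_t}$ under $H_1$ and $1$ under $H_0$. You obtain the $H_1$ factorisation over the full text by the same step-by-step recursion the paper runs in Appendix~\ref{app:scaling}, where it is phrased via $M_t=\max_v u_t(v)^{1/p_t(v)}$ and $B_t=M_t^{p_t}$. The one point to state more carefully is the chunk assignments: they do affect generation through $m_{i_t}$, so ``conditioning changes nothing'' is too loose. As in the paper, condition on the whole assignment table $\mathcal{A}$ before generation, note that the resulting product law does not depend on $\mathcal{A}$, and then average over the tables consistent with $\{\mathcal{T}_c\}$.
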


\noindent\textit{Proof.} The proof can be found in Appendix~\ref{app:llr}.

\begin{table*}[t]
\centering\footnotesize\setlength{\tabcolsep}{2.5pt}
\caption{Main benchmark on Qwen3.5-4B: C4 text completion, CNN/DailyMail summarization and WritingPrompts story generation with $L=8$ and token budgets $T\in\{150,200,250,300\}$. Perplexity is measured under Qwen3.5-9B; the distortion columns are defined in Section~\ref{sec:results}. The four \ours{} rows are the model-agnostic and the model-aware decoder, each with one 8-bit chunk and with 2-bit chunks. Colours mark the best, second-best and third-best method in each column.}
\label{tab:xmark_table1}
\label{tab:table1_qwen354b}
\resizebox{\ifdim\width>\textwidth\textwidth\else\width\fi}{!}{%
\begin{tabular}{lcccccccccccccc}
\toprule
\multirow{2}{*}{Method} & \multicolumn{2}{c}{$T=150$} & \multicolumn{2}{c}{$T=200$} & \multicolumn{2}{c}{$T=250$} & \multicolumn{2}{c}{$T=300$} & \multicolumn{2}{c}{Avg.} & \multicolumn{4}{c}{Distortion (avg.\ over $T$)} \\
\cmidrule(lr){2-3} \cmidrule(lr){4-5} \cmidrule(lr){6-7} \cmidrule(lr){8-9} \cmidrule(lr){10-11} \cmidrule(lr){12-15}
 & BA$\uparrow$ & PPL$\downarrow$ & BA$\uparrow$ & PPL$\downarrow$ & BA$\uparrow$ & PPL$\downarrow$ & BA$\uparrow$ & PPL$\downarrow$ & BA$\uparrow$ & PPL$\downarrow$ & Top-1$\uparrow$ & Top-5$\uparrow$ & R-1$\uparrow$ & R-L$\uparrow$ \\
\midrule
\rowcolor[HTML]{EDEDED} \multicolumn{15}{l}{\textbf{Text Completion}} \\
CycleShift & \redc 100.00 & 7.97 & \redc 100.00 & 7.58 & \redc 100.00 & 7.05 & \redc 100.00 & 7.44 & \redc 100.00 & 7.51 & 55.02 & 84.96 & 0.290 & 0.170 \\
DepthW & 95.75 & 7.40 & \redc 100.00 & 6.96 & \redc 100.00 & 6.52 & 97.25 & 6.68 & \yellowc 98.25 & 6.89 & 57.39 & 85.87 & 0.290 & 0.168 \\
StealthInk & 89.75 & 6.15 & 90.50 & 5.79 & 95.25 & 5.73 & 94.25 & 5.61 & 92.44 & 5.82 & 62.81 & 89.16 & 0.308 & \yellowc 0.179 \\
MPAC & \yellowc 97.25 & 7.70 & \orangec 98.00 & 7.52 & \orangec 98.75 & 7.03 & \yellowc 99.00 & 7.09 & \yellowc 98.25 & 7.33 & 55.60 & 85.34 & 0.293 & 0.170 \\
RSBH & 96.00 & 7.11 & \yellowc 97.25 & 6.88 & \yellowc 97.75 & 7.03 & 97.00 & 7.03 & 97.00 & 7.01 & 56.21 & 86.33 & 0.289 & 0.167 \\
XMark & \redc 100.00 & 6.93 & \redc 100.00 & 7.07 & \redc 100.00 & 6.78 & \redc 100.00 & 6.81 & \redc 100.00 & 6.90 & 56.66 & 86.86 & 0.301 & 0.173 \\
\cmidrule(lr){1-15}
\oursAgnK{8} & \redc 100.00 & \yellowc 5.65 & \redc 100.00 & \orangec 5.23 & \redc 100.00 & \redc 5.29 & \redc 100.00 & \orangec 5.11 & \redc 100.00 & \redc 5.32 & \orangec 65.48 & \redc 90.08 & \yellowc 0.311 & \orangec 0.184 \\
\oursAgnTwo{} & \orangec 99.75 & \redc 5.63 & \redc 100.00 & 5.55 & \redc 100.00 & \orangec 5.34 & \orangec 99.75 & \yellowc 5.20 & \orangec 99.88 & \yellowc 5.43 & \redc 65.59 & 89.89 & \redc 0.316 & \orangec 0.184 \\
\oursMAK{8} & \redc 100.00 & 5.69 & \redc 100.00 & \redc 5.20 & \redc 100.00 & \yellowc 5.35 & \redc 100.00 & \redc 5.09 & \redc 100.00 & \orangec 5.33 & 65.42 & \orangec 90.03 & \orangec 0.312 & \redc 0.185 \\
\oursMATwo{} & \redc 100.00 & \orangec 5.64 & \redc 100.00 & \yellowc 5.52 & \redc 100.00 & 5.36 & \redc 100.00 & 5.24 & \redc 100.00 & 5.44 & \yellowc 65.43 & \yellowc 89.94 & \redc 0.316 & \orangec 0.184 \\
\midrule
\rowcolor[HTML]{EDEDED} \multicolumn{15}{l}{\textbf{Text Summarization}} \\
CycleShift & 95.75 & 3.58 & 95.00 & 3.75 & 95.00 & 3.69 & \yellowc 99.50 & 3.71 & 96.31 & 3.68 & 74.56 & 96.87 & 0.541 & 0.348 \\
DepthW & 74.75 & 3.21 & 75.50 & 3.43 & 85.00 & 3.51 & 87.00 & 3.57 & 80.56 & 3.43 & 76.72 & 97.33 & 0.551 & 0.357 \\
StealthInk & 75.25 & \redc 2.95 & 80.75 & \yellowc 3.10 & 82.25 & \yellowc 3.11 & 88.00 & 3.14 & 81.56 & \yellowc 3.08 & 80.59 & 98.14 & 0.562 & 0.372 \\
MPAC & 86.25 & 3.60 & 89.75 & 3.50 & 92.00 & 3.59 & 94.75 & 3.79 & 90.69 & 3.62 & 74.92 & 96.74 & 0.546 & 0.358 \\
RSBH & 78.75 & 3.56 & 88.75 & 3.58 & 96.25 & 3.80 & 93.00 & 3.72 & 89.19 & 3.67 & 74.54 & 97.09 & 0.542 & 0.352 \\
XMark & 91.25 & 3.49 & 93.75 & 3.53 & 97.75 & 3.64 & 99.00 & 3.63 & 95.44 & 3.57 & 74.96 & 97.45 & 0.544 & 0.349 \\
\cmidrule(lr){1-15}
\oursAgnK{8} & \yellowc 97.75 & 3.05 & \yellowc 98.75 & \redc 3.05 & \yellowc 98.50 & \redc 3.06 & \redc 100.00 & \redc 3.08 & \yellowc 98.75 & \redc 3.06 & \redc 81.62 & \redc 98.25 & \redc 0.575 & \redc 0.393 \\
\oursAgnTwo{} & 87.25 & \orangec 2.99 & 94.50 & \yellowc 3.10 & 95.00 & 3.13 & 97.50 & \orangec 3.09 & 93.56 & \yellowc 3.08 & \orangec 81.57 & 98.17 & 0.563 & 0.381 \\
\oursMAK{8} & \redc 100.00 & 3.03 & \redc 100.00 & \orangec 3.07 & \redc 100.00 & \orangec 3.10 & \redc 100.00 & \yellowc 3.10 & \redc 100.00 & \orangec 3.07 & \yellowc 81.50 & \orangec 98.24 & \orangec 0.574 & \orangec 0.392 \\
\oursMATwo{} & \orangec 98.75 & \yellowc 3.00 & \orangec 99.25 & 3.12 & \orangec 99.75 & \yellowc 3.11 & \orangec 99.75 & 3.11 & \orangec 99.38 & \yellowc 3.08 & \redc 81.62 & \yellowc 98.20 & \yellowc 0.564 & \yellowc 0.382 \\
\midrule
\rowcolor[HTML]{EDEDED} \multicolumn{15}{l}{\textbf{Story Generation}} \\
CycleShift & 95.00 & 4.90 & \redc 100.00 & 4.85 & \redc 100.00 & 4.98 & \redc 100.00 & 5.17 & \orangec 98.75 & 4.97 & 66.89 & 92.44 & 0.349 & 0.221 \\
DepthW & 87.25 & 4.33 & 94.75 & 4.45 & 96.50 & \yellowc 4.79 & 96.75 & \yellowc 4.50 & 93.81 & 4.52 & 69.21 & 93.35 & 0.360 & \yellowc 0.232 \\
StealthInk & 82.50 & \yellowc 3.81 & 84.75 & \yellowc 3.93 & 89.50 & \orangec 4.02 & 92.50 & \orangec 3.87 & 87.31 & \yellowc 3.91 & \yellowc 73.62 & \yellowc 95.16 & \redc 0.367 & \redc 0.239 \\
MPAC & 92.50 & 4.90 & 94.50 & 5.07 & 94.50 & 5.24 & 96.25 & 4.98 & 94.44 & 5.05 & 66.69 & 92.65 & 0.350 & 0.217 \\
RSBH & 92.25 & 4.64 & 96.25 & 4.66 & 97.25 & 5.07 & \yellowc 97.25 & 4.98 & 95.75 & 4.84 & 67.08 & 93.41 & 0.349 & 0.219 \\
XMark & \orangec 96.75 & 4.72 & \yellowc 97.25 & 4.54 & \orangec 99.50 & 4.90 & \orangec 99.75 & 5.00 & \yellowc 98.31 & 4.79 & 67.55 & 93.56 & 0.350 & 0.224 \\
\cmidrule(lr){1-15}
\oursAgnK{8} & \redc 100.00 & \redc 3.63 & \redc 100.00 & \redc 3.71 & \redc 100.00 & \redc 3.81 & \redc 100.00 & \orangec 3.87 & \redc 100.00 & \redc 3.75 & \redc 75.89 & \orangec 95.54 & \yellowc 0.362 & 0.230 \\
\oursAgnTwo{} & \yellowc 95.75 & \orangec 3.71 & \orangec 98.50 & \orangec 3.75 & \yellowc 99.25 & \redc 3.81 & \orangec 99.75 & \redc 3.82 & \yellowc 98.31 & \orangec 3.77 & \orangec 75.81 & \redc 95.56 & \orangec 0.366 & \orangec 0.236 \\
\oursMAK{8} & \redc 100.00 & \redc 3.63 & \redc 100.00 & \redc 3.71 & \redc 100.00 & \redc 3.81 & \redc 100.00 & \orangec 3.87 & \redc 100.00 & \redc 3.75 & \redc 75.89 & \orangec 95.54 & \yellowc 0.362 & 0.230 \\
\oursMATwo{} & \redc 100.00 & \orangec 3.71 & \redc 100.00 & \orangec 3.75 & \redc 100.00 & \redc 3.81 & \redc 100.00 & \redc 3.82 & \redc 100.00 & \orangec 3.77 & \orangec 75.81 & \redc 95.56 & \orangec 0.366 & \orangec 0.236 \\
\bottomrule
\end{tabular}}
\end{table*}

Operationally, the model-aware decoder keeps the same candidate enumeration,
chunk assignments, and reconstructed uniforms as the model-agnostic decoder.
Instead of maximizing the text-only score $S_c(m')$ in
\eqref{eq:score}, it maximizes the log-likelihood score in
\eqref{eq:llr}:
\begin{equation}
\hat m_c^{\mathrm{MA}}
=\arg\max_{m'\in\{0,1\}^{k_c}}\ell_c(m').
\end{equation}
The decoded chunks are concatenated into the final message as before.

Theorem~\ref{thm:llr} assumes that every scored token was generated by
\ours{}. This makes its likelihood-ratio decoder vulnerable to editing attacks
that introduce foreign tokens. We therefore present the robust decoder in
Theorem~\ref{thm:robust}.

\begin{theorem}[Robust decoding under contamination]
\label{thm:robust}
Fix $\varepsilon\in(0,1)$. At each scored position $t$, suppose independently that the true candidate's uniform follows $\mathrm{Beta}(1/p_t,1)$ with probability $1-\varepsilon$ and $U(0,1)$ with probability $\varepsilon$, while every wrong candidate's uniform is $U(0,1)$. Score each candidate by
\begin{equation}
r_c(m')\;=\;\sum_{t\in \mathcal{T}_c}\log\Bigl[(1-\varepsilon)\,f_{p_t}\bigl(u^{(m')}_t(y_t)\bigr)+\varepsilon\Bigr],
\label{eq:robust}
\end{equation}
Under a wrong candidate this score has an exact upper-tail probability $Q_{c,\varepsilon}(s)=\Pr[r_c(m')\ge s]$. The robust decoder and its certificate are
\begin{equation}
\hat m_c=\arg\max_{m'}r_c(m'),
\qquad
\hat\delta_{c,\varepsilon}=\min\!\left\{1,(2^{k_c}-1)
Q_{c,\varepsilon}\bigl(r_c(\hat m_c)\bigr)\right\}.
\end{equation}
Returning $\hat m_c$ only when $\hat\delta_{c,\varepsilon}\le\delta$ bounds the probability of returning an incorrect chunk by $\delta$. Otherwise the decoder abstains.
\end{theorem}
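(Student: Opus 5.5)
The plan is to reproduce the structure of the proof of Theorem~\ref{thm:cert}, replacing the Gamma null with the exact null law of the robust score. The argument has two parts.

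\textbf{The null law $Q_{c,\varepsilon}$.} Condition on the emitted text and the scored sets $\{\mathcal{T}_c\}$, which fixes $n_c$ and the probabilities $p_t$ for $t\in\mathcal{T}_c$.
\begin{itemize}
\item By Theorem~\ref{thm:null} and Theorem~\ref{thm:gain}, for a wrong candidate $m'\neq m_c$ the uniforms $u^{(m')}_t(y_t)$, $t\in\mathcal{T}_c$, are i.i.d.\ $U(0,1)$. The reason is that their PRF arguments $(\mathsf{msg},c_t,m',y_t)$ are distinct, never queried by the encoder, and independent of the text.
\item Hence $r_c(m')=\sum_t g_t(U_t)$ with $g_t(u)=\log[(1-\varepsilon)p_t^{-1}u^{1/p_t-1}+\varepsilon]$ and $U_t$ i.i.d.\ uniform.
\item Each $g_t$ is monotone in $u$. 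Its law is explicit: $\Pr[g_t(U)\ge s]=1-\bigl(p_t(e^s-\varepsilon)/(1-\varepsilon)\bigr)^{p_t/(1-p_t)}$, suitably clipped to $[0,1]$. When $p_t=1$ the term is the constant $0$.
\item So $Q_{c,\varepsilon}(s)$ is the upper tail of a sum of independent variables with known laws. It depends only on $(p_t)_{t\in\mathcal{T}_c}$ and $\varepsilon$, not on $m'$. This is the meaning of ``exact'': a deterministic function computable by the decoder, for instance by numerical convolution.
\end{itemize}
The contamination model constrains only the true candidate. The null law used for the certificate is the same whatever $\varepsilon$ is, so the guarantee does not rely on the mixture assumption. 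That assumption only justifies the score as a likelihood ratio.

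\textbf{The certificate.} The bad event is that $D$ returns some $\hat m_c\neq m_c$. On this event $\hat m_c$ is some wrong $m'$ with $r_c(m')=\max_{m''}r_c(m'')$ and $(2^{k_c}-1)Q_{c,\varepsilon}(r_c(m'))\le\delta$.
\begin{itemize}
\item Therefore the event is contained in $\bigcup_{m'\neq m_c}\{Q_{c,\varepsilon}(r_c(m'))\le\delta/(2^{k_c}-1)\}$.
\item For each fixed $m'$, $Q_{c,\varepsilon}(r_c(m'))$ is the survival function evaluated at its own statistic, so it is a valid p-value: $\Pr[Q(R)\le\alpha]\le\alpha$ for any real random variable $R$ with survival function $Q(s)=\Pr[R\ge s]$. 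This holds even with atoms, for example from $p_t=1$ terms or the clipping.
\item A union bound over the $2^{k_c}-1$ wrong candidates gives a probability of at most $\delta$ conditionally on the text and the sets $\{\mathcal{T}_c\}$. Averaging over them gives the unconditional bound.
\end{itemize}

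\textbf{Main obstacle.} The delicate step is justifying the conditional i.i.d.\ uniformity of the wrong-candidate uniforms even though $p_t$ and the scored set are themselves random and depend on the key. I would handle this exactly as in the proof of Theorem~\ref{thm:null}. The PRF values at arguments carrying $m'\neq m_{i_t}$ never influence generation or the chunk assignments. They are therefore independent of everything that is conditioned on, and the idealised random function makes them i.i.d.\ uniform.

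A secondary point is the super-uniformity inequality $\Pr[Q(R)\le\alpha]\le\alpha$. I would prove it via the generalized inverse: let $s^*=\inf\{s:Q(s)\le\alpha\}$ and use right-continuity to show $\{Q(R)\le\alpha\}\subseteq\{R\ge s^*\}$, whose probability is at most $\alpha$.
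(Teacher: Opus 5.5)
Your argument is correct. Its skeleton is the paper's: an exact i.i.d.\ uniform null for each fixed wrong candidate, a union bound over the $2^{k_c}-1$ wrong candidates, then averaging over the text and scored sets. The difference lies in the certification step.

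\emph{The paper's route.} It first establishes structure for the null:
\begin{itemize}
\item the explicit summand distribution function and the bounded support;
\item a positive density of the convolution on the interior of the support when $J_c=\{t\in\mathcal{T}_c:p_t<1\}$ is nonempty;
\item hence continuity and strict decrease of $Q_{c,\varepsilon}$ there.
\end{itemize}
It then takes the unique $s_\delta$ with $Q_{c,\varepsilon}(s_\delta)=\delta/(2^{k_c}-1)$ and rewrites acceptance as $r_c(\hat m_c)\ge s_\delta$. The cases $\delta=1$ and $J_c=\varnothing$ are handled separately.

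\emph{Your route.} Your super-uniformity lemma replaces all of this. It needs no continuity or strict monotonicity, and it covers the degenerate case uniformly. It also immediately validates any conservative tail $\tilde Q\ge Q_{c,\varepsilon}$, such as the upward-rounded lattice tail of the implementation, since $\{\tilde Q(R)\le\alpha\}\subseteq\{Q_{c,\varepsilon}(R)\le\alpha\}$. What the paper's route adds is the explicit threshold. Its full statement also derives $r_c$ as a log-likelihood ratio by marginalising the contamination labels, which you only mention in passing.

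Two details need fixing.

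\emph{The super-uniformity sketch.} $Q(s)=\Pr[R\ge s]$ is left-continuous, not right-continuous. More importantly, $\Pr[R\ge s^*]\le\alpha$ fails when $R$ has an atom at $s^*$. For $J_c=\varnothing$ one has $r_c\equiv0$, $s^*=0$ and $\Pr[R\ge0]=1$. Split cases instead:
\begin{itemize}
\item if $Q(s^*)\le\alpha$, use $\{R\ge s^*\}$, whose probability is $Q(s^*)\le\alpha$;
\item otherwise $\{Q(R)\le\alpha\}\subseteq\{R>s^*\}$, whose probability is $\lim_{s\downarrow s^*}Q(s)\le\alpha$.
\end{itemize}
As soon as one scored position has $p_t<1$, the null sum has a density. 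So this degenerate case is the only atom, and the clipping you mention creates none.

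\emph{The dropped clip.} Your inclusion into $\bigcup_{m'\neq m_c}\{(2^{k_c}-1)Q_{c,\varepsilon}(r_c(m'))\le\delta\}$ ignores the $\min\{1,\cdot\}$ in the certificate. The inclusion holds for $\delta<1$, and the case $\delta=1$ is trivial.
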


\noindent\textit{Proof.} The proof can be found in Appendix~\ref{app:robust}.

Appendix~\ref{app:macert} derives and computes the null tail. Appendix~\ref{app:pseudocode} provides pseudocode for both decoders.

\section{Experiments}
\label{sec:results}
\setlength{\floatsep}{8pt plus 2pt minus 2pt}
\setlength{\textfloatsep}{10pt plus 2pt minus 2pt}

\paragraph{Baselines}
We compare against the multi-bit watermarking baselines \textbf{CycleShift} \citep{fernandez2023three}, \textbf{DepthW} \citep{li-etal-2024-identifying}, \textbf{StealthInk} \citep{jiang-etal-2025-stealthink}, \textbf{MPAC} \citep{yoo-etal-2024-advancing}, \textbf{RSBH} \citep{qu-etal-2025-provably}, and \textbf{XMark} \citep{xu-etal-2026-xmark}. Appendix~\ref{app:mirrormark} adds \textbf{BiMark} \citep{feng2025bimark} and \textbf{MirrorMark} \citep{jiang2026mirrormark} and compares the distortion-free schemes on Qwen3.5-4B.

\paragraph{Setup}
We evaluate on Qwen3.5-4B and Llama-3.1-8B. Full experimental settings and hyperparameters are provided in Appendix~\ref{app:experimental_setup}, and a runtime comparison in Appendix~\ref{app:runtime}.

\paragraph{Metrics}
\textbf{Bit accuracy} (BA, \%) is the percentage of correctly decoded message bits, averaged over users (chance is $50\%$). \textbf{Perplexity} (PPL) is the exponentiated mean negative log-likelihood of the generated tokens under a separate oracle model of the same family; for a distortion-free scheme the reference is the unwatermarked row. \textbf{BLEU} \citep{papineni-etal-2002-bleu} scores translations against the WMT14 references. The distortion metrics compare the watermarked text with unwatermarked sampling: \textbf{Top-1/Top-5} is the percentage of emitted tokens among the model's $1$ or $5$ most likely next tokens, while \textbf{R-1/R-L} (ROUGE-1/ROUGE-L F-score, \citealp{lin-2004-rouge}) and \textbf{BERTScore} (BSc., \citealp{zhang2020bertscore}) measure the similarity of each watermarked text to the unwatermarked sample from the same prompt and seed. Appendix~\ref{app:metrics} gives full definitions.

\begin{figure}[t]
  \centering
  \includegraphics[width=0.80\textwidth,trim=0 6.5pt 0 6.5pt,clip]{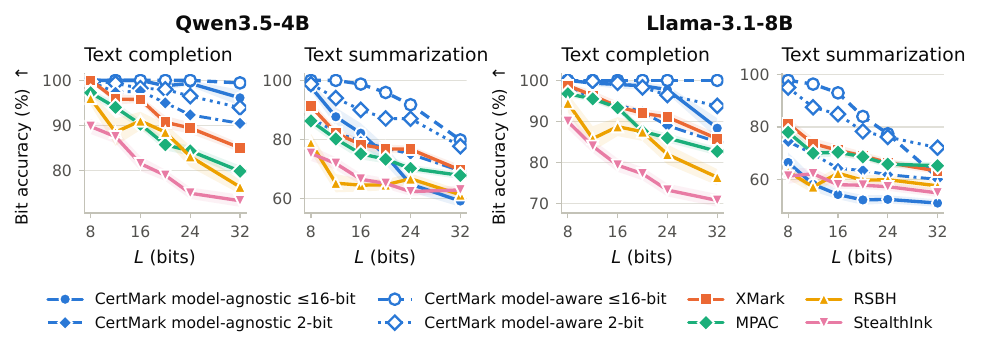}
  \caption{Bit accuracy versus message length $L$ at $T=150$ on text completion and summarization, one block per model. The default chunk width is the largest divisor of $L$ that is at most $16$ bits, hence $8,12,16,10,12,16$ for $L=8,\dots,32$. $\pm1$ standard error shaded.}
  \label{fig:bits_xmodel}
\end{figure}

\paragraph{Main benchmark and message length.} Across all three tasks, \ours{}
preserves unwatermarked text quality while reliably recovering payloads.
Figure~\ref{fig:bits_xmodel} shows that this remains true as payloads grow: the
model-aware decoder is the most reliable, while smaller chunks mainly help on
entropy-limited summarization. Appendix~\ref{app:mirrormark} compares other
distortion-free schemes, and Appendix~\ref{app:_llama318b} confirms the same
trends on Llama-3.1-8B.

\begin{table}[t]
\centering\scriptsize\setlength{\tabcolsep}{2.5pt}\renewcommand{\arraystretch}{0.85}
\begin{minipage}[t]{0.49\textwidth}
\centering
\caption{WMT14 German-to-English translation.}
\label{tab:mt}
\resizebox{\linewidth}{!}{%
\begin{tabular}{lrrrrrr}
\toprule
& \multicolumn{3}{c}{Qwen3.5-4B} & \multicolumn{3}{c}{Llama-3.1-8B} \\
\cmidrule(lr){2-4}\cmidrule(lr){5-7}
Method & BA$\uparrow$ & PPL$\downarrow$ & BLEU$\uparrow$ & BA$\uparrow$ & PPL$\downarrow$ & BLEU$\uparrow$ \\
\midrule
RSBH & 59.38 & 2.51 & 20.64 & 57.62 & 3.49 & 24.84 \\
MPAC & 70.00 & 2.58 & 20.98 & 64.62 & 3.50 & \yellowc 26.80 \\
XMark & \yellowc 74.38 & \yellowc 2.49 & 20.32 & \yellowc 71.25 & 3.46 & 25.99 \\
StealthInk & 64.88 & \orangec 2.22 & \orangec 23.00 & 55.75 & \orangec 3.19 & \redc 27.08 \\
\cmidrule(lr){1-7}
\oursAgnK{16} & 61.50 & \redc 2.18 & \redc 23.15 & 48.88 & \redc 3.08 & 26.34 \\
\oursAgnTwo{} & 69.88 & \orangec 2.22 & \yellowc 22.41 & 64.00 & \yellowc 3.24 & \orangec 26.82 \\
\oursMAK{16} & \redc 85.75 & \redc 2.18 & \redc 23.15 & \redc 88.38 & \redc 3.08 & 26.34 \\
\oursMATwo{} & \orangec 80.50 & \orangec 2.22 & \yellowc 22.41 & \orangec 84.75 & \yellowc 3.24 & \orangec 26.82 \\
\midrule
unwatermarked & -- & 2.21 & 22.87 & -- & 3.12 & 26.70 \\
\bottomrule
\end{tabular}}
\end{minipage}\hfill
\begin{minipage}[t]{0.49\textwidth}
\centering
\caption{Long C4 completion ($L=64$).}
\label{tab:long}
\resizebox{\linewidth}{!}{%
\begin{tabular}{lrrrrrr}
\toprule
& \multicolumn{3}{c}{Qwen3.5-4B} & \multicolumn{3}{c}{Llama-3.1-8B} \\
\cmidrule(lr){2-4}\cmidrule(lr){5-7}
Method & BA$\uparrow$ & PPL$\downarrow$ & BSc.$\uparrow$ & BA$\uparrow$ & PPL$\downarrow$ & BSc.$\uparrow$ \\
\midrule
RSBH & 85.03 & 5.38 & .8208 & 89.25 & 4.63 & .8341 \\
MPAC & 87.28 & 5.46 & .8170 & 88.38 & 4.94 & .8279 \\
XMark & 93.38 & 5.48 & .8199 & 96.00 & 4.74 & .8351 \\
StealthInk & 79.81 & 4.52 & .8222 & 80.69 & \yellowc 3.75 & \redc .8392 \\
\cmidrule(lr){1-7}
\oursAgnK{16} & \redc 99.75 & \orangec 4.29 & \yellowc .8262 & \orangec 99.44 & \orangec 3.66 & \yellowc .8365 \\
\oursAgnTwo{} & \yellowc 96.53 & \yellowc 4.30 & \orangec .8266 & 94.62 & \redc 3.63 & \orangec .8371 \\
\oursMAK{16} & \redc 99.75 & 4.31 & .8246 & \redc 100.00 & \orangec 3.66 & \yellowc .8365 \\
\oursMATwo{} & \orangec 98.84 & \redc 4.28 & \redc .8292 & \yellowc 99.25 & \redc 3.63 & \orangec .8371 \\
\midrule
unwatermarked & -- & 4.22 & -- & -- & 3.57 & -- \\
\bottomrule
\end{tabular}}
\end{minipage}
\end{table}

\paragraph{Machine translation.} Table~\ref{tab:mt} evaluates WMT14 German-to-English translation~\citep{bojar-etal-2014-findings}. \oursMAK{16} gives the strongest payload recovery on both models while preserving translation quality. 2-bit chunks help the model-agnostic decoder in this low-entropy task but not the model-aware one.

\paragraph{Long messages and long texts.} Table~\ref{tab:long} shows that \ours{} retains near-complete recovery on long C4 completions ($L=64$, four 16-bit chunks) for both models without degrading quality. Model-aware decoding removes the remaining errors. Smaller chunks add little with ample evidence.

\begin{figure}[t]
  \centering
  \includegraphics[width=0.80\textwidth,trim=0 6.5pt 0 6.5pt,clip]{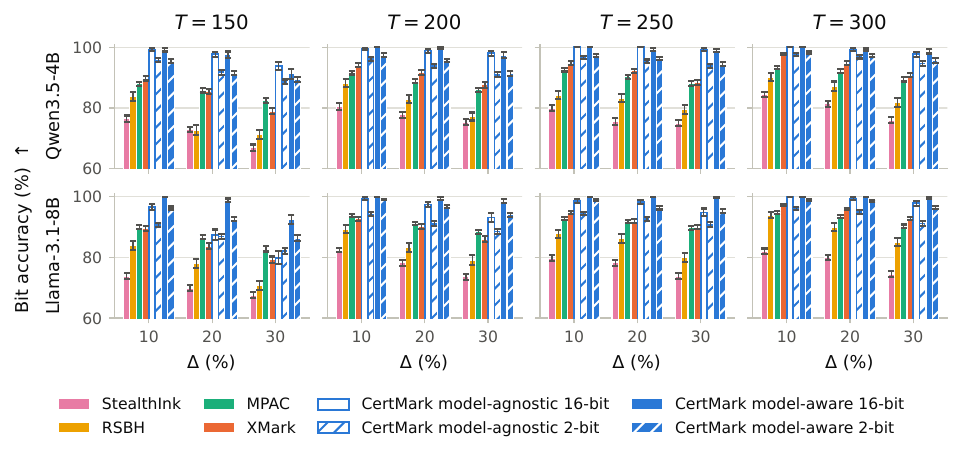}
  \caption{Copy-paste robustness on text completion with $L=16$: bit accuracy after $\Delta\%$ of each text is replaced by unwatermarked spans, one row per model and one column per token budget $T$.}
  \label{fig:attacks}
\end{figure}

\begin{table}[t]
\centering\scriptsize\setlength{\tabcolsep}{3pt}\renewcommand{\arraystretch}{0.78}
\caption{Empirical check of the certificate (Theorem~\ref{thm:cert}, and its model-aware form in Appendix~\ref{app:macert}). Each column fixes a level $\delta$: the decoder returns a chunk's message only if $\hat\delta_c\le\delta$ and abstains ($\bot$) otherwise. Both decoders read the same saved generations of the message-length sweep ($T=150$; $L\in\{8,\dots,32\}$ on completion and summarization, $L=8$ on story generation; $50$ users each), $950$ watermarked chunks and $950$ paired unwatermarked chunks per model. Every entry is a fraction of chunks. \emph{Cert.\ error}: watermarked chunks on which a wrong message is returned; the theorem bounds its probability by $\delta$, and with $950$ chunks a single error is $.0011$. \emph{Null cert.\ rate}: unwatermarked chunks that are nevertheless certified, which should stay near $\delta$. \emph{Abst.}: watermarked chunks of each task on which the decoder abstains; lower means more messages recovered with a certificate. The model-aware decoder uses the robust score with $\varepsilon=0.1$. For every row, lower is better.}
\label{tab:cert}
\begin{tabular}{lcccccccccccccc}
\toprule
& \multicolumn{7}{c}{\oursAgn{}} & \multicolumn{7}{c}{\oursMA{}} \\
\cmidrule(lr){2-8}\cmidrule(lr){9-15}
$\delta$ & $10^{-4}$ & $10^{-3}$ & $0.01$ & $0.05$ & $0.1$ & $0.2$ & $0.5$ & $10^{-4}$ & $10^{-3}$ & $0.01$ & $0.05$ & $0.1$ & $0.2$ & $0.5$ \\
\midrule
\multicolumn{15}{l}{\textit{Qwen3.5-4B}} \\
cert.\ error ($\le\delta$) & .0000 & .0000 & .0063 & .0221 & .0305 & .0537 & .1168 & .0000 & .0011 & .0074 & .0147 & .0168 & .0274 & .0474 \\
null cert.\ rate ($\approx\delta$) & .0021 & .0063 & .0137 & .0621 & .1200 & .2147 & .4074 & .0000 & .0000 & .0074 & .0442 & .1000 & .1895 & .3863 \\
abst., completion & .178 & .129 & .082 & .056 & .044 & .027 & .013 & .040 & .022 & .009 & .004 & .004 & .002 & .000 \\
abst., summarization & .902 & .840 & .727 & .627 & .582 & .498 & .324 & .604 & .509 & .387 & .251 & .207 & .158 & .069 \\
abst., story & .160 & .060 & .020 & .000 & .000 & .000 & .000 & .000 & .000 & .000 & .000 & .000 & .000 & .000 \\
\midrule
\multicolumn{15}{l}{\textit{Llama-3.1-8B}} \\
cert.\ error ($\le\delta$) & .0000 & .0000 & .0032 & .0211 & .0463 & .1053 & .2126 & .0000 & .0011 & .0042 & .0116 & .0189 & .0411 & .0821 \\
null cert.\ rate ($\approx\delta$) & .0000 & .0011 & .0116 & .0463 & .0968 & .1789 & .3989 & .0000 & .0011 & .0116 & .0463 & .0979 & .1842 & .3958 \\
abst., completion & .384 & .296 & .196 & .124 & .100 & .080 & .047 & .009 & .004 & .000 & .000 & .000 & .000 & .000 \\
abst., summarization & 1.000 & 1.000 & .993 & .951 & .889 & .771 & .569 & .856 & .776 & .662 & .536 & .482 & .371 & .238 \\
abst., story & .980 & .940 & .800 & .580 & .500 & .400 & .160 & .060 & .000 & .000 & .000 & .000 & .000 & .000 \\
\bottomrule
\end{tabular}
\end{table}

\paragraph{Robustness to text-editing attacks.} Figure~\ref{fig:attacks} evaluates copy-paste attacks across both models and token budgets. \ours{} remains robust and is the strongest method overall. Appendix~\ref{app:attacks} extends the comparison to token substitution and deletion.

\paragraph{Empirical check of the certificate.}
\label{par:certcheck}
Table~\ref{tab:cert} checks Theorem~\ref{thm:cert}, and its model-aware form in Appendix~\ref{app:macert}, on saved watermarked chunks and paired unwatermarked chunks; its caption defines each row. The certified error stays within $\delta$ up to sampling noise, and the null certification rate tracks the selected level. Relaxing the level reduces abstention but permits more erroneous and null certifications, exhibiting the expected confidence--coverage trade-off. The model-aware decoder abstains less across tasks, particularly when the available evidence is weak, showing that access to the model probabilities improves coverage while retaining the certificate.

\section{Conclusion}

\ours{} is, to our knowledge, the first distortion-free multi-bit watermark with certified recovery. It improves bit accuracy without increasing perplexity and remains fast to encode and decode.

\clearpage
\subsection*{AI use statement}

Large language models were used for editorial support, including polishing the
manuscript's writing and presentation, and for drafting implementation code.
The authors made all substantive research decisions and contributed all core
technical ideas.

\subsection*{Ethics statement}

This research adheres to the ICLR Code of Ethics. All experiments use publicly
available benchmark datasets and
do not involve new data collection or human-subject experiments; no personally
identifiable information is intentionally used. Our method is intended to
support provenance and accountability by embedding recoverable metadata in
language-model outputs. Multi-bit watermarking is nevertheless dual-use: it
could facilitate covert user tracking, and a compromised key or forged payload
could lead to false attribution. The certified decoder bounds the probability
of an incorrect recovery only under the assumptions stated in the paper and
does not eliminate these deployment risks. Responsible use therefore requires
secure key management, appropriate notice and consent, access controls, clear
communication of abstentions and uncertainty, and safeguards against treating
watermark evidence alone as conclusive in high-stakes decisions. We report our
assumptions, limitations, and robustness results transparently and have taken
care to follow best practices for research integrity.

\subsection*{Reproducibility statement}

Our code is available at \url{https://github.com/Batorskq/CertMark}. It contains the
\ours{} encoder, both decoders and their certificates, integrated into the
XMark evaluation harness, together with one script per reported \ours{}
experiment and the fixed key, random seeds and user messages.
Appendix~\ref{app:experimental_setup} provides details of the models, datasets,
experimental protocol, evaluation metrics, and hyperparameters, while
Appendix~\ref{app:runtime} describes the runtime evaluation. Additional model,
baseline, and robustness results appear in Appendices~\ref{app:_llama318b}--\ref{app:attacks}.
Appendix~\ref{app:pseudocode} gives pseudocode for the encoder, both decoders,
and the certification procedures, and Appendix~\ref{app:prf} specifies the
pseudorandom-function instantiation. Full derivations of the theoretical
results are provided in Appendix~\ref{app:macert} and
Appendices~\ref{app:gumbel}--\ref{app:additional}. To the best of our knowledge,
the manuscript and accompanying artifacts contain the details necessary to
reproduce our results.

\bibliographystyle{iclr2027_conference}
\bibliography{custom}

\appendix

\section{The certificate of the model-aware decoder}
\label{app:macert}

The certificate of Theorem~\ref{thm:cert} rests on one fact: under a wrong candidate the scored uniforms are independent $U(0,1)$ variables whatever the text, so the text-only score has the null law $\Gamma(n_c,1)$ and the certificate is the number of wrong candidates times the Gamma tail at the observed score. The model-aware decoder changes the per-token score, not this fact. Its certificate therefore has the same form, with the Gamma tail replaced by the null tail of the new score, and it needs the same two extra inputs as the decoder itself, the prompt and the deployed sampler. This appendix spells the construction out in equations; Appendices~\ref{app:llr} and~\ref{app:robust} hold the proofs.

Throughout, fix a chunk $c$ and condition on the text $y$, the scored-position sets $\mathbf{T}$, the prompt $x$ and the model. This fixes $p_t=p(y_t\mid x,y_{<t})\in(0,1]$ for $t\in T_c$ and, with it, the per-token likelihood ratio $f_p(u)=p^{-1}u^{1/p-1}$ of \eqref{eq:lr_token}.

\paragraph{Score.} The model-aware score \eqref{eq:llr} splits into a candidate-independent constant and a weighted sum of log-uniforms,
\begin{equation}
\begin{gathered}
\ell_c(m')=A_c+R_{c,w}(m'),\qquad w_t=\frac{1}{p_t}-1\ge0,\\
A_c=\sum_{t\in T_c}\log\frac{1}{p_t},\qquad
R_{c,w}(m')=\sum_{t\in T_c}w_t\log u^{(m')}_t(y_t).
\end{gathered}
\label{eq:macert_split}
\end{equation}
Since $A_c$ is the same for every candidate, $\hat m_c=\arg\max_{m'}\ell_c(m')=\arg\max_{m'}R_{c,w}(m')$, and the certificate is built from $R_{c,w}$ alone. A token the sampler was certain of, $p_t=1$, has $w_t=0$ and drops out; write $T_c^{+}=\{t\in T_c:w_t>0\}$.

\paragraph{Null law.} For a wrong candidate $m'\neq m_c$, the uniforms $u^{(m')}_t(y_t)$, $t\in T_c$, are read at PRF inputs the encoder never queried, so they are independent $U(0,1)$ variables conditional on $(y,\mathbf{T})$, exactly as in Theorem~\ref{thm:null}; conditioning on $x$ and the model adds nothing, because $p_t$ is a function of the text and the prompt. With $E_t=-\log u^{(m')}_t(y_t)\sim\mathrm{Exp}(1)$ independent,
\begin{equation}
\begin{gathered}
Z_{c,w}:=-R_{c,w}(m')=\sum_{t\in T_c^{+}}w_tE_t,\qquad
F_{c,w}(z)=\Pr[Z_{c,w}\le z],\\
\mathbb{E}\bigl[e^{-sZ_{c,w}}\bigr]=\prod_{t\in T_c^{+}}\frac{1}{1+sw_t}\qquad(s\ge0),
\end{gathered}
\label{eq:macert_null}
\end{equation}
a weighted sum of independent unit exponentials whose law depends on the text only through the weights. Its mean is $\sum_t w_t=\sum_t(1-p_t)/p_t$. For the correct candidate, $u_t(y_t)\sim\mathrm{Beta}(1/p_t,1)$ by Theorem~\ref{thm:gain}, so $-\log u_t(y_t)\sim\mathrm{Exp}(1/p_t)$ and $-R_{c,w}(m_c)$ has mean $\sum_t w_tp_t=\sum_t(1-p_t)$, smaller by the factor $p_t$ at every token: the less likely the emitted token, the wider the gap the score opens between the correct candidate and every wrong one.

\paragraph{Certificate and decision.} A large score is a small $Z_{c,w}$, so the upper tail of the score is the lower tail of the null law, $\Pr[R_{c,w}(m')\ge r]=F_{c,w}(-r)$. The certificate and the certified decoder are
\begin{equation}
\begin{gathered}
\hat\delta_{c,w}=\min\Bigl\{1,\;(2^{k_c}-1)\,F_{c,w}\bigl(-R_{c,w}(\hat m_c)\bigr)\Bigr\}
=\min\Bigl\{1,\;(2^{k_c}-1)\,F_{c,w}\bigl(A_c-\ell_c(\hat m_c)\bigr)\Bigr\},\\
D_\delta(y)=\begin{cases}\hat m_c,&\hat\delta_{c,w}\le\delta,\\[2pt] \bot,&\hat\delta_{c,w}>\delta,\end{cases}
\end{gathered}
\label{eq:macert_cert}
\end{equation}
with $\hat\delta_{c,w}=1$ when $T_c^{+}=\varnothing$. This is \eqref{eq:certdef}--\eqref{eq:rule} with the Gamma tail replaced by $F_{c,w}$, and the guarantee is the same:
\begin{equation}
\Pr_\kappa\bigl[D_\delta(Y_{1:n})\notin\{m_c,\bot\}\bigr]\le\delta .
\label{eq:macert_guarantee}
\end{equation}
The argument is that of Theorem~\ref{thm:cert}. For $\delta<1$ let $z_\delta$ solve $F_{c,w}(z_\delta)=\delta/(2^{k_c}-1)$; then $\hat\delta_{c,w}\le\delta$ exactly when $-R_{c,w}(\hat m_c)\le z_\delta$, a wrong candidate can be accepted only if its own $-R_{c,w}$ is at most $z_\delta$, and the union bound over the $2^{k_c}-1$ wrong candidates gives $(2^{k_c}-1)F_{c,w}(z_\delta)=\delta$. Decoding every chunk at level $\delta/C$ bounds the probability of any wrong chunk in the message by $\delta$.

\paragraph{Computing the tail.} $F_{c,w}$ is the distribution function of a sum of up to a few hundred independent exponentials with distinct rates. Its closed form, an alternating sum of exponentials, is numerically unusable at this size, and the convolution recursion of Appendix~\ref{app:llr} is exact but slow. The implementation evaluates it through the cumulant generating function of the null score,
\begin{equation}
\begin{gathered}
K(\lambda)=\log\mathbb{E}\bigl[e^{\lambda R_{c,w}(m')}\bigr]=-\sum_{t\in T_c^{+}}\log(1+\lambda w_t),\\
K'(\lambda)=-\sum_{t\in T_c^{+}}\frac{w_t}{1+\lambda w_t},\qquad
K''(\lambda)=\sum_{t\in T_c^{+}}\frac{w_t^{2}}{(1+\lambda w_t)^{2}},
\end{gathered}
\label{eq:macert_cgf}
\end{equation}
by the Lugannani--Rice saddlepoint approximation. For the observed score $r=R_{c,w}(\hat m_c)$ above the null mean, let $\hat\lambda>0$ solve $K'(\hat\lambda)=r$; then
\begin{equation}
\begin{gathered}
F_{c,w}(-r)=\Pr[R_{c,w}(m')\ge r]\;\approx\;1-\Phi(\hat w)-\phi(\hat w)\Bigl(\frac{1}{\hat w}-\frac{1}{\hat u}\Bigr),\\
\hat w=\sqrt{2\bigl(\hat\lambda r-K(\hat\lambda)\bigr)},\qquad \hat u=\hat\lambda\sqrt{K''(\hat\lambda)},
\end{gathered}
\label{eq:macert_saddle}
\end{equation}
with $\Phi$ and $\phi$ the standard normal distribution function and density. The approximation is of an exactly known law, not of a modelling assumption, and it is most accurate in the far tail, where certificates are decided; the exact recursion remains available when an exact value is required.

\paragraph{The robust decoder.} With $\varepsilon>0$ the decoder scores $r_c(m')=\sum_{t\in T_c}s_{p_t,\varepsilon}\bigl(u^{(m')}_t(y_t)\bigr)$ with $s_{p,\varepsilon}(u)=\log[(1-\varepsilon)f_p(u)+\varepsilon]$, the score \eqref{eq:robust}. The wrong-candidate uniforms are $U(0,1)$ whatever was done to the text, so the null law, its upper tail and the certificate are
\begin{equation}
\begin{gathered}
Z_{c,\varepsilon}=\sum_{t\in T_c}s_{p_t,\varepsilon}(U_t),\qquad U_t\ \text{i.i.d.}\ U(0,1),\qquad
Q_{c,\varepsilon}(s)=\Pr[Z_{c,\varepsilon}\ge s],\\
\hat\delta_{c,\varepsilon}=\min\Bigl\{1,\;(2^{k_c}-1)\,Q_{c,\varepsilon}\bigl(r_c(\hat m_c)\bigr)\Bigr\},
\end{gathered}
\label{eq:macert_robust}
\end{equation}
and $D_\delta$ returns $\hat m_c$ when $\hat\delta_{c,\varepsilon}\le\delta$, with the guarantee \eqref{eq:macert_guarantee} by the same union bound. Each summand is bounded, $\log\varepsilon\le s_{p,\varepsilon}(u)\le\log\bigl(\varepsilon+(1-\varepsilon)/p\bigr)$, and for $p<1$ has the explicit distribution function
\begin{equation}
\Pr\bigl[s_{p,\varepsilon}(U)\le x\bigr]=\Bigl(\frac{p\,(e^{x}-\varepsilon)}{1-\varepsilon}\Bigr)^{p/(1-p)},
\qquad \log\varepsilon<x<\log\Bigl(\varepsilon+\frac{1-\varepsilon}{p}\Bigr),
\label{eq:macert_summand}
\end{equation}
so $Q_{c,\varepsilon}$ follows from the convolution recursion of Appendix~\ref{app:robust}. The implementation carries that recursion out on a lattice of step $\Delta=0.002$: the mass that \eqref{eq:macert_summand} assigns to each cell $((k-1)\Delta,k\Delta]$ is placed at the cell's upper end, the lattice masses of the summands are convolved by the fast Fourier transform, and the tail is summed from the first lattice point at or above the observed score. Rounding every summand upwards makes the computed tail an upper bound on $Q_{c,\varepsilon}$, so the certificate stays valid and is at most slightly conservative; it agrees with a $2\times10^{6}$-draw Monte Carlo estimate to within a few percent and, unlike Monte Carlo, has no resolution floor, which matters for $2^{16}-1$ wrong candidates. The contamination model of Theorem~\ref{thm:robust} shapes the score, not the guarantee: the null law in \eqref{eq:macert_robust} holds for every wrong candidate on every text, so the certificate of the robust decoder remains valid on edited text, and the floor $\log\varepsilon$ on every summand is what keeps a single pasted low-probability token from deciding the chunk. All model-aware results in this paper use the robust score with $\varepsilon=0.1$.

\section{Experimental setup and hyperparameters}
\label{app:experimental_setup}

We evaluate Qwen3.5-4B and Llama-3.1-8B, using their instruction variants when required by the task. Following the XMark protocol, we test text completion on C4 \citep{raffel2020exploring}, text summarization on CNN/DailyMail \citep{hermann2015teaching}, and story generation on WritingPrompts \citep{fan-etal-2018-hierarchical}, with 50 users, 8-bit messages, and token budgets $T\in\{150,200,250,300\}$. We use temperature $0.7$, top-$p$ sampling with $p=0.9$, $\delta=2$ for biased baselines, and a separate model from the same family to measure perplexity. Appendix~\ref{app:metrics} defines every reported metric. Every experiment reports the same four \ours{} variants, always in this order: \oursAgn{}, the decoder of Section~\ref{sec:method}, and \oursMA{}, the decoder of Section~\ref{sec:modelaware}, each with the default chunking, whose label gives the chunk width in bits, and with 2-bit chunks. The default chunk width is the largest divisor of $L$ that is at most $16$, so one chunk carries the whole message for $L\le16$. The two decoders read the generations of the same encoder.

\section{Evaluation metrics}
\label{app:metrics}

Every number in the paper is computed by the harness of \citet{xu-etal-2026-xmark}, or offline from the generations it saves, with one recipe for all methods. A run has $50$ users. User $i$ holds an $L$-bit message $m_i$ and two prompts, each answered with exactly $T/2$ tokens, so $T$ is the number of watermarked tokens from which the decoder recovers one message. Each watermarked text is paired with an unwatermarked sample of the same model from the same prompt under the same generation seed and decoding settings. Unless stated otherwise, a reported value is the mean over users, and shaded bands and error bars are $\pm1$ standard error over users.

\paragraph{Bit accuracy (BA).} The decoder reads the two texts of user $i$ and returns an $L$-bit message $\hat m_i$, for \ours{} the concatenation of the chunk decisions of Section~\ref{sec:method}. Bit accuracy is the percentage of correctly recovered bits,
\begin{equation*}
\mathrm{BA}=\frac{100}{50\,L}\sum_{i=1}^{50}\sum_{j=1}^{L}\mathbf{1}\bigl[\hat m_{i,j}=m_{i,j}\bigr],
\end{equation*}
so chance is $50\%$ and $100\%$ means that every user's message was recovered exactly. Everywhere except in Table~\ref{tab:cert} the decoders always return a message, that is level $\delta=1$ without abstention. The ``Avg.'' columns and the distortion columns of Table~\ref{tab:xmark_table1} average over the four token budgets. Under an attack (Figures~\ref{fig:attacks} and~\ref{fig:attacks_edit}) BA is computed on the edited texts and additionally averaged over three random draws of the edit.

\paragraph{Perplexity (PPL).} Fluency is scored by an oracle model of the same family that did not generate the text: Qwen3.5-9B for Qwen3.5-4B, and for Llama-3.1-8B the other 8B sibling, so Llama-3.1-8B-Instruct scores the completions of the base model and the base model scores the two chat tasks. The oracle is run on prompt and response with the prompt tokens masked, and the perplexity of a user is $\exp$ of the mean negative log-likelihood of the generated tokens, pooled over the user's two texts; PPL is the mean over users. Lower is better, but the reference for a distortion-free scheme is the ``unwatermarked'' row, the same quantity on the paired unwatermarked samples, rather than the smallest attainable value. The two decoders of \ours{} read the same generations, so they share PPL and every distortion measure below.

\paragraph{Top-$k$ agreement.} At every step of watermarked generation the harness records the $k$ most likely next tokens under the model's own distribution for the current watermarked prefix, before the watermark acts on the logits. Top-$k$ is the percentage of emitted tokens that lie in this set, for $k\in\{1,5,10\}$, averaged over texts and users. It measures how far the watermark pulls the sampler away from the model's preferences. Since the unwatermarked sampler at temperature $0.7$ emits its top token only part of the time, a distortion-free scheme matches the rate of unwatermarked sampling rather than $100\%$.

\paragraph{BERTScore (BSc.).} BERTScore F1 \citep{zhang2020bertscore} between each watermarked text and its paired unwatermarked sample, with the package's default English backbone (RoBERTa-large, layer 17), averaged over the texts of a user and then over users. It measures semantic agreement with the untouched sampler's output, not quality against a human reference.

\paragraph{ROUGE.} ROUGE-1, ROUGE-2, ROUGE-L and ROUGE-Lsum F-measures \citep{lin-2004-rouge} with Porter stemming, computed with the \texttt{rouge-score} package, the watermarked text as prediction and the paired unwatermarked sample as reference. The harness re-downloads the metric on every call, which fails on our compute nodes and logs NaN, so ROUGE is recomputed offline from the saved generations with the harness's own recipe: the package's bootstrap aggregate over the texts of a user, then the mean over users. Like Top-$k$ and BERTScore, ROUGE here measures agreement with unwatermarked text, so a distortion-free scheme scores like two independent samples of the model rather than $1$.

\paragraph{BLEU.} Only the translation task has references. BLEU \citep{papineni-etal-2002-bleu} is corpus BLEU computed with sacrebleu \citep{post-2018-call} at its defaults (\texttt{13a} tokenization, up to four-grams, exponential smoothing) over all watermarked translations of all users against the English references of WMT14 newstest2014, each translation matched to its reference through the German source in its prompt. The ``unwatermarked'' row is the same computation on the paired unwatermarked translations. BLEU is therefore the one text metric that scores quality against a human reference.

\paragraph{Certificate metrics.} For Table~\ref{tab:cert} the certified decoder at level $\delta$ returns its message only if $\hat\delta_c\le\delta$ and $\bot$ otherwise. Certified error is the fraction of watermarked chunks on which it returns a wrong message; null certification rate is the fraction of paired unwatermarked chunks on which it returns any message; abstention is the fraction of watermarked chunks of a task on which it returns $\bot$.

\section{Runtime comparison}
\label{app:runtime}

\begin{table}[htbp]
\centering\scriptsize\setlength{\tabcolsep}{2.5pt}\renewcommand{\arraystretch}{0.82}
\caption{Runtime on one H100. Gen.: ms/token; Dec.: s/user.}
\label{tab:runtime}
\begin{tabular}{lrrrr}
\toprule
& \multicolumn{2}{c}{Qwen3.5-4B} & \multicolumn{2}{c}{Llama-3.1-8B} \\
\cmidrule(lr){2-3} \cmidrule(lr){4-5}
Method & Gen. & Dec. & Gen. & Dec. \\
\midrule
CycleShift & 19.6 & .15 & 15.0 & .07 \\
DepthW & 19.0 & 11.56 & 14.9 & 12.03 \\
StealthInk & 20.0 & .18 & 15.3 & .10 \\
MPAC & 18.9 & .03 & 14.8 & .03 \\
RSBH & 19.7 & 2.18 & 15.1 & 1.18 \\
XMark & 54.9 & .28 & 27.6 & .15 \\
\cmidrule(lr){1-5}
\oursAgnK{8} & 20.1 & .01 & 15.5 & .01 \\
\oursAgnTwo{} & 20.2 & .01 & 15.5 & .01 \\
\oursMAK{8} & 20.1 & .42 & 15.5 & .20 \\
\oursMATwo{} & 20.2 & .35 & 15.5 & .19 \\
\midrule
unwatermarked & 19.1 & -- & 14.1 & -- \\
\bottomrule
\end{tabular}
\end{table}

Table~\ref{tab:runtime} measures generation time per emitted token and decoding time per user for two $75$-token completions, averaged over $20$ users. \ours{} generates near the unwatermarked speed, and \oursAgnK{8} is the fastest decoder tested. \oursMA{} adds a model pass but remains faster than the heavier baselines.

\section{Main benchmark on Llama-3.1-8B}
\label{app:_llama318b}
Table~\ref{tab:table1_llama318b} gives the large three-task benchmark table for Llama-3.1-8B (completion) and Llama-3.1-8B-Instruct (chat tasks). Its main findings and all other experiments on this model are discussed in Section~\ref{sec:results}.
\begin{table*}[t]
\centering\footnotesize\setlength{\tabcolsep}{2.5pt}
\caption{Table~\ref{tab:xmark_table1} repeated with Llama-3.1-8B (completion) and Llama-3.1-8B-Instruct (chat tasks) as the generating model, with perplexity under the other 8B sibling of the family since the 70B oracle does not fit one GPU. Every number is measured in our harness; layout and colours follow the main table.}
\label{tab:table1_llama318b}
\resizebox{\ifdim\width>\textwidth\textwidth\else\width\fi}{!}{%
\begin{tabular}{lcccccccccccccc}
\toprule
\multirow{2}{*}{Method} & \multicolumn{2}{c}{$T=150$} & \multicolumn{2}{c}{$T=200$} & \multicolumn{2}{c}{$T=250$} & \multicolumn{2}{c}{$T=300$} & \multicolumn{2}{c}{Avg.} & \multicolumn{4}{c}{Distortion (avg.\ over $T$)} \\
\cmidrule(lr){2-3} \cmidrule(lr){4-5} \cmidrule(lr){6-7} \cmidrule(lr){8-9} \cmidrule(lr){10-11} \cmidrule(lr){12-15}
 & BA$\uparrow$ & PPL$\downarrow$ & BA$\uparrow$ & PPL$\downarrow$ & BA$\uparrow$ & PPL$\downarrow$ & BA$\uparrow$ & PPL$\downarrow$ & BA$\uparrow$ & PPL$\downarrow$ & Top-1$\uparrow$ & Top-5$\uparrow$ & R-1$\uparrow$ & R-L$\uparrow$ \\
\midrule
\rowcolor[HTML]{EDEDED} \multicolumn{15}{l}{\textbf{Text Completion}} \\
CycleShift & \redc 100.00 & 5.63 & \redc 100.00 & 5.75 & \redc 100.00 & 5.59 & \redc 100.00 & 5.51 & \redc 100.00 & 5.62 & 57.62 & 88.55 & 0.321 & 0.186 \\
DepthW & 98.25 & 5.28 & \yellowc 99.50 & 5.12 & \yellowc 99.00 & 5.12 & \redc 100.00 & 5.19 & \yellowc 99.19 & 5.18 & 60.75 & 89.56 & 0.337 & 0.195 \\
StealthInk & 90.00 & \yellowc 4.61 & 91.00 & \yellowc 4.35 & 94.25 & \yellowc 4.58 & 96.25 & \yellowc 4.28 & 92.88 & \yellowc 4.46 & \yellowc 64.98 & \yellowc 92.37 & \yellowc 0.347 & \yellowc 0.202 \\
MPAC & 96.75 & 5.84 & 98.00 & 5.48 & \orangec 99.25 & 5.45 & \orangec 99.00 & 5.67 & 98.25 & 5.61 & 57.66 & 88.34 & 0.329 & 0.188 \\
RSBH & 94.50 & 5.29 & 96.25 & 5.23 & 95.75 & 5.35 & \yellowc 97.50 & 5.39 & 96.00 & 5.31 & 58.31 & 90.17 & 0.331 & 0.188 \\
XMark & \yellowc 98.75 & 5.63 & \redc 100.00 & 5.15 & \redc 100.00 & 5.04 & \redc 100.00 & 5.05 & \orangec 99.69 & 5.22 & 59.31 & 90.27 & 0.333 & 0.191 \\
\cmidrule(lr){1-15}
\oursAgnK{8} & \redc 100.00 & \orangec 4.29 & \redc 100.00 & \redc 4.16 & \redc 100.00 & \redc 4.09 & \redc 100.00 & \orangec 4.08 & \redc 100.00 & \orangec 4.15 & \redc 68.06 & \redc 93.58 & \orangec 0.358 & \redc 0.215 \\
\oursAgnTwo{} & \orangec 99.00 & \redc 4.28 & \orangec 99.75 & \orangec 4.20 & \redc 100.00 & \orangec 4.11 & \redc 100.00 & \redc 3.99 & \orangec 99.69 & \redc 4.14 & \orangec 67.80 & \orangec 93.47 & \redc 0.359 & \orangec 0.206 \\
\oursMAK{8} & \redc 100.00 & \orangec 4.29 & \redc 100.00 & \redc 4.16 & \redc 100.00 & \redc 4.09 & \redc 100.00 & \orangec 4.08 & \redc 100.00 & \orangec 4.15 & \redc 68.06 & \redc 93.58 & \orangec 0.358 & \redc 0.215 \\
\oursMATwo{} & \redc 100.00 & \redc 4.28 & \redc 100.00 & \orangec 4.20 & \redc 100.00 & \orangec 4.11 & \redc 100.00 & \redc 3.99 & \redc 100.00 & \redc 4.14 & \orangec 67.80 & \orangec 93.47 & \redc 0.359 & \orangec 0.206 \\
\midrule
\rowcolor[HTML]{EDEDED} \multicolumn{15}{l}{\textbf{Text Summarization}} \\
CycleShift & 66.50 & 4.31 & 83.00 & 4.50 & \yellowc 88.00 & 4.58 & 91.75 & 4.80 & 82.31 & 4.55 & 82.77 & 99.08 & 0.613 & 0.453 \\
DepthW & 57.00 & 4.31 & 61.00 & 4.37 & 68.50 & 4.47 & 68.00 & 4.65 & 63.62 & 4.45 & 84.29 & 99.28 & 0.621 & 0.460 \\
StealthInk & 61.25 & \orangec 4.14 & 74.00 & \yellowc 4.26 & 73.50 & \orangec 4.26 & 77.00 & \yellowc 4.35 & 71.44 & \yellowc 4.25 & \yellowc 87.26 & \orangec 99.74 & \yellowc 0.631 & \yellowc 0.477 \\
MPAC & 78.00 & 4.57 & \yellowc 83.50 & 4.51 & 84.00 & 4.53 & 91.25 & 4.74 & 84.19 & 4.59 & 82.53 & 98.99 & 0.611 & 0.451 \\
RSBH & 63.00 & 4.65 & 72.25 & 4.52 & 77.50 & 4.59 & 85.25 & 4.76 & 74.50 & 4.63 & 81.84 & 99.19 & 0.608 & 0.446 \\
XMark & \yellowc 81.25 & 4.47 & 81.50 & 4.52 & \orangec 89.00 & 4.51 & \yellowc 95.50 & 4.69 & \yellowc 86.81 & 4.55 & 82.58 & \yellowc 99.40 & 0.619 & 0.454 \\
\cmidrule(lr){1-15}
\oursAgnK{8} & 66.50 & \redc 4.10 & 73.75 & \redc 4.10 & 77.00 & \redc 4.19 & 87.50 & \redc 4.24 & 76.19 & \redc 4.16 & \redc 88.25 & \redc 99.78 & \orangec 0.633 & \orangec 0.486 \\
\oursAgnTwo{} & 74.50 & \yellowc 4.19 & 78.75 & \orangec 4.19 & 84.00 & \yellowc 4.28 & 87.25 & \orangec 4.30 & 81.12 & \orangec 4.24 & \orangec 87.59 & \orangec 99.74 & \redc 0.636 & \redc 0.490 \\
\oursMAK{8} & \redc 97.75 & \redc 4.10 & \redc 100.00 & \redc 4.10 & \redc 100.00 & \redc 4.19 & \redc 100.00 & \redc 4.24 & \redc 99.44 & \redc 4.16 & \redc 88.25 & \redc 99.78 & \orangec 0.633 & \orangec 0.486 \\
\oursMATwo{} & \orangec 95.00 & \yellowc 4.19 & \orangec 98.75 & \orangec 4.19 & \redc 100.00 & \yellowc 4.28 & \orangec 99.75 & \orangec 4.30 & \orangec 98.38 & \orangec 4.24 & \orangec 87.59 & \orangec 99.74 & \redc 0.636 & \redc 0.490 \\
\midrule
\rowcolor[HTML]{EDEDED} \multicolumn{15}{l}{\textbf{Story Generation}} \\
CycleShift & 85.50 & 9.04 & \yellowc 95.00 & 8.03 & \yellowc 94.75 & 7.33 & \orangec 97.00 & 6.72 & 93.06 & 7.78 & 77.56 & 98.16 & 0.406 & \yellowc 0.264 \\
DepthW & 65.25 & 8.57 & 77.50 & 7.87 & 83.00 & 7.45 & 81.75 & 6.47 & 76.88 & 7.59 & 79.24 & 98.45 & 0.410 & 0.259 \\
StealthInk & 72.50 & \redc 7.86 & 76.00 & \yellowc 7.52 & 79.50 & \redc 6.28 & 80.00 & \orangec 6.03 & 77.00 & \orangec 6.92 & \yellowc 83.28 & \yellowc 99.37 & \yellowc 0.425 & \orangec 0.281 \\
MPAC & 84.00 & 9.06 & 86.00 & 8.10 & 89.75 & 7.16 & \yellowc 94.25 & 6.66 & 88.50 & 7.75 & 77.14 & 97.94 & 0.400 & 0.253 \\
RSBH & 73.75 & 8.85 & 78.75 & 8.25 & 87.25 & 7.47 & 93.00 & 6.80 & 83.19 & 7.84 & 76.72 & 98.55 & 0.402 & 0.253 \\
XMark & \yellowc 90.50 & 8.68 & 92.50 & 7.73 & \orangec 97.25 & 7.20 & \orangec 97.00 & 6.76 & \yellowc 94.31 & 7.59 & 77.42 & 98.68 & 0.404 & 0.254 \\
\cmidrule(lr){1-15}
\oursAgnK{8} & 80.25 & \orangec 7.90 & 88.50 & \redc 7.03 & 93.25 & \orangec 6.35 & 93.75 & \redc 5.91 & 88.94 & \redc 6.80 & \redc 85.19 & \redc 99.48 & \orangec 0.426 & \orangec 0.281 \\
\oursAgnTwo{} & 82.75 & \yellowc 8.32 & 86.75 & \orangec 7.18 & 89.75 & \yellowc 6.51 & 92.00 & \yellowc 6.04 & 87.81 & \yellowc 7.01 & \orangec 85.04 & \orangec 99.42 & \redc 0.430 & \redc 0.288 \\
\oursMAK{8} & \redc 100.00 & \orangec 7.90 & \redc 100.00 & \redc 7.03 & \redc 100.00 & \orangec 6.35 & \redc 100.00 & \redc 5.91 & \redc 100.00 & \redc 6.80 & \redc 85.19 & \redc 99.48 & \orangec 0.426 & \orangec 0.281 \\
\oursMATwo{} & \orangec 99.00 & \yellowc 8.32 & \orangec 99.75 & \orangec 7.18 & \redc 100.00 & \yellowc 6.51 & \redc 100.00 & \yellowc 6.04 & \orangec 99.69 & \yellowc 7.01 & \orangec 85.04 & \orangec 99.42 & \redc 0.430 & \redc 0.288 \\
\bottomrule
\end{tabular}}
\end{table*}
\section{Comparison to other distortion-free methods}
\label{app:mirrormark}

\ours{} is not the first distortion-free watermark. Zero-bit schemes have preserved the sampling law for some time \citep{kuditipudi2024robust,christ2024undetectable,hu2024unbiased,wu2024resilient,dathathri2024scalable}, and StealthInk \citep{jiang-etal-2025-stealthink}, BiMark \citep{feng2025bimark}, DISC \citep{boroujeny2024multibit} and MirrorMark \citep{jiang2026mirrormark} carry a multi-bit payload without altering it, the first two by a keyed reweighting whose mean over the key is the original law, the last two by moving the message into the sampling randomness. To the best of our knowledge, \ours{} is the only method that is distortion-free and also returns a certificate on the decoded message. Table~\ref{tab:table1_qwen354b_mirrormark} compares the multi-bit schemes on Qwen3.5-4B under the protocol of Table~\ref{tab:xmark_table1}: StealthInk repeats its rows of that table, BiMark runs the authors' code with the paper's defaults, ten layers and $\delta=1$, and MirrorMark, which has no public code, is our reimplementation from the paper's description with its defaults, carrying $L=8$ as four 2-bit symbols. Since all of these schemes leave the sampling law unchanged, the perplexity columns agree to within the noise of the $50$-user sample and the differences fall on either side, StealthInk sitting slightly higher on completion and story generation. Bit accuracy is what separates them: \oursAgnK{8} matches or exceeds every baseline in every cell, with the largest lead on summarization, $98.8$ on average against $93.3$ for BiMark, $92.7$ for MirrorMark and $81.6$ for StealthInk, and \oursMA{} recovers every message on all three tasks. Appendix~\ref{app:attacks} includes the three baselines in the attack comparison.

\begin{table*}[htbp]
\centering\footnotesize\setlength{\tabcolsep}{2.5pt}
\caption{Distortion-free multi-bit schemes on Qwen3.5-4B: StealthInk, BiMark, MirrorMark and the four \ours{} rows of Table~\ref{tab:xmark_table1}, under the same protocol, tasks and budgets, with perplexity under Qwen3.5-9B; every number is measured in our harness. Red marks the best value in each column.}
\label{tab:table1_qwen354b_mirrormark}
\resizebox{\ifdim\width>\textwidth\textwidth\else\width\fi}{!}{%
\begin{tabular}{lcccccccccc}
\toprule
\multirow{2}{*}{Method} & \multicolumn{2}{c}{$T=150$} & \multicolumn{2}{c}{$T=200$} & \multicolumn{2}{c}{$T=250$} & \multicolumn{2}{c}{$T=300$} & \multicolumn{2}{c}{Avg.} \\
\cmidrule(lr){2-3} \cmidrule(lr){4-5} \cmidrule(lr){6-7} \cmidrule(lr){8-9} \cmidrule(lr){10-11}
 & BA$\uparrow$ & PPL$\downarrow$ & BA$\uparrow$ & PPL$\downarrow$ & BA$\uparrow$ & PPL$\downarrow$ & BA$\uparrow$ & PPL$\downarrow$ & BA$\uparrow$ & PPL$\downarrow$ \\
\midrule
\rowcolor[HTML]{EDEDED} \multicolumn{11}{l}{\textbf{Text Completion}} \\
StealthInk & 89.75 & 6.15 & 90.50 & 5.79 & 95.25 & 5.73 & 94.25 & 5.61 & 92.44 & 5.82 \\
BiMark & 98.25 & 5.79 & 99.75 & 5.81 & 99.75 & 5.40 & 99.25 & 5.26 & 99.25 & 5.56 \\
MirrorMark & \redc 100.00 & 5.83 & \redc 100.00 & 5.66 & \redc 100.00 & \redc 5.12 & \redc 100.00 & 5.27 & \redc 100.00 & 5.47 \\
\cmidrule(lr){1-11}
\oursAgnK{8} & \redc 100.00 & 5.65 & \redc 100.00 & 5.23 & \redc 100.00 & 5.29 & \redc 100.00 & 5.11 & \redc 100.00 & \redc 5.32 \\
\oursAgnTwo{} & 99.75 & \redc 5.63 & \redc 100.00 & 5.55 & \redc 100.00 & 5.34 & 99.75 & 5.20 & 99.88 & 5.43 \\
\oursMAK{8} & \redc 100.00 & 5.69 & \redc 100.00 & \redc 5.20 & \redc 100.00 & 5.35 & \redc 100.00 & \redc 5.09 & \redc 100.00 & 5.33 \\
\oursMATwo{} & \redc 100.00 & 5.64 & \redc 100.00 & 5.52 & \redc 100.00 & 5.36 & \redc 100.00 & 5.24 & \redc 100.00 & 5.44 \\
\midrule
\rowcolor[HTML]{EDEDED} \multicolumn{11}{l}{\textbf{Text Summarization}} \\
StealthInk & 75.25 & 2.95 & 80.75 & 3.10 & 82.25 & 3.11 & 88.00 & 3.14 & 81.56 & 3.08 \\
BiMark & 89.50 & 2.93 & 92.00 & 3.05 & 95.75 & 3.07 & 96.00 & 3.09 & 93.31 & 3.04 \\
MirrorMark & 86.25 & \redc 2.91 & 90.00 & \redc 2.99 & 97.25 & 3.11 & 97.25 & 3.11 & 92.69 & \redc 3.03 \\
\cmidrule(lr){1-11}
\oursAgnK{8} & 97.75 & 3.05 & 98.75 & 3.05 & 98.50 & \redc 3.06 & \redc 100.00 & \redc 3.08 & 98.75 & 3.06 \\
\oursAgnTwo{} & 87.25 & 2.99 & 94.50 & 3.10 & 95.00 & 3.13 & 97.50 & 3.09 & 93.56 & 3.08 \\
\oursMAK{8} & \redc 100.00 & 3.03 & \redc 100.00 & 3.07 & \redc 100.00 & 3.10 & \redc 100.00 & 3.10 & \redc 100.00 & 3.07 \\
\oursMATwo{} & 98.75 & 3.00 & 99.25 & 3.12 & 99.75 & 3.11 & 99.75 & 3.11 & 99.38 & 3.08 \\
\midrule
\rowcolor[HTML]{EDEDED} \multicolumn{11}{l}{\textbf{Story Generation}} \\
StealthInk & 82.50 & 3.81 & 84.75 & 3.93 & 89.50 & 4.02 & 92.50 & 3.87 & 87.31 & 3.91 \\
BiMark & 95.25 & \redc 3.52 & 97.75 & 3.75 & 98.00 & 3.71 & 98.75 & 3.88 & 97.44 & 3.71 \\
MirrorMark & 98.50 & \redc 3.52 & 98.75 & \redc 3.57 & 99.25 & \redc 3.63 & 99.50 & \redc 3.70 & 99.00 & \redc 3.60 \\
\cmidrule(lr){1-11}
\oursAgnK{8} & \redc 100.00 & 3.63 & \redc 100.00 & 3.71 & \redc 100.00 & 3.81 & \redc 100.00 & 3.87 & \redc 100.00 & 3.75 \\
\oursAgnTwo{} & 95.75 & 3.71 & 98.50 & 3.75 & 99.25 & 3.81 & 99.75 & 3.82 & 98.31 & 3.77 \\
\oursMAK{8} & \redc 100.00 & 3.63 & \redc 100.00 & 3.71 & \redc 100.00 & 3.81 & \redc 100.00 & 3.87 & \redc 100.00 & 3.75 \\
\oursMATwo{} & \redc 100.00 & 3.71 & \redc 100.00 & 3.75 & \redc 100.00 & 3.81 & \redc 100.00 & 3.82 & \redc 100.00 & 3.77 \\
\bottomrule
\end{tabular}}
\end{table*}

\section{Further attacks on Qwen3.5-4B}
\label{app:attacks}

Figure~\ref{fig:attacks_edit} extends Figure~\ref{fig:attacks} on Qwen3.5-4B to two token-level edits, for every method of Table~\ref{tab:xmark_table1} except DepthW, whose decoder enumerates all $2^{L}$ codes and needs about an hour per user at $L=16$, and for BiMark and MirrorMark (Appendix~\ref{app:mirrormark}). CycleShift, BiMark and MirrorMark, absent from Figure~\ref{fig:attacks}, are added, with their copy-paste results in the first row. The protocol is that of Figure~\ref{fig:attacks}: text completion, $L=16$, $50$ users with two texts each, every method decoded by its own text-only detector on the same edited texts, and three random draws of each edit per text; the model-aware \ours{} decoder also reads the prompt.

\begin{figure}[htbp]
  \centering
  \includegraphics[width=\textwidth]{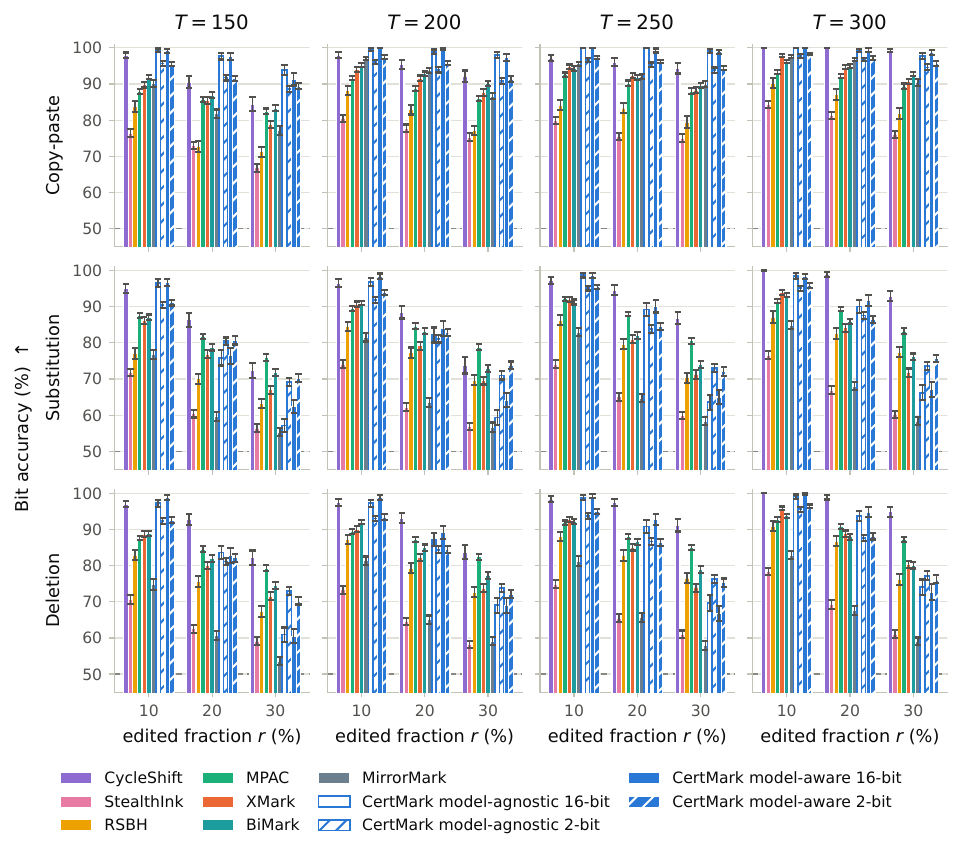}
  \caption{Robustness to token edits on Qwen3.5-4B, text completion, $L=16$: bit accuracy after a fraction $r$ of the tokens of each watermarked text is replaced by spans of unwatermarked text (copy-paste, as in Figure~\ref{fig:attacks}), replaced by random vocabulary tokens, or deleted; one row per attack, one column per token budget $T$. Colours and markers as in Figure~\ref{fig:attacks}; the dashed line is chance. Means over $50$ users and three random draws of the edit, $\pm1$ standard error.}
  \label{fig:attacks_edit}
\end{figure}

\paragraph{Token edits.} The two edits act alike, because each destroys the contexts that overlap the edited position: about four scored positions per edit for the three-token context of \ours{}, two for the one-token contexts of CycleShift and MPAC. Accordingly, \oursMAK{16} and \oursAgnK{16} are the most robust methods at $r=10\%$, between $96.5$ and $99.7$ at every budget, where MPAC stays below $93$ and XMark below $96$; at $20\%$ they still match or exceed XMark, and MPAC from $T=200$ on, but trail CycleShift; and at $30\%$, where a text keeps roughly a quarter of its scored positions, the 16-bit codebook runs out of evidence and they fall below all three, to between $57$ and $74$. The 2-bit chunks degrade more gracefully, since each chunk needs far less evidence, and keep $69$ to $77$ at $30\%$. The model-aware decoder brings no advantage under these edits, because the damage is a loss of contexts rather than a contamination of scored tokens, which is what Theorem~\ref{thm:robust} addresses. CycleShift, the most robust method under heavy editing, pays for it with a bias on every token and the highest perplexity of Table~\ref{tab:xmark_table1}. Of the distortion-free baselines, BiMark degrades like MPAC, from $87$ to $94$ at $10\%$ to $72$ to $80$ at $30\%$, since its two-token context loses few positions per edit and each bit is a majority over its own votes, at the price of never reaching full recovery on clean text. MirrorMark is the least robust: its scheduler assigns positions inside context-anchored frames, so one edit shifts every position up to the next anchor, and it drops to $75$ to $85$ at $10\%$ and to between $53$ and $59$ at $30\%$, close to chance.

\raggedbottom
\section{Pseudocode}
\label{app:pseudocode}

Algorithms~\ref{alg:encoder}--\ref{alg:decoder_ma} restate the method as pseudocode, in the notation of Table~\ref{tab:notation}. There is one encoder: both decoders read its generations and differ only in the per-token score. Each decoder first decodes \emph{without} certification, returning the highest-scoring candidate of every chunk; this is the decoder behind every benchmark table and figure. Given a level $\delta$, it then decodes \emph{with} certification by handing its scores to \textsc{Certify} (Algorithm~\ref{alg:certify}), which keeps the candidate or abstains; this is the decoder of Table~\ref{tab:cert}. Every argument tuple of $\PRF_\kappa$ is encoded by $\langle\cdot\rangle$ of \eqref{eq:argenc}, $\rho$ is the map \eqref{eq:rho}, and the loops over the $2^{k_c}$ candidates are vectorised over the codebook in the implementation.

\begin{algorithm}[H]
\caption{\ours{} encoder, one generation, shared by both decoders; Eqs.~\eqref{eq:chunkidx}--\eqref{eq:emit}.}
\label{alg:encoder}
\begin{algorithmic}[1]
\Require
\algin{key $\kappa$}
\algin{message $m=(m_1,\dots,m_C)$ with $m_c\in\{0,1\}^{k_c}$}
\algin{prompt $x$ and the deployed sampler $p(\cdot\mid x,\cdot)$, after temperature and truncation}
\algin{length $n$ and context width $h$}
\Ensure
\algin{text $y_{1:n}$, distributed exactly as unwatermarked sampling (Theorem~\ref{thm:exact})}
\State $\mathcal{C}\gets\varnothing$ \Comment{contexts used so far in this generation}
\For{$t=1,\dots,n$}
  \State $p_t(\cdot)\gets p(\cdot\mid x,y_{<t})$ \Comment{after temperature and truncation}
  \State $c_t\gets y_{t-h:t-1}$ \Comment{the available prefix when $t\le h$}
  \If{$t\le h$ \textbf{or} $c_t\in\mathcal{C}$}
    \State $a_t\gets(\mathsf{fresh},c_t,t)$ \Comment{carries no message; the position makes the argument new}
  \Else
    \State $\mathcal{C}\gets\mathcal{C}\cup\{c_t\}$;\quad $i_t\gets\PRF_\kappa(\mathsf{chunk},c_t)\bmod C$;\quad $a_t\gets(\mathsf{msg},c_t,m_{i_t})$
  \EndIf
  \State $u_t(v)\gets\rho\bigl(\PRF_\kappa(a_t,v)\bigr)$ for every $v\in V$
  \State $y_t\gets\arg\max_{v\in V}\bigl[\log p_t(v)-\log(-\log u_t(v))\bigr]$ \Comment{Gumbel-max, Lemma~\ref{lem:gumbel}}
\EndFor
\State \Return $y_{1:n}$
\end{algorithmic}
\end{algorithm}

\begin{algorithm}[H]
\caption{\textsc{Certify}: certified decision for one chunk, used by Algorithms~\ref{alg:decoder_agn} and~\ref{alg:decoder_ma}; Eqs.~\eqref{eq:certdef}--\eqref{eq:rule}.}
\label{alg:certify}
\begin{algorithmic}[1]
\Require
\algin{scores $S(m')$, $m'\in\{0,1\}^{k}$, and their maximiser $\hat m$}
\algin{number of scored tokens $n$}
\algin{level $\delta\in(0,1]$}
\algin{upper tail $Q(s)=\Pr[S(m')\ge s]$ of the score under a wrong candidate}
\Ensure
\algin{$\hat m$ when it is certified at level $\delta$, else $\bot$; then $\Pr[\text{output}\notin\{m_c,\bot\}]\le\delta$ (Theorem~\ref{thm:cert})}
\If{$n=0$, or every scored token has $p_t=1$ in Algorithm~\ref{alg:decoder_ma}} $\hat\delta\gets1$
\Else\ $\hat\delta\gets\min\bigl\{1,\;(2^{k}-1)\,Q\bigl(S(\hat m)\bigr)\bigr\}$ \Comment{union bound over the $2^{k}-1$ wrong candidates}
\EndIf
\State \Return $\hat m$ if $\hat\delta\le\delta$, else $\bot$ \Comment{level $\delta/C$ per chunk bounds the whole-message error by $\delta$}
\end{algorithmic}
\end{algorithm}

\begin{algorithm}[H]
\caption{\oursAgn{} decoder, from the text and the key alone; Eqs.~\eqref{eq:visited}--\eqref{eq:score}.}
\label{alg:decoder_agn}
\begin{algorithmic}[1]
\Require
\algin{key $\kappa$}
\algin{text $y_{1:n}$}
\algin{chunk widths $k_1,\dots,k_C$ and context width $h$}
\algin{for certified decoding, a level $\delta\in(0,1]$}
\Ensure
\algin{decoded message $\hat m=(\hat m_1,\dots,\hat m_C)$}
\algin{with certification, each $\hat m_c\in\{0,1\}^{k_c}\cup\{\bot\}$}
\State $S_c(m')\gets0$ for all $c$ and $m'\in\{0,1\}^{k_c}$;\quad $n_c\gets0$;\quad $\mathcal{T}\gets\varnothing$
\For{$t=h+1,\dots,n$}
  \State $c_t\gets y_{t-h:t-1}$;\quad \textbf{if} $c_t=c_s$ for some $s\in\mathcal{T}$ \textbf{then continue} \Comment{repeated context}
  \State $\mathcal{T}\gets\mathcal{T}\cup\{t\}$;\quad $c\gets\PRF_\kappa(\mathsf{chunk},c_t)\bmod C$;\quad $n_c\gets n_c+1$
  \For{$m'\in\{0,1\}^{k_c}$}
    \State $u\gets\rho\bigl(\PRF_\kappa(\mathsf{msg},c_t,m',y_t)\bigr)$;\quad $S_c(m')\gets S_c(m')-\log(1-u)$
  \EndFor
\EndFor
\For{$c=1,\dots,C$}
  \State $\hat m_c\gets\arg\max_{m'}S_c(m')$ \Comment{without certification: the decoder of every benchmark table}
  \State \textbf{if} certifying \textbf{then} $\hat m_c\gets\textsc{Certify}\bigl(S_c(\cdot),\hat m_c,n_c,\delta,Q_c\bigr)$ with $Q_c(s)=\Pr[\Gamma(n_c,1)\ge s]$ (Theorem~\ref{thm:null})
\EndFor
\State \Return $\hat m$
\end{algorithmic}
\end{algorithm}

\begin{algorithm}[H]
\caption{\oursMA{} decoder, from the text, the key, the prompt and the model; Eqs.~\eqref{eq:llr} and~\eqref{eq:robust}.}
\label{alg:decoder_ma}
\begin{algorithmic}[1]
\Require
\algin{as in Algorithm~\ref{alg:decoder_agn}}
\algin{prompt $x$ and the deployed sampler $p$}
\algin{contamination rate $\varepsilon\in[0,1)$}
\Ensure
\algin{decoded message $\hat m=(\hat m_1,\dots,\hat m_C)$}
\algin{with certification, each $\hat m_c\in\{0,1\}^{k_c}\cup\{\bot\}$}
\State $p_t\gets p(y_t\mid x,y_{<t})$ for $t=1,\dots,n$ \Comment{one forward pass over prompt and text}
\State $r_c(m')\gets0$ for all $c$ and $m'$;\quad $\mathcal{T}_c\gets\varnothing$;\quad $\mathcal{T}\gets\varnothing$
\For{$t=h+1,\dots,n$}
  \State $c_t\gets y_{t-h:t-1}$;\quad \textbf{if} $c_t=c_s$ for some $s\in\mathcal{T}$ \textbf{then continue}
  \State $\mathcal{T}\gets\mathcal{T}\cup\{t\}$;\quad $c\gets\PRF_\kappa(\mathsf{chunk},c_t)\bmod C$;\quad $\mathcal{T}_c\gets \mathcal{T}_c\cup\{t\}$
  \For{$m'\in\{0,1\}^{k_c}$}
    \State $u\gets\rho\bigl(\PRF_\kappa(\mathsf{msg},c_t,m',y_t)\bigr)$;\quad $r_c(m')\gets r_c(m')+\log\bigl[(1-\varepsilon)\,p_t^{-1}u^{1/p_t-1}+\varepsilon\bigr]$
  \EndFor
\EndFor
\Statex \hspace{\algorithmicindent} $\varepsilon=0$ gives the log-likelihood ratio $\ell_c$ of \eqref{eq:llr}; $\varepsilon>0$ gives the robust score $r_c$ of \eqref{eq:robust}.
\For{$c=1,\dots,C$}
  \State $\hat m_c\gets\arg\max_{m'}r_c(m')$ \Comment{without certification}
  \State \textbf{if} certifying \textbf{then} $\hat m_c\gets\textsc{Certify}\bigl(r_c(\cdot),\hat m_c,|\mathcal{T}_c|,\delta,Q_c\bigr)$, where $Q_c(s)=F_{c,w}(A_c-s)$ with $w_t=1/p_t-1$ and $A_c=\sum_{t\in \mathcal{T}_c}\log(1/p_t)$ if $\varepsilon=0$ (Theorem~\ref{thm:llr}), and $Q_c=Q_{c,\varepsilon}$ if $\varepsilon>0$ (Theorem~\ref{thm:robust})
\EndFor
\State \Return $\hat m$
\end{algorithmic}
\end{algorithm}

\section{The pseudorandom function}
\label{app:prf}

This appendix gives the details of the pseudorandom function that Section~\ref{sec:method} uses. Let $\PRF_\kappa:\{0,1\}^*\to\mathcal{S}$, $\mathcal{S}=\{0,1\}^{64}$, be a pseudorandom function keyed by $\kappa$ \citep{goldreich1986construct}, where $\{0,1\}^*=\bigcup_{n=0}^{\infty}\{0,1\}^n$ is the set of all finite binary strings. Its values at distinct inputs are computationally indistinguishable from independent uniform elements of $\mathcal{S}$.
To pass structured arguments to the PRF, let $\mathcal{X}$ contain the typed tags, token sequences and integers used in the method, and fix an injective, prefix-free encoding
\begin{equation}
\begin{aligned}
\langle\cdot\rangle&:\bigcup_{r=1}^{\infty}\mathcal{X}^r\to\{0,1\}^*,\\
\PRF_\kappa(x_1,\ldots,x_r)&:=\PRF_\kappa\!\left(\langle x_1,\ldots,x_r\rangle\right).
\end{aligned}
\label{eq:argenc}
\end{equation}
Thus each tuple becomes one finite binary string, and the three tags $\mathsf{chunk}$, $\mathsf{msg}$ and $\mathsf{fresh}$ domain-separate the three uses of the PRF. Interpreting $z\in\mathcal{S}$ as an unsigned integer, we convert its output to
\begin{equation}
\rho(z)=\bigl(\lfloor z/2^{11}\rfloor+\tfrac12\bigr)2^{-53},
\label{eq:rho}
\end{equation}
which is uniform on the $53$-bit midpoint grid in $(0,1)$ when $z$ is uniform. Throughout the analysis we use the standard random-function idealisation, treating $\rho(\PRF_\kappa(a))$ at distinct inputs $a$ as independent $U(0,1)$ variables.

\paragraph{Instantiation.}
Our implementation realises $\PRF_\kappa$ with $64$-bit integer arithmetic. Let $\phi$ be the SplitMix64 finaliser \citep{steele2014fast}, a bijection of $\{0,1\}^{64}$ made of three xor-shifts and two multiplications by fixed odd constants. A context $c_t=(y_{t-h},\dots,y_{t-1})$ is hashed to a seed by starting from $\phi(\kappa\cdot g_1\oplus g_2)$ and absorbing one token at a time, $s\leftarrow\phi\bigl(s\oplus(y+1)\,g_3\bigr)$, with fixed odd constants $g_1,g_2,g_3$. The three uses of the PRF are separated by fixed salts: the chunk index is $\phi(s\oplus\mathsf{salt}_{\mathsf{chunk}})\bmod C$, a message tuple $(\mathsf{msg},c_t,m)$ maps to $\phi\bigl(s\oplus\phi((m+1)\,g_2\oplus\mathsf{salt}_{\mathsf{msg}})\bigr)$, and a fresh tuple $(\mathsf{fresh},c_t,t)$ maps to $\phi\bigl(s\oplus\mathsf{salt}_{\mathsf{fresh}}\oplus\phi(t\,g_3)\bigr)$. The uniform of token $v$ under a seed $\sigma$ is $\rho\bigl(\phi(\sigma+(v+1)\,g_1)\bigr)$ with $\rho$ as in \eqref{eq:rho}, so the whole vocabulary is scored with one vectorised pass and the decoder recomputes any single entry from the text and the key. All arithmetic is modulo $2^{64}$. This is a fast non-cryptographic hash, chosen for speed; a keyed cryptographic PRF, such as a block cipher in counter mode or HMAC, can be substituted without any change to the method or the analysis, at a higher cost per token.

\section{Proof of Lemma~\ref{lem:gumbel}}
\label{app:gumbel}

\certmarkGumbel*

\begin{proof}
Fix the step $t$ and its probability distribution $p_t$ on the finite
vocabulary $V$. Thus $p_t(v)\ge0$ for every $v$ and
$\sum_{v\in V}p_t(v)=1$. Throughout the proof, $p_t$ is fixed and all
probabilities are taken over the independent draws $u(v)\sim U(0,1)$.

\smallskip\noindent
\textbf{Restricting to tokens with positive probability.}
Let $V_+=\{v\in V:p_t(v)>0\}$. This set is nonempty because the
probabilities sum to one. With the convention $\log 0=-\infty$, a token
$v\notin V_+$ has $S(v)=-\infty$. In contrast, $S(v)$ is finite for every
$v\in V_+$, since $0<u(v)<1$ implies $0<-\log u(v)<\infty$.
Consequently, the maximiser must belong to $V_+$, and
$\Pr[y=v]=0=p_t(v)$ for every $v\notin V_+$. All divisions by $p_t(v)$
below are restricted to $v\in V_+$.

\smallskip\noindent
\textbf{Transforming the uniforms.}
For $v\in V_+$, define $E(v)=-\log u(v)$. For every $s\ge0$,
\begin{equation}
\begin{aligned}
\Pr[E(v)>s]
&=\Pr[-\log u(v)>s]\\
&=\Pr[u(v)<e^{-s}]
=e^{-s}.
\end{aligned}
\end{equation}
The last equality uses $e^{-s}\in(0,1]$ and the distribution function of
$U(0,1)$. Thus $E(v)$ is exponential with rate $1$. The variables $E(v)$
remain independent, since each is a deterministic function of its own
independent uniform draw.

This transformation also verifies the Gumbel terminology in the lemma.
For $G(v)=-\log E(v)$ and every $z\in\mathbb{R}$,
\begin{equation}
\Pr[G(v)\le z]
=\Pr[E(v)\ge e^{-z}]
=\exp(-e^{-z}),
\end{equation}
where replacing $>$ by $\ge$ does not change the probability because
$E(v)$ has a continuous distribution. Hence the $G(v)$ are independent
standard Gumbel variables and $S(v)=\log p_t(v)+G(v)$.

\smallskip\noindent
\textbf{Expressing the maximiser through exponential variables.}
Define
\begin{equation}
T(v)=\frac{E(v)}{p_t(v)},\qquad v\in V_+.
\end{equation}
These variables are independent because the denominators are fixed.
For $s\ge0$, their survival functions and densities are
\begin{equation}
\begin{aligned}
\Pr[T(v)>s]
&=\Pr[E(v)>p_t(v)s]=e^{-p_t(v)s},\\
f_{T(v)}(s)
&=\frac{\mathrm{d}}{\mathrm{d}s}\bigl(1-e^{-p_t(v)s}\bigr)
=p_t(v)e^{-p_t(v)s},\qquad s>0.
\end{aligned}
\end{equation}
In particular, $T(v)$ is exponential with rate $p_t(v)$. On $V_+$ the
score can be rewritten as
\begin{equation}
S(v)=\log p_t(v)-\log E(v)
=-\log\!\left(\frac{E(v)}{p_t(v)}\right)
=-\log T(v).
\end{equation}
Since $r\mapsto-\log r$ is strictly decreasing on $(0,\infty)$, the
token with the largest score is exactly the token with the smallest
value of $T(v)$:
\begin{equation}
y=\arg\max_{v\in V_+}S(v)
=\arg\min_{v\in V_+}T(v).
\end{equation}
This minimiser is unique with probability one. Indeed, for two distinct
tokens $v,w\in V_+$, independence implies that conditioning on $T(w)$
leaves $T(v)$ with its continuous distribution. The conditional
probability that $T(v)$ equals that particular value of $T(w)$ is
therefore zero. Averaging gives $\Pr[T(v)=T(w)]=0$, and taking the union
over the finitely many pairs shows that the probability of any tie is
zero.

\smallskip\noindent
\textbf{Computing the probability of selecting each token.}
Fix $v\in V_+$. Conditional on $T(v)=s$, the token $v$ is selected
exactly when $T(w)>s$ for every $w\in V_+\setminus\{v\}$, apart from
the probability-zero ties just considered. Independence gives the
conditional probability of this event as
$\prod_{w\in V_+\setminus\{v\}}\Pr[T(w)>s]$. Integrating over the
density of $T(v)$ therefore yields
\begin{equation}
\begin{aligned}
\Pr[y=v]
&=\int_0^\infty f_{T(v)}(s)
  \prod_{w\in V_+\setminus\{v\}}\Pr[T(w)>s]\,\mathrm{d}s\\
&=\int_0^\infty p_t(v)e^{-p_t(v)s}
  \prod_{w\in V_+\setminus\{v\}}e^{-p_t(w)s}\,\mathrm{d}s\\
&=p_t(v)\int_0^\infty
  \exp\!\left(-s\sum_{w\in V_+}p_t(w)\right)\,\mathrm{d}s\\
&=p_t(v)\int_0^\infty e^{-s}\,\mathrm{d}s
=p_t(v).
\end{aligned}
\end{equation}
Here the third line combines the exponential factors, the fourth uses
$\sum_{w\in V_+}p_t(w)=1$, and the final equality uses
$\int_0^\infty e^{-s}\,\mathrm{d}s=[-e^{-s}]_0^\infty=1$.

If $V_+$ contains only $v$, the product over competing tokens is an
empty product, equal to $1$, and $p_t(v)=1$, so the same calculation
covers deterministic sampling. Together with the zero-probability case
above, this proves $\Pr[y=v]=p_t(v)$ for every $v\in V$.

\smallskip\noindent
\textbf{Equivalence to the implemented score.}
For each $v\in V_+$, the implementation uses
\begin{equation}
\frac{\log u(v)}{p_t(v)}
=-\frac{E(v)}{p_t(v)}
=-T(v).
\end{equation}
Maximising this quantity is again equivalent to minimising $T(v)$, and
therefore selects the same token as maximising $S(v)$. Assigning score
$-\infty$ to tokens outside $V_+$ gives the same exclusion as in the
original rule. This equivalence is an algebraic identity for the
draws in $(0,1)$ when the scores are evaluated in exact arithmetic.
\end{proof}

\section{Proof of Theorem~\ref{thm:exact}}
\label{app:exact}

\certmarkExact*

\begin{proof}
Fix the prompt $x$, message $m$, generation length $n$, model and sampling
settings. We consider one generation, with the seen-context set initially
empty and the encoder's context initialization fixed. We work under the
continuous random-function idealisation stated after \eqref{eq:rho}:
values at distinct token-sampling inputs are independent $U(0,1)$ variables.
All probabilities below are over this idealised keyed randomness, denoted
by $\Pr_\kappa$ as in the theorem. We first establish the conditional law
of a single token and then derive the law of the entire sequence.

\smallskip\noindent
\textbf{Step 1: condition on the information available before sampling.}
Fix a step $t\in\{1,\dots,n\}$. Let $\mathcal{H}_t$ denote the information
generated by the emitted prefix $Y_{<t}$, all earlier PRF queries and their
answers, and the current chunk-selection query \eqref{eq:chunkidx} together
with its answer, before any sampling uniforms for step $t$ are read. Including this last query is
necessary because $i_t$ depends on the PRF and need not be determined by
the prefix alone. Conditional on $\mathcal{H}_t$, the following are fixed:
\begin{equation}
p_t(\cdot)=p(\cdot\mid x,Y_{<t}),\qquad
c_t,\qquad \mathcal{C}_{t-1},\qquad i_t,\qquad a_t.
\end{equation}
In particular, $a_t$ is chosen using only information already revealed.
At $t=1$, the earlier query transcript and emitted prefix are empty;
the same conditioning includes the first chunk-selection query.

\smallskip\noindent
\textbf{Step 2: verify that every sampling input is new.}
For each $v\in V$, write $b_{t,v}=\langle a_t,v\rangle$ for the encoded
input used to obtain $u_t(v)$. We check both branches of \eqref{eq:encoder}.

\emph{First occurrence of the context.}
If $c_t\notin\mathcal{C}_{t-1}$, then
$a_t=(\mathsf{msg},c_t,m_{i_t})$. Every earlier message-carrying input has
the form $\langle\mathsf{msg},c_s,m_{i_s},w\rangle$ for some $s<t$.
Because $c_t$ has not appeared before, $c_s\neq c_t$, so none of these
inputs equals $b_{t,v}$.

\emph{Repeated context.}
If $c_t\in\mathcal{C}_{t-1}$, then
$a_t=(\mathsf{fresh},c_t,t)$. Every earlier input with the tag
$\mathsf{fresh}$ contains an earlier position $s<t$. Its position field
therefore differs from $t$, even when its context equals $c_t$.

In either branch, the tags $\mathsf{msg}$ and $\mathsf{fresh}$ distinguish
the two families of sampling inputs from each other, and both differ from
the tag $\mathsf{chunk}$ used for chunk selection. Thus $b_{t,v}$ also
differs from every chunk-selection input, including the current one.
Finally, for distinct tokens $v\neq w$, the inputs $b_{t,v}$ and $b_{t,w}$
differ in their token field. The injective encoding in \eqref{eq:argenc}
preserves all these distinctions. Hence the inputs used at step $t$ are
pairwise distinct and have never been queried earlier in this generation.

\smallskip\noindent
\textbf{Step 3: obtain independent uniforms conditional on the history.}
A random function can be revealed one query at a time: at a previously
unqueried input, draw an independent value from its prescribed distribution
and store it; at a repeated input, return the stored value. This produces
the same law as sampling the whole random function in advance. It also
shows why choosing a new input from earlier query answers does not change
the law of its unrevealed value.

By Step 2, every input $b_{t,v}$ is new. Under the continuous idealisation,
for any numbers $r_v\in[0,1]$, $v\in V$,
\begin{equation}
\Pr_\kappa\!\left[
u_t(v)\le r_v\ \text{for all }v\in V
\,\middle|\,\mathcal{H}_t\right]
=\prod_{v\in V}r_v.
\end{equation}
This is the joint distribution function of independent $U(0,1)$ variables.
Consequently, conditional on $\mathcal{H}_t$, the entire vector
$u_t(\cdot)$ has exactly the distribution required by Lemma~\ref{lem:gumbel}.

\smallskip\noindent
\textbf{Step 4: identify the next-token distribution.}
Given $\mathcal{H}_t$, the vector $p_t$ is a fixed probability distribution
by Step 1, and the sampling uniforms are independent $U(0,1)$ variables
by Step 3. Applying Lemma~\ref{lem:gumbel} to the encoder's rule
\eqref{eq:emit} therefore gives, for every $v\in V$,
\begin{equation}
\Pr_\kappa[Y_t=v\mid\mathcal{H}_t]
=p_t(v)=p(v\mid x,Y_{<t}).
\end{equation}
Tokens with $p_t(v)=0$ have score $-\infty$ and are never emitted;
among tokens with positive probability, ties occur with probability zero.
Thus the equality includes zero-probability tokens as well.

The prefix $Y_{<t}$ is part of $\mathcal{H}_t$. Averaging over the additional
query information by the law of iterated conditional expectation gives
\begin{equation}
\begin{aligned}
\Pr_\kappa[Y_t=v\mid Y_{<t}]
&=\mathbb{E}_\kappa\!\left[
\Pr_\kappa[Y_t=v\mid\mathcal{H}_t]
\,\middle|\,Y_{<t}\right]\\
&=\mathbb{E}_\kappa\!\left[p(v\mid x,Y_{<t})\,\middle|\,Y_{<t}\right]\\
&=p(v\mid x,Y_{<t}),
\end{aligned}
\end{equation}
where the last equality holds because the prompt and prefix already
determine the deployed sampling distribution. This is precisely the
conditional next-token law of ordinary, unwatermarked generation.

\smallskip\noindent
\textbf{Step 5: derive the sequence distribution.}
Fix any $w\in V^n$, and write
$q_t=\Pr_\kappa[Y_{1:t}=w_{1:t}]$, with $q_0=1$ for the empty prefix.
If $q_{t-1}>0$, the multiplication rule and Step 4 give
\begin{equation}
\begin{aligned}
q_t
&=q_{t-1}\,
\Pr_\kappa[Y_t=w_t\mid Y_{<t}=w_{<t}]\\
&=q_{t-1}\,p(w_t\mid x,w_{<t}).
\end{aligned}
\end{equation}
If $q_{t-1}=0$, then $q_t=0$ because the event
$\{Y_{1:t}=w_{1:t}\}$ is contained in $\{Y_{<t}=w_{<t}\}$, so the same
recurrence still holds. Starting from $q_0=1$ and applying this recurrence
successively for $t=1,\dots,n$ yields
\begin{equation}
\Pr_\kappa[Y_{1:n}=w]
=q_n=\prod_{t=1}^{n}p(w_t\mid x,w_{<t}),
\end{equation}
which proves \eqref{eq:exact}, including sequences with zero probability.
The right-hand side is the sequence law of the unwatermarked sampler.
It does not depend on $m$, since the message changes the sampling inputs
but leaves $p(\cdot\mid x,w_{<t})$ unchanged. As $x$ and $m$ were arbitrary,
the result holds for every prompt and message.
\end{proof}

\section{Proof of Theorem~\ref{thm:null}}
\label{app:null}

\certmarkNull*

\begin{proof}
Fix the prompt, model, sampling settings and embedded message $m$. Also
fix a chunk $c$ and a candidate $m'\neq m_c$. We use the continuous
random-function idealisation stated after \eqref{eq:rho}, and take all
probabilities over the idealised keyed randomness. We will first show
that the values reconstructed for $m'$ are independent uniforms even
after conditioning on the information used to generate the text. We
then derive the distribution of their score contributions and their sum.

\smallskip\noindent
\textbf{Fixing the text and the scored positions.}
Let $\mathcal{H}$ denote the information in the complete encoder query
transcript: all PRF inputs queried during the generation and all their
answers. Include also the chunk-selection queries used by the decoder,
before it evaluates any candidate scores. The generated text is
determined by the encoder transcript and the fixed prompt and model.
Consequently, conditional on $\mathcal{H}$, the text $y$, its contexts,
the visited positions $\mathcal{T}$, their chunk assignments $i_t$, and
the set $\mathcal{T}_c$ are all fixed. This extra conditioning matters because
$\mathcal{T}_c$ depends on the chunk-selection PRF values as well as on the text.

If $\mathcal{T}_c=\varnothing$, the sum in \eqref{eq:score} is empty and therefore
$S_c(m')=0$. This proves the assertion for $n_c=0$. In what follows,
assume $n_c\ge1$ and, for every $t\in \mathcal{T}_c$, write
\begin{equation}
b_t=\langle\mathsf{msg},c_t,m',y_t\rangle,
\qquad U_t=\rho\bigl(\PRF_\kappa(b_t)\bigr)
=u_t^{(m')}(y_t).
\end{equation}
The inputs $b_t$ are determined by $\mathcal{H}$; their PRF values will
be the randomness used to score the wrong candidate.

\smallskip\noindent
\textbf{Showing that the candidate inputs are distinct and unqueried.}
The decoder retains only the first occurrence of each context in
\eqref{eq:visited}. Thus $c_t\neq c_s$ for distinct $t,s\in \mathcal{T}_c$, and
the injective encoding in \eqref{eq:argenc} implies $b_t\neq b_s$.

Now fix $t\in \mathcal{T}_c$ and compare $b_t$ with every type of query in
$\mathcal{H}$. A chunk-selection query has tag $\mathsf{chunk}$, and a
fresh-sampling query has tag $\mathsf{fresh}$; each differs from the
tag $\mathsf{msg}$ in $b_t$. A message-carrying encoder query at any
position $s$ has the form
\begin{equation}
\langle\mathsf{msg},c_s,m_{i_s},v\rangle,
\qquad v\in V.
\end{equation}
If $c_s\neq c_t$, its context field differs from that of $b_t$. If
$c_s=c_t$, the context-based assignment \eqref{eq:chunkidx} gives
$i_s=i_t=c$. The encoder therefore uses $m_{i_s}=m_c$, which differs
from $m'$, so the message field differs. This comparison covers every
position in the generation, including positions after $t$, and every
token queried by the encoder. Hence none of the $b_t$ was queried in
forming $\mathcal{H}$. The conclusion also holds if $m'$ happens to
equal the value carried by another chunk, since an identical context
always selects the same chunk.

\smallskip\noindent
\textbf{Obtaining independent uniforms under the conditioning.}
As in Appendix~\ref{app:exact}, a random function can be revealed by
drawing and storing an independent value when an input is first
queried. Conditional on a complete query transcript, values at all
unqueried inputs retain their original independent distributions.
Here the $b_t$ are distinct unqueried inputs chosen from the transcript,
so their values can still be drawn independently after that transcript
has been fixed. In particular, for any $r_t\in[0,1]$, $t\in \mathcal{T}_c$,
\begin{equation}
\Pr_\kappa\!\left[
U_t\le r_t\ \text{for all }t\in \mathcal{T}_c
\,\middle|\,\mathcal{H}\right]
=\prod_{t\in \mathcal{T}_c}r_t.
\end{equation}
Thus the $U_t$ are conditionally independent $U(0,1)$ variables.
Conditioning on the generated text does not bias them: the entire
generation used other PRF inputs, and its outcome is already part of
$\mathcal{H}$.

\smallskip\noindent
\textbf{Deriving the distribution of one score contribution.}
Set $X_t=-\log(1-U_t)$. Since $0<U_t<1$, each $X_t$ is positive.
For $z\ge0$, the conditional survival function is
\begin{equation}
\begin{aligned}
\Pr_\kappa[X_t>z\mid\mathcal{H}]
&=\Pr_\kappa[-\log(1-U_t)>z\mid\mathcal{H}]\\
&=\Pr_\kappa[1-U_t<e^{-z}\mid\mathcal{H}]\\
&=\Pr_\kappa[U_t>1-e^{-z}\mid\mathcal{H}]\\
&=1-(1-e^{-z})=e^{-z}.
\end{aligned}
\end{equation}
The last line uses $1-e^{-z}\in[0,1)$ and the uniform law of $U_t$.
For $z<0$, the survival probability is $1$. Therefore each $X_t$ has
the exponential distribution with rate $1$, with density $e^{-z}$
for $z>0$ and zero density for $z<0$. The $X_t$ remain conditionally
independent because each is a function of a different $U_t$.

\smallskip\noindent
\textbf{Deriving the distribution of the sum.}
Enumerate $\mathcal{T}_c=\{t_1,\ldots,t_{n_c}\}$, and let
$Z_r=\sum_{j=1}^{r}X_{t_j}$. Conditional on $\mathcal{H}$, we claim that
for each integer $1\le r\le n_c$, its density is
\begin{equation}
f_r(z)=\frac{z^{r-1}e^{-z}}{(r-1)!},\qquad z>0,
\end{equation}
and is zero for $z<0$. For $r=1$, this is exactly the exponential
density derived above. Suppose it holds for some $r<n_c$. Since
$Z_r$ and $X_{t_{r+1}}$ are independent and nonnegative, the density
of their sum is the convolution of their densities. For $z>0$,
\begin{equation}
\begin{aligned}
f_{r+1}(z)
&=\int_0^z f_r(s)e^{-(z-s)}\,\mathrm{d}s\\
&=\int_0^z\frac{s^{r-1}e^{-s}}{(r-1)!}
   e^{-(z-s)}\,\mathrm{d}s\\
&=\frac{e^{-z}}{(r-1)!}\int_0^z s^{r-1}\,\mathrm{d}s\\
&=\frac{e^{-z}}{(r-1)!}\frac{z^r}{r}
=\frac{z^r e^{-z}}{r!}.
\end{aligned}
\end{equation}
This proves the claim by induction. Taking $r=n_c$ and recalling
$S_c(m')=Z_{n_c}$ identifies its conditional density as that of
$\Gamma(n_c,1)$, with shape $n_c$ and rate $1$.

\smallskip\noindent
\textbf{Returning to conditioning only on the text and scored positions.}
Let $F_r$ be the distribution function of $\Gamma(r,1)$ for $r\ge1$,
and set $F_0(z)=\mathbf{1}\{z\ge0\}$ for the empty score. The preceding
argument gives
$\Pr_\kappa[S_c(m')\le z\mid\mathcal{H}]=F_{n_c}(z)$.
Since $\mathcal{H}$ determines both $Y_{1:n}$ and $\mathcal{T}_c$, iterated
conditional expectation yields, for every $z\in\mathbb{R}$,
\begin{equation}
\begin{aligned}
\Pr_\kappa[S_c(m')\le z\mid Y_{1:n},\mathcal{T}_c]
&=\mathbb{E}_\kappa\!\left[
\Pr_\kappa[S_c(m')\le z\mid\mathcal{H}]
\,\middle|\,Y_{1:n},\mathcal{T}_c\right]\\
&=\mathbb{E}_\kappa\!\left[F_{n_c}(z)\,\middle|\,Y_{1:n},\mathcal{T}_c\right]\\
&=F_{n_c}(z).
\end{aligned}
\end{equation}
The last equality holds because $n_c=|\mathcal{T}_c|$ is fixed by the conditioning.
This proves \eqref{eq:null} for every possible emitted text and scored
set, with the empty case handled separately. The resulting law depends
on them only through $n_c$; no model probabilities, prompt, sampling
temperature or message values enter its parameters.

The same calculation also applies to a text fixed independently of the
PRF. In that case, condition on that text and its chunk-selection
queries; the candidate inputs are again distinct and unqueried, so the
uniform, exponential and Gamma calculations above apply unchanged.
\end{proof}

\section{Proof of Theorem~\ref{thm:gain}}
\label{app:gain}

\noindent The main text states the candidate distributions. The full statement
below additionally gives their expected scores; it keeps the theorem's number
and the equation numbers reserved in the main text.
\begingroup
\setcounter{savedtheorem}{\value{theorem}}%
\setcounterref{theorem}{thm:gain}\addtocounter{theorem}{-1}%
\def\theHtheorem{full.\thetheorem}\def\theHequation{full.\arabic{equation}.\the\inputlineno}%
\begin{theorem}[Candidate distributions and expected scores, full statement]
\label{thm:gain_full}
Treat $\PRF$ as a random function. Fix the deployed model, prompt $x$,
sampling settings, embedded message $m$, and a chunk $c$ with $k_c\ge1$.
For $t\in \mathcal{T}_c$, condition on $y_{<t}$ and
$p=p_t(y_t)\in(0,1]$, the sampler's probability of the emitted token.
For every candidate $m'$, its reconstructed value and expected per-token
score satisfy
\begin{align}
u_t^{(m')}(y_t)&\sim
\begin{cases}
\mathrm{Beta}(1/p,1), & m'=m_c \quad\text{(correct candidate)},\\[2pt]
U(0,1), & m'\neq m_c \quad\text{(wrong candidate)},
\end{cases}
\tag{\ref{eq:candidate_laws}}\\[4pt]
\mathbb{E}\!\left[-\log\bigl(1-u_t^{(m')}(y_t)\bigr)\,\middle|\,y_{<t},p\right]
&=\begin{cases}
g(p):=\psi(1/p+1)+\gamma_E, & m'=m_c,\\[2pt]
1, & m'\neq m_c,
\end{cases}
\tag{\gainEquationNumber}\label{eq:gain}
\end{align}
where $u_t^{(m_c)}(y_t)=u_t(y_t)$, $\psi$ is the digamma function and
$\gamma_E$ is the Euler--Mascheroni constant; $g(1)=1$ and $g$ is strictly
decreasing on $(0,1]$.
\end{theorem}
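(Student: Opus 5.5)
The plan is to condition on the full history $\mathcal{H}_t$ from the proof of Theorem~\ref{thm:exact}: the prefix, all earlier PRF queries and their answers, and the current chunk-selection query. I then derive the law of the emitted token's uniform jointly with the event $\{y_t=v\}$, and only afterwards condition on $p=p_t(y_t)$. A preliminary bookkeeping step checks that for $t\in\mathcal{T}_c$ the encoder really read its uniforms at $a_t=(\mathsf{msg},c_t,m_c)$. Such a $t$ has $t>h$ and $c_t$ occurring for the first time in the text, and since the text is exactly the generation, $c_t\notin\mathcal{C}_{t-1}$. The encoder therefore took the $\mathsf{msg}$ branch with $i_t=c$, so $u_t^{(m_c)}(y_t)=u_t(y_t)$.

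\emph{Correct candidate.} By Step~3 of Appendix~\ref{app:exact}, conditional on $\mathcal{H}_t$ the vector $u_t(\cdot)$ is i.i.d.\ $U(0,1)$ and $p_t$ is fixed. As in the proof of Lemma~\ref{lem:gumbel}, set $T(w)=E(w)/p_t(w)$ with $E(w)=-\log u_t(w)$, so that $y_t=\arg\min_w T(w)$ over $V_+$. The key step is the standard competing-exponentials fact: the minimum $M=\min_w T(w)$ is $\mathrm{Exp}(\sum_w p_t(w))=\mathrm{Exp}(1)$ and is independent of the argmin. I would prove it by the same integral as in Lemma~\ref{lem:gumbel}, computing
\[
\Pr[y_t=v,\,M>s]=\int_s^\infty p_t(v)e^{-r}\,\mathrm{d}r=p_t(v)e^{-s}.
\]
Hence, given $y_t=v$, we have $u_t(v)=e^{-p_t(v)M}$, and
\[
\Pr[u_t(v)\le r\mid y_t=v,\mathcal{H}_t]=\Pr\bigl[M\ge -\log r/p\bigr]=r^{1/p},
\]
which is the $\mathrm{Beta}(1/p,1)$ distribution function. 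This law depends on $(\mathcal{H}_t,v)$ only through $p=p_t(v)$, so iterated conditioning down to $(y_{<t},p)$ preserves it. The case $p=1$ is covered and gives $U(0,1)$.

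\emph{Wrong candidate.} Here I reuse the argument of Appendix~\ref{app:null}. The input $\langle\mathsf{msg},c_t,m',y_t\rangle$ was never queried by the encoder at any step, earlier or later: an equal context forces the same chunk index and hence the field $m_c\neq m'$, and the other queries carry different tags. Conditional on the complete transcript, which determines $y_{<t}$, $y_t$ and $p$, the value is therefore $U(0,1)$, and averaging preserves this.

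\emph{Expected scores.} For $U\sim U(0,1)$, $-\log(1-U)\sim\mathrm{Exp}(1)$, which has mean $1$. For $U$ with density $au^{a-1}$, where $a=1/p$, I would expand $-\log(1-U)=\sum_{k\ge1}U^k/k$ and use monotone convergence with $\mathbb{E}U^k=a/(a+k)$. This gives
\[
\sum_{k\ge1}\Bigl(\frac1k-\frac1{a+k}\Bigr)=\psi(a+1)+\gamma_E,
\]
by the series representation of digamma. Then $g(1)=\psi(2)+\gamma_E=1$, and strict decrease follows because $\psi$ is strictly increasing and $a=1/p$ is strictly decreasing in $p$.

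The main obstacle is the independence of the minimum and the argmin under the PRF conditioning, i.e.\ making rigorous that conditioning on the emitted token leaves the uniform's law depending only on $p$. The joint computation of $\Pr[y_t=v,\,M>s]$ handles this directly.
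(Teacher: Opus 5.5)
Your proposal is correct and follows the paper's proof essentially step for step. It uses the same conditioning on $\mathcal{H}_t$ with the first-occurrence bookkeeping, the same unqueried-input argument for wrong candidates, and the same moment, geometric-series and digamma computation of the expected score.

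The differences are cosmetic. You derive $\Pr[u_t(v)\le r\mid y_t=v,\mathcal{H}_t]=r^{1/p}$ from the independence of the minimum and the argmin of the exponential race, whereas the paper integrates directly over $u_t(v)=s$; these are the same integral after the substitution $u=e^{-pr}$. You also obtain the strict decrease of $g$ from the monotonicity of $\psi$, whereas the paper uses the series $g(p)=\sum_j 1/(j(1+jp))$.
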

\setcounter{theorem}{\value{savedtheorem}}%
\endgroup

\begin{proof}
Fix the prompt, model, sampling settings and embedded message. Fix a
chunk $c$ and a position $t$, and throughout the proof restrict to the
event $t\in \mathcal{T}_c$, as in the theorem. This event says that the context
first occurs at $t$ and that its chunk assignment is $c$; both facts
are determined before the token $Y_t$ is sampled. All probabilities
are over the continuous random-function idealisation stated after
\eqref{eq:rho}.

\smallskip\noindent
\textbf{Conditioning before the token is drawn.}
Let $\mathcal{H}_t$ be the information available just before reading
the sampling uniforms at step $t$, including the prefix, all earlier
PRF queries and their answers, and the current chunk-selection query
and its answer, as in Appendix~\ref{app:exact}. Conditional on
$\mathcal{H}_t$, the distribution $p_t$, context $c_t$ and assignment
$i_t=c$ are fixed. Since this is the first occurrence of the context,
the encoder uses the tuple $(\mathsf{msg},c_t,m_c)$, which is exactly
the tuple reconstructed by the correct candidate. Hence
\begin{equation}
u_t^{(m_c)}(v)=u_t(v),\qquad v\in V.
\end{equation}
Appendix~\ref{app:exact} establishes that this vector has independent
$U(0,1)$ entries conditional on $\mathcal{H}_t$. This is the point at
which the fresh-input rule is used. Let
$V_+=\{v\in V:p_t(v)>0\}$. Tokens outside $V_+$ cannot be emitted,
as shown in Appendix~\ref{app:gumbel}, so $p=p_t(Y_t)$ belongs to
$(0,1]$ with probability one.

\smallskip\noindent
\textbf{The joint law of the emitted token and its uniform.}
Appendix~\ref{app:gumbel} shows that maximising the Gumbel score is
equivalent to maximising $\log u_t(v)/p_t(v)$. Exponentiation is
strictly increasing, so
\begin{equation}
Y_t=\arg\max_{v\in V_+}\frac{\log u_t(v)}{p_t(v)}
=\arg\max_{v\in V_+}u_t(v)^{1/p_t(v)}.
\end{equation}
Fix $v\in V_+$ and write $q=p_t(v)>0$. If $u_t(v)=s\in(0,1)$, then
this token is selected precisely when, for every competing token
$w\in V_+\setminus\{v\}$,
\begin{equation}
u_t(w)^{1/p_t(w)}\le s^{1/q}
\quad\Longleftrightarrow\quad
u_t(w)\le s^{p_t(w)/q}.
\end{equation}
The probability-zero ties can be ignored by Appendix~\ref{app:gumbel}.
Given $\mathcal{H}_t$ and $u_t(v)=s$, the other coordinates remain
independent uniforms. Their joint probability of satisfying these
inequalities is therefore
\begin{equation}
\begin{aligned}
\Pr_\kappa[Y_t=v\mid\mathcal{H}_t,u_t(v)=s]
&=\prod_{w\in V_+\setminus\{v\}}s^{p_t(w)/q}\\
&=s^{\left(\sum_{w\in V_+\setminus\{v\}}p_t(w)\right)/q}
=s^{(1-q)/q}.
\end{aligned}
\end{equation}
The final equality uses $\sum_{w\in V_+}p_t(w)=1$. When $q=1$,
there are no positive-probability competitors and the empty product
is $1$, in agreement with the same formula.

The conditional density of $u_t(v)$ is $1$ on $(0,1)$. Integrating
over its value thus gives, for $0<x\le1$,
\begin{equation}
\begin{aligned}
\Pr_\kappa[u_t(v)\le x,\ Y_t=v\mid\mathcal{H}_t]
&=\int_0^x
  \Pr_\kappa[Y_t=v\mid\mathcal{H}_t,u_t(v)=s]\,\mathrm{d}s\\
&=\int_0^x s^{1/q-1}\,\mathrm{d}s\\
&=\bigl[q\,s^{1/q}\bigr]_{s=0}^{s=x}
=q\,x^{1/q}.
\end{aligned}
\end{equation}
Taking $x=1$ recovers $\Pr_\kappa[Y_t=v\mid\mathcal{H}_t]=q$.
Since $q>0$, division by this probability is valid and yields
\begin{equation}
\begin{aligned}
\Pr_\kappa[u_t(Y_t)\le x\mid\mathcal{H}_t,Y_t=v]
&=\frac{\Pr_\kappa[u_t(v)\le x,\ Y_t=v\mid\mathcal{H}_t]}
        {\Pr_\kappa[Y_t=v\mid\mathcal{H}_t]}\\
&=\frac{q\,x^{1/q}}q=x^{1/q},\qquad 0<x<1.
\end{aligned}
\end{equation}

\smallskip\noindent
\textbf{Conditioning on the emitted token's probability.}
The preceding conditional law depends on the token $v$ only through
its probability $q=p_t(v)$. Write $p=p_t(Y_t)$ for the probability
assigned to the token that was actually emitted. The pair
$(\mathcal{H}_t,Y_t)$ determines both $Y_{<t}$ and $p$, so iterated
conditional expectation gives, for $0<x<1$,
\begin{equation}
\begin{aligned}
\Pr_\kappa[u_t(Y_t)\le x\mid Y_{<t},p]
&=\mathbb{E}_\kappa\!\left[
\Pr_\kappa[u_t(Y_t)\le x\mid\mathcal{H}_t,Y_t]
\,\middle|\,Y_{<t},p\right]\\
&=\mathbb{E}_\kappa\!\left[x^{1/p_t(Y_t)}\,\middle|\,Y_{<t},p\right]
=x^{1/p}.
\end{aligned}
\end{equation}
Thus averaging over the additional query information, or over several
tokens with the same probability $p$, leaves this distribution
unchanged. Its distribution function is $0$ for $x\le0$ and $1$ for
$x\ge1$, and its density on $(0,1)$ is
\begin{equation}
\frac{\mathrm{d}}{\mathrm{d}x}x^{1/p}
=\frac1p x^{1/p-1}.
\end{equation}
This is the density of $\mathrm{Beta}(1/p,1)$, proving the
correct-candidate case of \eqref{eq:candidate_laws}.

\smallskip\noindent
\textbf{The wrong-candidate distribution.}
Fix $m'\neq m_c$. Appendix~\ref{app:null} proves that the input
$\langle\mathsf{msg},c_t,m',Y_t\rangle$ is never queried by the
encoder: an encoder query with that context uses $m_c$, and the
other query families have different tags. Moreover, the value at
this input remains uniform conditional on the complete generation
transcript and the chunk-selection queries. Denote that information
by $\mathcal{H}$, as in Appendix~\ref{app:null}. It determines the
emitted text and hence also $Y_{<t}$ and $p$. Consequently,
\begin{equation}
\begin{aligned}
\Pr_\kappa[u_t^{(m')}(Y_t)\le x\mid Y_{<t},p]
&=\mathbb{E}_\kappa\!\left[
\Pr_\kappa[u_t^{(m')}(Y_t)\le x\mid\mathcal{H}]
\,\middle|\,Y_{<t},p\right]\\
&=\mathbb{E}_\kappa[x\mid Y_{<t},p]=x,
\qquad 0<x<1.
\end{aligned}
\end{equation}
Together with the endpoint values $0$ and $1$, this is the
distribution function of $U(0,1)$, proving the second case of
\eqref{eq:candidate_laws}.

\smallskip\noindent
\textbf{Computing the expected score from the Beta density.}
Let $B\sim\mathrm{Beta}(a,1)$ with $a>0$, so that its density is
$a\,x^{a-1}$ on $(0,1)$. For every integer $j\ge1$, its $j$th moment is
\begin{equation}
\begin{aligned}
\mathbb{E}[B^j]
&=\int_0^1 x^j\,a\,x^{a-1}\,\mathrm{d}x\\
&=a\left[\frac{x^{a+j}}{a+j}\right]_{x=0}^{x=1}
=\frac{a}{a+j}.
\end{aligned}
\end{equation}
Integrating the geometric series for $1/(1-x)$ on $[0,b]$ gives
$-\log(1-b)=\sum_{j=1}^{\infty}b^j/j$ for $0\le b<1$.

\Needspace{10\baselineskip}
In particular, the nonnegative partial sums
$L_N=\sum_{j=1}^N B^j/j$ increase to $-\log(1-B)$ with probability one.
The monotone convergence theorem therefore justifies interchanging
expectation and this infinite sum:
\begin{equation}
\begin{aligned}
\mathbb{E}[-\log(1-B)]
&=\lim_{N\to\infty}\mathbb{E}[L_N]\\
&=\lim_{N\to\infty}\sum_{j=1}^N\frac{\mathbb{E}[B^j]}j\\
&=\sum_{j=1}^{\infty}\frac{a}{j(a+j)}\\
&=\sum_{j=1}^{\infty}\left(\frac1j-\frac1{j+a}\right).
\end{aligned}
\end{equation}
This expectation is finite, since
$0\le a/[j(a+j)]\le a/j^2$ and $\sum_{j=1}^{\infty}j^{-2}$ converges.
For the digamma function $\psi(z)=\Gamma'(z)/\Gamma(z)$, its series
representation gives
\begin{equation}
\psi(a+1)=-\gamma_E+
\sum_{j=1}^{\infty}\left(\frac1j-\frac1{j+a}\right),
\qquad a>0.
\end{equation}
Combining these two identities yields
$\mathbb{E}[-\log(1-B)]=\psi(a+1)+\gamma_E$.
Applying this identity to the correct candidate's conditional law,
with $a=1/p$, gives
\begin{equation}
\mathbb{E}\!\left[-\log\bigl(1-u_t(Y_t)\bigr)
\,\middle|\,Y_{<t},p\right]
=\psi(1/p+1)+\gamma_E=g(p).
\end{equation}

For a wrong candidate, $U(0,1)=\mathrm{Beta}(1,1)$, so $a=1$.
The corresponding series telescopes:
\begin{equation}
\sum_{j=1}^{\infty}\left(\frac1j-\frac1{j+1}\right)
=\lim_{N\to\infty}\left(1-\frac1{N+1}\right)=1.
\end{equation}
This proves the wrong-candidate mean in \eqref{eq:gain} and also
shows that $g(1)=1$. At $p=1$, the correct-candidate Beta law is
itself $U(0,1)$, so the two distributions and their expected scores
coincide.

\smallskip\noindent
\textbf{Strict decrease of the expected score.}
Substituting $a=1/p$ into the convergent series above gives the
equivalent expression
\begin{equation}
g(p)=\sum_{j=1}^{\infty}\frac{1}{j(1+jp)},\qquad p>0.
\end{equation}
For $0<p_1<p_2\le1$, subtracting the two convergent series yields
\begin{equation}
\begin{aligned}
g(p_1)-g(p_2)
&=\sum_{j=1}^{\infty}
\left(\frac{1}{j(1+jp_1)}-\frac{1}{j(1+jp_2)}\right)\\
&=(p_2-p_1)\sum_{j=1}^{\infty}
\frac{1}{(1+jp_1)(1+jp_2)}>0.
\end{aligned}
\end{equation}
The last inequality holds because $p_2-p_1>0$ and every summand is
positive. Hence $g$ is strictly decreasing on $(0,1]$. Together with
$g(1)=1$, this also gives $g(p)>1$ whenever $p<1$, completing the proof.
\end{proof}

\section{Proof of Theorem~\ref{thm:cert}}
\label{app:cert}

\certmarkCert*

\begin{proof}
Fix the model, prompt, sampling settings and embedded message $m$.
Also fix the chunk $c$ and the level $\delta\in(0,1]$, before observing
the scores. All probabilities are over the idealised keyed randomness.
Write
\begin{equation}
\mathcal{M}_c=\{0,1\}^{k_c},\qquad
K=|\mathcal{M}_c|=2^{k_c},\qquad
\mathcal{W}_c=\mathcal{M}_c\setminus\{m_c\}.
\end{equation}
Thus $\mathcal{W}_c$ contains exactly $K-1$ wrong candidates. Choose any
fixed rule for breaking ties in the maximisation defining $\hat m_c$.
The event whose probability we must bound is
\begin{equation}
E_c=\bigl\{D_\delta(Y_{1:n})\notin\{m_c,\bot\}\bigr\},
\end{equation}
namely, the event that the decoder accepts an incorrect chunk.

\smallskip\noindent
\textbf{Boundary cases and conditioning.}
If $\delta=1$, then $\Pr_\kappa[E_c]\le1=\delta$ simply because $E_c$
is an event. If $K=1$, there are no wrong candidates, so $E_c$ is empty.
We may therefore assume $0<\delta<1$ and $K\ge2$.

Condition on any pair $Y_{1:n}=y$ and $\mathcal{T}_c=\tau$ with positive
probability, and put $r=|\tau|$. To keep the notation readable, write
\begin{equation}
\Pr_{y,\tau}[\,\cdot\,]
:=\Pr_\kappa[\,\cdot\mid Y_{1:n}=y,\ \mathcal{T}_c=\tau].
\end{equation}
Under this conditioning, the number of scored positions is fixed at
$n_c=r$. Conditioning on $\mathcal{T}_c$ as well as on the text is necessary
because the scored positions also depend on the chunk-selection PRF
values. The candidate scores themselves remain random under this
conditioning.

If $r=0$, the definition gives $\hat\delta_c(y)=1>\delta$, and the
decoder abstains. Hence $\Pr_{y,\tau}[E_c]=0$. In the remainder of the
conditional argument, assume $r\ge1$.

\smallskip\noindent
\textbf{The conditional null tail and its threshold.}
Let $G_r$ have the Gamma distribution with shape $r$ and rate $1$,
and denote its survival function by
\begin{equation}
Q_r(s):=\Pr[G_r\ge s]
=\int_s^\infty\frac{z^{r-1}e^{-z}}{(r-1)!}\,\mathrm{d}z,
\qquad s\ge0.
\end{equation}
Theorem~\ref{thm:null} states precisely that every fixed wrong
candidate has this conditional score distribution. Consequently,
\begin{equation}
\Pr_{y,\tau}[S_c(m')\ge s]=Q_r(s)
\qquad\text{for every }m'\in\mathcal{W}_c\text{ and }s\ge0.
\end{equation}
Here $Q_r$ is a deterministic function once $r$ has been fixed; it
does not depend on any realised candidate score.

The integral defining $Q_r$ shows that it is continuous, with
$Q_r(0)=1$ and $Q_r(s)\to0$ as $s\to\infty$. It is also strictly
decreasing: whenever $0\le a<b$,
\begin{equation}
Q_r(a)-Q_r(b)
=\int_a^b\frac{z^{r-1}e^{-z}}{(r-1)!}\,\mathrm{d}z>0,
\end{equation}
because the integrand is positive for every $z>0$. Since
$0<\delta/(K-1)<1$, continuity and strict monotonicity imply that
there is exactly one $s_\delta>0$ satisfying
\begin{equation}
Q_r(s_\delta)=\frac{\delta}{K-1},
\qquad\text{or equivalently}\qquad
(K-1)Q_r(s_\delta)=\delta.
\end{equation}
This threshold depends on $r$, $K$ and $\delta$, all of which are
fixed in the conditional argument.

\smallskip\noindent
\textbf{Rewriting the acceptance rule as a score threshold.}
Abbreviate $\hat m=\hat m_c(y)$. Since $r\ge1$, the certificate in
\eqref{eq:certdef} becomes
\begin{equation}
\hat\delta_c(y)=\min\{1,(K-1)Q_r(S_c(\hat m))\}.
\end{equation}
For any $a\ge0$ and $\delta<1$, the inequality
$\min\{1,a\}\le\delta$ holds if and only if $a\le\delta$:
if $a\ge1$, the minimum equals $1>\delta$, and if $a<1$, the
minimum equals $a$. Applying this observation and then the strict
decrease of $Q_r$ gives
\begin{equation}
\begin{aligned}
\hat\delta_c(y)\le\delta
&\quad\Longleftrightarrow\quad
(K-1)Q_r(S_c(\hat m))\le\delta\\
&\quad\Longleftrightarrow\quad
Q_r(S_c(\hat m))\le Q_r(s_\delta)\\
&\quad\Longleftrightarrow\quad
S_c(\hat m)\ge s_\delta.
\end{aligned}
\end{equation}
All scores are nonnegative because their summands are
$-\log(1-u)$ with $u\in(0,1)$, so the survival function above is
being evaluated on its stated domain.

\smallskip\noindent
\textbf{Controlling an incorrectly selected candidate.}
Under the current conditioning, the decoder makes a certified error
exactly when its selected candidate is wrong and its score meets
the threshold. Therefore
\begin{equation}
\begin{aligned}
E_c
&=\{\hat m\in\mathcal{W}_c\}
  \cap\{S_c(\hat m)\ge s_\delta\}\\
&=\bigcup_{m'\in\mathcal{W}_c}
  \bigl(\{\hat m=m'\}\cap\{S_c(m')\ge s_\delta\}\bigr)\\
&\subseteq\bigcup_{m'\in\mathcal{W}_c}
  \{S_c(m')\ge s_\delta\}.
\end{aligned}
\end{equation}
The second line separates the error event according to which wrong
candidate is selected. The inclusion then drops the selection
condition: accepting a wrong candidate requires at least one wrong
candidate to cross the threshold. This step avoids assigning a Gamma
law to the selected maximum; Theorem~\ref{thm:null} is applied only
to each fixed wrong candidate.

By the conditional union bound, the conditional null law, and the
definition of $s_\delta$, respectively,
\begin{equation}
\begin{aligned}
\Pr_{y,\tau}[E_c]
&\le\sum_{m'\in\mathcal{W}_c}
      \Pr_{y,\tau}[S_c(m')\ge s_\delta]\\
&=\sum_{m'\in\mathcal{W}_c}Q_r(s_\delta)\\
&=(K-1)Q_r(s_\delta)\\
&=(K-1)\frac{\delta}{K-1}=\delta.
\end{aligned}
\end{equation}
The union bound requires no independence among candidate scores.
Nor does the argument require the distribution of the correct
candidate's score: that score affects which candidate is selected,
but the event inclusion already accounts for the selection. This
also shows that the bound holds for the chosen tie-breaking rule.

\smallskip\noindent
\textbf{Removing the conditioning.}
We have proved $\Pr_{y,\tau}[E_c]\le\delta$ for every pair $(y,\tau)$
with positive probability, including $\tau=\varnothing$. Averaging
over both the emitted text and the scored positions, the law of
total probability gives
\begin{equation}
\begin{aligned}
\Pr_\kappa[E_c]
&=\mathbb{E}_\kappa\!\left[
   \Pr_\kappa[E_c\mid Y_{1:n},\mathcal{T}_c]\right]\\
&\le\mathbb{E}_\kappa[\delta]=\delta.
\end{aligned}
\end{equation}
Together with the boundary cases, this proves \eqref{eq:cert} for
every $\delta\in(0,1]$. The model, prompt, sampling settings and
message were fixed throughout; the averaging is only over the
idealised key and the text and scored positions it determines.

\smallskip\noindent
\textbf{The whole-message corollary.}
For clarity, write $D_{\alpha,c}$ for the decoder applied to chunk
$c$ at level $\alpha$, and set
\begin{equation}
E_c^{\mathrm{msg}}
=\bigl\{D_{\delta/C,c}(Y_{1:n})\notin\{m_c,\bot\}\bigr\}.
\end{equation}
The chunkwise result gives
$\Pr_\kappa[E_c^{\mathrm{msg}}]\le\delta/C$ for each
$c=1,\ldots,C$. Returning any incorrect chunk is exactly the event
$\bigcup_{c=1}^C E_c^{\mathrm{msg}}$. A second union bound yields
\begin{equation}
\Pr_\kappa\!\left[\bigcup_{c=1}^C E_c^{\mathrm{msg}}\right]
\le\sum_{c=1}^C\Pr_\kappa[E_c^{\mathrm{msg}}]
\le\sum_{c=1}^C\frac{\delta}{C}=\delta.
\end{equation}
This conclusion does not require independence between chunks.
\end{proof}

\Needspace{12\baselineskip}
\section{Payload scaling}
\label{app:scaling}

The following theorem describes how decoding changes as the number of encoded message bits increases; its exponent $\Lambda_c$ is reused by Theorem~\ref{thm:nonce}.

\begin{theorem}[Payload scaling]
\label{thm:scaling}
For $p\in(0,1]$ and $\lambda\in(0,1)$,
define
\begin{equation}
\eta_p(\lambda)\;=\;\log(1-\lambda)\;-\;\log\frac{\Gamma(1/p+1)\,\Gamma(\lambda+1)}{\Gamma(1/p+\lambda+1)},
\label{eq:eta}
\end{equation}
$\eta_p(\lambda)$ is nonincreasing in $p$ and
$\eta_1(\lambda)=\log(1-\lambda^{2})<0$.  Condition on the
emitted text $Y_{1:n}=y$ and the scored-position sets
$\{\mathcal{T}_c\}_{c=1}^C$ selected by the chunk assignments. This
fixes $n_c$ and $p_t=p_t(y_t)$ for $t\in \mathcal{T}_c$. Then
\begin{equation}
\begin{aligned}
\Pr_{\kappa}\bigl[\hat m_c\neq m_c\,\big|\,Y_{1:n}=y,\{\mathcal{T}_c\}_{c=1}^C\bigr]
&\le\bigl(2^{k_c}-1\bigr)e^{-\Lambda_c}
<\exp\bigl(k_c\ln 2-\Lambda_c\bigr),\\
\Lambda_c&=\sup_{\lambda\in(0,1)}\sum_{t\in \mathcal{T}_c}\eta_{p_t}(\lambda).
\end{aligned}
\label{eq:scaling}
\end{equation}
An empty scored set has $\Lambda_c=0$. For the whole message
$\hat m=(\hat m_1,\dots,\hat m_C)$,
\begin{equation}
\Pr_{\kappa}[\hat m\neq m\mid Y_{1:n}=y,\{\mathcal{T}_c\}_{c=1}^C]
\le\sum_{c=1}^{C}(2^{k_c}-1)e^{-\Lambda_c}.
\end{equation}
\end{theorem}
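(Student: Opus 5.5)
The plan is a Chernoff (exponential-moment) bound on the pairwise comparison between each wrong candidate and the correct one, followed by a union bound over the $2^{k_c}-1$ wrong candidates and then over chunks. Throughout I condition on $Y_{1:n}=y$ and $\{\mathcal{T}_c\}_{c=1}^C$, so that $n_c$ and the $p_t$ are fixed.

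\textbf{Step 1 (reduction to pairwise comparisons).} Whatever the tie-breaking rule, $\hat m_c\neq m_c$ requires some wrong $m'$ with $S_c(m')\ge S_c(m_c)$. For any $\lambda\in(0,1)$, Markov's inequality then gives
\[
\Pr[\hat m_c\neq m_c\mid\cdot]\le\sum_{m'\neq m_c}\mathbb{E}\bigl[e^{\lambda S_c(m')}e^{-\lambda S_c(m_c)}\,\big|\,\cdot\bigr].
\]
I then want to factorise this expectation as
\[
\mathbb{E}[e^{\lambda S_c(m')}\mid\cdot]\cdot\mathbb{E}[e^{-\lambda S_c(m_c)}\mid\cdot].
\]
The first factor is handled as in the proof of Theorem~\ref{thm:null}. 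Let $\mathcal{H}$ be the full encoder transcript together with the chunk-selection answers. Conditional on $\mathcal{H}$, the wrong-candidate uniforms are independent $U(0,1)$ variables read at unqueried inputs. The correct-candidate uniforms $u_t(y_t)$ are, by contrast, $\mathcal{H}$-measurable. So $\mathbb{E}[e^{\lambda S_c(m')}\mid\mathcal{H}]=(1-\lambda)^{-n_c}$, a constant. Towering down to the coarser conditioning therefore yields the product, with first factor $(1-\lambda)^{-n_c}$, because $\mathbb{E}[(1-U)^{-\lambda}]=1/(1-\lambda)$.

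\textbf{Step 2 (the correct candidate's joint law; the main obstacle).} Theorem~\ref{thm:gain} gives only the marginal law $\mathrm{Beta}(1/p_t,1)$ of each $u_t(y_t)$ given $y_{<t}$ and $p_t$. I need the stronger statement that, given the whole text $y$ and the scored sets, the variables $\{u_t(y_t)\}_{t\in\mathcal{T}_c}$ are independent with these Beta laws. I would prove this sequentially, in the style of Appendix~\ref{app:exact}.
\begin{itemize}
\item At step $t$, given $\mathcal{H}_t$ and $Y_t$, the proof of Theorem~\ref{thm:gain} shows that $u_t(Y_t)$ has distribution function $x^{1/p_t}$. This depends on nothing else in the history.
\item All later steps read fresh inputs (Step~2 of Appendix~\ref{app:exact}). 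So the future tokens, and the future chunk-selection answers that determine the later $\mathcal{T}_c$, are conditionally independent of $u_t(Y_t)$ given $(\mathcal{H}_t,Y_t)$.
\end{itemize}
Multiplying these conditional laws along $t=1,\dots,n$ gives a joint law in which the $u_t(y_t)$, $t\in\mathcal{T}_c$, factor as independent $\mathrm{Beta}(1/p_t,1)$ variables given $(y,\{\mathcal{T}_c\})$. This bookkeeping, making sure the conditioning on future text and future chunk assignments leaks no information about past uniforms, is where I expect the care to go.

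\textbf{Step 3 (per-token moment).} For $B\sim\mathrm{Beta}(a,1)$ with $a=1/p$,
\[
\mathbb{E}[(1-B)^{\lambda}]=a\,\mathrm{B}(a,\lambda+1)=\frac{\Gamma(a+1)\Gamma(\lambda+1)}{\Gamma(a+\lambda+1)}.
\]
Each token therefore contributes the factor
\[
\frac{1}{1-\lambda}\cdot\frac{\Gamma(1/p+1)\Gamma(\lambda+1)}{\Gamma(1/p+\lambda+1)}=e^{-\eta_{p}(\lambda)}.
\]
Multiplying over $t\in\mathcal{T}_c$, summing over the $2^{k_c}-1$ wrong candidates, and taking the infimum over $\lambda$ gives $(2^{k_c}-1)e^{-\Lambda_c}$. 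The strict inequality $(2^{k_c}-1)e^{-\Lambda_c}<\exp(k_c\ln 2-\Lambda_c)$ is just $2^{k_c}-1<2^{k_c}$. An empty scored set gives the trivial bound with $\Lambda_c=0$.

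\textbf{Properties of $\eta$.}
\begin{itemize}
\item At $p=1$: $\Gamma(2)\Gamma(\lambda+1)/\Gamma(\lambda+2)=1/(1+\lambda)$, so $\eta_1(\lambda)=\log(1-\lambda)+\log(1+\lambda)=\log(1-\lambda^2)<0$.
\item Monotonicity: the derivative of $a\mapsto\log\Gamma(a+1)-\log\Gamma(a+\lambda+1)$ is $\psi(a+1)-\psi(a+\lambda+1)<0$, since $\psi$ is increasing. As $a=1/p$ decreases when $p$ increases, that log-ratio increases in $p$. Hence $\eta_p(\lambda)$ is nonincreasing in $p$.
\end{itemize}

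\textbf{Whole message.} Finally, $\{\hat m\neq m\}=\bigcup_c\{\hat m_c\neq m_c\}$. A union bound over chunks, under the same conditioning, gives $\sum_{c=1}^C(2^{k_c}-1)e^{-\Lambda_c}$.
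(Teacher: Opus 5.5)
Your proposal is correct and follows the paper's proof essentially step for step: a Chernoff/Markov bound in which the wrong and correct scores factorise through the full transcript $\mathcal{H}$, then the joint law of the correct uniforms as independent $\mathrm{Beta}(1/p_t,1)$ variables given the whole text and scored sets, then the Beta-integral moment, and finally union bounds over candidates and chunks. For the joint law, the paper first conditions on the complete chunk-assignment table and shows that the maxima $M_t=\max_v u_t(v)^{1/p_t(v)}$ are i.i.d.\ uniform given the text, which is your sequential chain-rule argument in a different form. The only minor differences are that the paper proves monotonicity of $\eta_p$ by the coupling $B_p=U^p$ instead of the digamma derivative, and that it explicitly checks $\Lambda_c<\infty$; your strict inequality tacitly needs this, and it follows at once from $\eta_p(\lambda)\le\log(1+1/p)$.
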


\begin{proof}
Fix the model, prompt $x$, sampling settings and embedded message $m$.
We work under the continuous random-function idealisation stated after
\eqref{eq:rho}. Throughout the conditional error calculation, fix a
realisation of the emitted text $Y_{1:n}=y$ and the scored-position sets
$\{\mathcal{T}_c\}_{c=1}^C$ with positive probability. Write
$\Pr_*$ and $\mathbb{E}_*$ for probability and expectation conditional
on this pair. The sets $\mathcal{T}_c$, their sizes $n_c$, and the probabilities
$p_t=p(y_t\mid x,y_{<t})$ are then fixed. Each $p_t$ is positive,
since a token with zero sampling probability cannot appear in a text
of positive probability.

Theorem~\ref{thm:gain} gives the correct candidate's law at one step.
Here we need its joint law across scored positions after conditioning
on the \emph{entire} text and its chunk assignments. We establish that
law first, and then compare the correct score with each wrong score.

\smallskip\noindent
\textbf{The joint law of a token and the maximum used to select it.}
Let $\mathcal{A}$ denote the complete context-to-chunk assignment
table determined by the PRF inputs tagged $\mathsf{chunk}$. These
inputs are disjoint from all token-sampling inputs, so conditioning
on $\mathcal{A}$ leaves the random functions used for token sampling
independent with their original laws. For this part of the proof,
condition on $\mathcal{A}$ before generating the text.

At step $t$, let $\mathcal{H}_t$ include $\mathcal{A}$ and the full
query history before the sampling uniforms for that step are read.
As proved in Appendix~\ref{app:exact}, these sampling inputs are
distinct and have never been queried: the first occurrence of a
context uses its message tuple, while a repeated context uses a
tuple containing the new position $t$. Thus the coordinates of
$u_t(\cdot)$ are independent $U(0,1)$ variables conditional on
$\mathcal{H}_t$, and $p_t(\cdot)$ is fixed by that history.

Put $V_{t,+}=\{v:p_t(v)>0\}$ and, for $v\in V_{t,+}$, define
\begin{equation}
Z_t(v)=u_t(v)^{1/p_t(v)},\qquad
M_t=\max_{v\in V_{t,+}}Z_t(v).
\end{equation}
The equivalent sampling rule in Appendix~\ref{app:gumbel} gives
$Y_t=\arg\max_{v\in V_{t,+}}Z_t(v)$. For $0<z<1$,
\begin{equation}
\Pr_\kappa[Z_t(v)\le z\mid\mathcal{H}_t]=z^{p_t(v)},
\qquad
f_{Z_t(v)\mid\mathcal{H}_t}(z)=p_t(v)z^{p_t(v)-1}.
\end{equation}
These variables are conditionally independent, being functions of
distinct uniform coordinates, and ties have probability zero.
The event $\{M_t\le z,Y_t=v\}$ occurs when $Z_t(v)$ has some
value $s\le z$ and every competitor is at most $s$. Integrating
over $s$ gives
\begin{equation}
\begin{aligned}
\Pr_\kappa[M_t\le z,Y_t=v\mid\mathcal{H}_t]
&=\int_0^z p_t(v)s^{p_t(v)-1}
  \prod_{w\in V_{t,+}\setminus\{v\}}s^{p_t(w)}\,\mathrm{d}s\\
&=p_t(v)\int_0^z s^{\sum_{w\in V_{t,+}}p_t(w)-1}\,\mathrm{d}s\\
&=p_t(v)\int_0^z1\,\mathrm{d}s
=z\,p_t(v).
\end{aligned}
\end{equation}
The third line uses $\sum_{w\in V_{t,+}}p_t(w)=1$. The same
identity holds at $z=0,1$ by continuity, and for $p_t(v)=0$
both sides vanish. If there is only one possible token, the product
over competitors is empty and equals $1$, so this case is included.

\smallskip\noindent
\textbf{Extending this law to the entire generated text.}
Fix numbers $z_1,\ldots,z_n\in[0,1]$ and let
\begin{equation}
\mathcal{E}_j
=\{Y_{1:j}=y_{1:j},\ M_t\le z_t\text{ for }1\le t\le j\},
\qquad \mathcal{E}_0=\Omega.
\end{equation}
The event $\mathcal{E}_{j-1}$ is determined by $\mathcal{H}_j$.
On this event the prefix is $y_{<j}$, so the preceding single-step
identity and iterated conditional expectation yield
\begin{equation}
\begin{aligned}
\Pr_\kappa[\mathcal{E}_j\mid\mathcal{A}]
&=\mathbb{E}_\kappa\!\left[
  \mathbf{1}_{\mathcal{E}_{j-1}}
  \Pr_\kappa[M_j\le z_j,Y_j=y_j\mid\mathcal{H}_j]
  \,\middle|\,\mathcal{A}\right]\\
&=z_jp(y_j\mid x,y_{<j})
  \Pr_\kappa[\mathcal{E}_{j-1}\mid\mathcal{A}].
\end{aligned}
\end{equation}
Applying this recurrence for $j=1,\ldots,n$ gives
\begin{equation}
\Pr_\kappa[Y_{1:n}=y,\ M_t\le z_t\text{ for all }t\mid\mathcal{A}]
=\prod_{t=1}^n z_t p(y_t\mid x,y_{<t}).
\end{equation}
Setting every $z_t=1$ shows that
$\Pr_\kappa[Y_{1:n}=y\mid\mathcal{A}]
=\prod_{t=1}^n p(y_t\mid x,y_{<t})$.
Dividing by this positive probability therefore gives
\begin{equation}
\Pr_\kappa[M_t\le z_t\text{ for all }t\mid Y_{1:n}=y,\mathcal{A}]
=\prod_{t=1}^n z_t.
\end{equation}
This is the joint distribution function of independent uniforms.
It proves independence after conditioning on the full text, including
its future tokens, rather than only on each preceding prefix.

The pair $(y,\mathcal{A})$ determines all the scored sets $\{\mathcal{T}_c\}_{c=1}^C$.
For a fixed chunk $c$, set $z_t=1$ outside $\mathcal{T}_c$ in the last identity
and average over assignment tables consistent with $(y,\{\mathcal{T}_c\}_{c=1}^C)$.
The product over $\mathcal{T}_c$ is fixed under this conditioning, so
\begin{equation}
\Pr_*[M_t\le z_t\text{ for all }t\in \mathcal{T}_c]
=\mathbb{E}_*\!\left[\prod_{t\in \mathcal{T}_c}z_t\right]
=\prod_{t\in \mathcal{T}_c}z_t.
\end{equation}
Thus the selected $M_t$, $t\in \mathcal{T}_c$, remain independent $U(0,1)$
variables under $\Pr_*$.

At every scored position, the correct candidate reconstructs the
uniforms actually used by the encoder. Since $Y_t$ maximises $Z_t$,
\begin{equation}
B_t:=u_t^{(m_c)}(Y_t)=u_t(Y_t)=M_t^{p_t}.
\end{equation}
Consequently, for $0<b<1$,
\begin{equation}
\Pr_*[B_t\le b]
=\Pr_*[M_t\le b^{1/p_t}]=b^{1/p_t}.
\end{equation}
Writing $\alpha_t=1/p_t$, the variables $B_t$ are therefore independent
$\mathrm{Beta}(\alpha_t,1)$ variables with densities
$\alpha_t b^{\alpha_t-1}$
on $(0,1)$. In particular, the correct score has the representation
\begin{equation}
S:=S_c(m_c)=\sum_{t\in \mathcal{T}_c}X_t,
\qquad X_t=-\log(1-B_t),
\end{equation}
where the $X_t$ are conditionally independent.

\smallskip\noindent
\textbf{Independence of the wrong scores from the correct score.}
Write $\mathcal{W}_c=\{0,1\}^{k_c}\setminus\{m_c\}$ for the
wrong candidates. Let $\mathcal{H}$ be the complete encoder query
transcript together with the decoder's chunk-selection queries,
as in Appendix~\ref{app:null}. This information determines
$(Y_{1:n},\{\mathcal{T}_c\}_{c=1}^C)$ and the correct score $S$.

For $t\in \mathcal{T}_c$ and $m'\in\mathcal{W}_c$, the wrong candidate
uses the input
\begin{equation}
b_{t,m'}=\langle\mathsf{msg},c_t,m',Y_t\rangle.
\end{equation}
These inputs are distinct over all pairs $(t,m')$: scored positions
have distinct contexts, and distinct candidates have distinct
message fields. None is queried by the encoder. Indeed, a message
query with context $c_t$ uses the true value $m_c$, while a query
with another context has a different context field; the remaining
query families use different tags. This also rules out collisions
with encoder queries for other chunks, even if their true values
happen to equal $m'$.

Let $W$ be the vector of all reconstructed wrong-candidate uniforms
$u_t^{(m')}(Y_t)$, indexed by these pairs. By the random-function
argument in Appendix~\ref{app:null}, conditional on $\mathcal{H}$
its entries are independent uniforms. Hence for any vector $w$
with entries $w_{t,m'}\in[0,1]$,
\begin{equation}
\Pr_\kappa[W\le w\mid\mathcal{H}]
=\prod_{t\in \mathcal{T}_c}\prod_{m'\in\mathcal{W}_c}w_{t,m'},
\end{equation}
where $W\le w$ is interpreted coordinatewise. The right-hand side
is constant once the scored sets have been fixed. Since $S$ is
determined by $\mathcal{H}$, iterated conditional expectation gives
\begin{equation}
\begin{aligned}
\Pr_*[S\le s,W\le w]
&=\mathbb{E}_*\!\left[
  \mathbf{1}_{\{S\le s\}}
  \Pr_\kappa[W\le w\mid\mathcal{H}]\right]\\
&=\Pr_*[S\le s]
  \prod_{t\in \mathcal{T}_c}\prod_{m'\in\mathcal{W}_c}w_{t,m'}.
\end{aligned}
\end{equation}
This factorisation proves that $W$ is independent of $S$ under
$\Pr_*$, and that all its coordinates remain independent uniforms.
Each wrong score is a function of its own coordinates of $W$.
Thus, when $n_c\ge1$, each $S_c(m')$ is independent of $S$ and
has the $\Gamma(n_c,1)$ law derived in Appendix~\ref{app:null}.
The wrong scores are also mutually independent, although the union
bound below does not require that additional fact.

\smallskip\noindent
\textbf{Computing the two exponential moments.}
Assume for now that $n_c\ge1$, and fix $\lambda\in(0,1)$.
For a wrong candidate, integrate against its Gamma density:
\begin{equation}
\begin{aligned}
\mathbb{E}_*[e^{\lambda S_c(m')}]
&=\frac{1}{(n_c-1)!}
  \int_0^\infty s^{n_c-1}e^{-(1-\lambda)s}\,\mathrm{d}s\\
&=\frac{(1-\lambda)^{-n_c}}{(n_c-1)!}
  \int_0^\infty q^{n_c-1}e^{-q}\,\mathrm{d}q\\
&=(1-\lambda)^{-n_c}.
\end{aligned}
\end{equation}
The second line substitutes $q=(1-\lambda)s$, and the last integral
is $\Gamma(n_c)=(n_c-1)!$. The restriction $\lambda<1$ ensures
convergence. This is the moment-generating function of the wrong
score, evaluated at the positive argument $\lambda$.

For a correct-score contribution with $a=1/p$, define
\begin{equation}
\begin{aligned}
L_p(\lambda)
&:=\mathbb{E}[(1-B)^\lambda],\qquad B\sim\mathrm{Beta}(a,1),\\
L_p(\lambda)
&=a\int_0^1 b^{a-1}(1-b)^\lambda\,\mathrm{d}b\\
&=a\,\frac{\Gamma(a)\Gamma(\lambda+1)}
              {\Gamma(a+\lambda+1)}\\
&=\frac{\Gamma(a+1)\Gamma(\lambda+1)}
         {\Gamma(a+\lambda+1)}.
\end{aligned}
\end{equation}
Here the integral is Euler's beta integral, and the last equality
uses $\Gamma(a+1)=a\Gamma(a)$. For completeness, the beta identity
for $a,b>0$ follows from the Gamma integrals by the substitution
$u=rz$, $v=r(1-z)$, whose Jacobian is $r$:
\begin{equation}
\begin{aligned}
\Gamma(a)\Gamma(b)
&=\int_0^\infty\!\int_0^\infty
  u^{a-1}v^{b-1}e^{-(u+v)}\,\mathrm{d}u\,\mathrm{d}v\\
&=\left(\int_0^\infty r^{a+b-1}e^{-r}\,\mathrm{d}r\right)
  \left(\int_0^1z^{a-1}(1-z)^{b-1}\,\mathrm{d}z\right)\\
&=\Gamma(a+b)\int_0^1z^{a-1}(1-z)^{b-1}\,\mathrm{d}z.
\end{aligned}
\end{equation}
All integrands are nonnegative, which justifies separating these
integrals; division by $\Gamma(a+b)>0$ gives the identity used above
with $b=\lambda+1$.
Since $e^{-\lambda X_t}=(1-B_t)^\lambda$, we have established
\begin{equation}
\mathbb{E}_*[e^{-\lambda X_t}]=L_{p_t}(\lambda),
\qquad
\eta_p(\lambda)=\log(1-\lambda)-\log L_p(\lambda).
\end{equation}

\Needspace{19\baselineskip}
\smallskip\noindent
\textbf{Bounding the probability that one wrong score wins.}
For a fixed $m'\in\mathcal{W}_c$, the positivity of $\lambda$
gives
\begin{equation}
\{S_c(m')\ge S\}
=\{e^{\lambda(S_c(m')-S)}\ge1\}.
\end{equation}
Markov's inequality applies to this nonnegative exponential.
Using independence of $S_c(m')$ and $S$, and then independence
of the correct-score contributions, yields
\begin{equation}
\begin{aligned}
\Pr_*[S_c(m')\ge S]
&\le\mathbb{E}_*[e^{\lambda(S_c(m')-S)}]\\
&=\mathbb{E}_*[e^{\lambda S_c(m')}]
  \mathbb{E}_*[e^{-\lambda S}]\\
&=(1-\lambda)^{-n_c}
  \prod_{t\in \mathcal{T}_c}\mathbb{E}_*[e^{-\lambda X_t}]\\
&=(1-\lambda)^{-n_c}\prod_{t\in \mathcal{T}_c}L_{p_t}(\lambda)\\
&=\exp\!\left(-\sum_{t\in \mathcal{T}_c}
  [\log(1-\lambda)-\log L_{p_t}(\lambda)]\right)\\
&=\exp\!\left(-\sum_{t\in \mathcal{T}_c}\eta_{p_t}(\lambda)\right).
\end{aligned}
\end{equation}
The fifth line uses $|\mathcal{T}_c|=n_c$, placing one factor
$(1-\lambda)^{-1}$ with each term of the product.

\smallskip\noindent
\textbf{Optimising the exponent.}
Write $F_c(\lambda)=\sum_{t\in \mathcal{T}_c}\eta_{p_t}(\lambda)$.
For each fixed $p>0$, dominated convergence in the definition of
$L_p$ gives
\begin{equation}
\lim_{\lambda\downarrow0}L_p(\lambda)=1,
\qquad
\lim_{\lambda\uparrow1}L_p(\lambda)
=\mathbb{E}[1-B]=\frac{1}{1/p+1}>0.
\end{equation}
The integrands are bounded by $1$, and the last value also follows
by setting $\lambda=1$ in the Gamma ratio above. Thus $F_c$ is
continuous on $(0,1)$, tends to $0$ at the left endpoint, and,
when $n_c\ge1$, tends to $-\infty$ at the right endpoint.
Continuity on each compact subinterval, together with these
endpoint limits, shows that
\begin{equation}
0\le\Lambda_c:=\sup_{0<\lambda<1}F_c(\lambda)<\infty.
\end{equation}
The pairwise error bound holds for every $\lambda\in(0,1)$.
Taking the infimum of its right-hand side, and using continuity
and monotonicity of the exponential, therefore gives
\begin{equation}
\Pr_*[S_c(m')\ge S]
\le\inf_{0<\lambda<1}e^{-F_c(\lambda)}
=e^{-\sup_{0<\lambda<1}F_c(\lambda)}
=e^{-\Lambda_c}.
\end{equation}
No maximiser inside $(0,1)$ is required: a sequence of values
approaching the supremum gives the same bound.

\smallskip\noindent
\textbf{From pairwise comparisons to chunk and message errors.}
Fix any rule for breaking ties in the decoder's maximisation.
If $\hat m_c\neq m_c$, the selected wrong candidate has a score
at least as large as $S_c(m_c)=S$. Hence
\begin{equation}
\{\hat m_c\neq m_c\}
\subseteq\bigcup_{m'\in\mathcal{W}_c}\{S_c(m')\ge S\}.
\end{equation}
This inclusion remains valid even if a tie occurs. The conditional
union bound and $|\mathcal{W}_c|=2^{k_c}-1$ now give
\begin{equation}
\begin{aligned}
\Pr_*[\hat m_c\neq m_c]
&\le\sum_{m'\in\mathcal{W}_c}\Pr_*[S_c(m')\ge S]\\
&\le(2^{k_c}-1)e^{-\Lambda_c}\\
&<2^{k_c}e^{-\Lambda_c}
=\exp(k_c\ln2-\Lambda_c).
\end{aligned}
\end{equation}
The strict inequality uses $2^{k_c}-1<2^{k_c}$ and
$e^{-\Lambda_c}>0$, the latter following from finiteness of
$\Lambda_c$. This proves \eqref{eq:scaling} when $n_c\ge1$.

If $n_c=0$, every candidate score is the empty sum $0$, and
$\Lambda_c=\sup_{0<\lambda<1}0=0$. If there is at least one
wrong candidate, the claimed bound is valid because
\begin{equation}
\Pr_*[\hat m_c\neq m_c]\le1
\le2^{k_c}-1=(2^{k_c}-1)e^{-\Lambda_c}.
\end{equation}
If there are no wrong candidates, the decoder can only select
$m_c$, and both the error probability and the first bound are
$0$, for any value of $n_c$. These observations cover all
degenerate cases.

All chunks were analysed under the same conditioning on
$(Y_{1:n},\{\mathcal{T}_c\}_{c=1}^C)$. Since the decoded concatenation differs
from $m$ precisely when at least one chunk differs, a second
conditional union bound yields
\begin{equation}
\begin{aligned}
\Pr_*[\hat m\neq m]
&=\Pr_*\!\left[\bigcup_{c=1}^C\{\hat m_c\neq m_c\}\right]\\
&\le\sum_{c=1}^C\Pr_*[\hat m_c\neq m_c]\\
&\le\sum_{c=1}^C(2^{k_c}-1)e^{-\Lambda_c}.
\end{aligned}
\end{equation}
No independence between chunks is needed. If one conditions only
on the text, the scored sets are still random, so the corresponding
bound is obtained by averaging their right-hand side:
\begin{equation}
\Pr_\kappa[\hat m\neq m\mid Y_{1:n}=y]
\le\mathbb{E}_\kappa\!\left[
  \sum_{c=1}^C(2^{k_c}-1)e^{-\Lambda_c}
  \,\middle|\,Y_{1:n}=y\right].
\end{equation}

\smallskip\noindent
\textbf{Monotonicity of $\eta_p(\lambda)$ in the token probability.}
Fix $\lambda\in(0,1)$ and let $U\sim U(0,1)$. The variable
$B_p=U^p$ has distribution function $b^{1/p}$ on $(0,1)$,
so $B_p\sim\mathrm{Beta}(1/p,1)$ and
\begin{equation}
L_p(\lambda)=\mathbb{E}[(1-U^p)^\lambda].
\end{equation}
This represents the distributions for different $p$ using the
same uniform variable. If $0<p_1<p_2\le1$, then for every
$U\in(0,1)$,
\begin{equation}
U^{p_1}>U^{p_2},\qquad
1-U^{p_1}<1-U^{p_2},\qquad
(1-U^{p_1})^\lambda<(1-U^{p_2})^\lambda.
\end{equation}
Taking expectations gives $L_{p_1}(\lambda)<L_{p_2}(\lambda)$.
Since the logarithm is strictly increasing, subtracting these
logarithms from the same value $\log(1-\lambda)$ gives
$\eta_{p_1}(\lambda)>\eta_{p_2}(\lambda)$.
In particular, $\eta_p(\lambda)$ is nonincreasing in $p$, as claimed.

\smallskip\noindent
\textbf{The endpoint $p=1$.}
When $p=1$, the Beta variable is uniform and
\begin{equation}
L_1(\lambda)
=\int_0^1(1-u)^\lambda\,\mathrm{d}u
=\left[-\frac{(1-u)^{\lambda+1}}{\lambda+1}\right]_0^1
=\frac{1}{\lambda+1}.
\end{equation}
Substitution into the definition of $\eta$ yields
\begin{equation}
\eta_1(\lambda)
=\log(1-\lambda)+\log(1+\lambda)
=\log(1-\lambda^2)<0,
\end{equation}
because $0<1-\lambda^2<1$. For example, if every scored token
has $p_t=1$, then $F_c(\lambda)=n_c\log(1-\lambda^2)$ and
$\Lambda_c=0$, approached as $\lambda\downarrow0$ when $n_c>0$.
This also illustrates why the supremum in the theorem need not
be attained inside the open interval.
\end{proof}

\Needspace{12\baselineskip}
\section{Proof of Theorem~\ref{thm:llr}}
\label{app:llr}

\noindent The main text states only the final log-likelihood ratio. The full
statement below additionally gives the conditional candidate laws, the product
likelihood, the maximum-likelihood and most-powerful-test conclusions, and the
certified weighted decoder; it keeps the theorem's number and the main
log-likelihood equation number.
\begingroup
\setcounter{savedtheorem}{\value{theorem}}%
\setcounterref{theorem}{thm:llr}\addtocounter{theorem}{-1}%
\def\theHtheorem{full.\thetheorem}\def\theHequation{full.\arabic{equation}.\the\inputlineno}%
\begin{theorem}[Exact likelihood ratio, full statement]
\label{thm:llr_full}
Treat $\PRF$ as a random function. Fix a chunk $c$ with $k_c\ge1$
and condition on the emitted text and the scored-position sets
$\{\mathcal{T}_c\}_{c=1}^C$, fixing $p_t=p_t(y_t)$ for $t\in \mathcal{T}_c$.
All scored uniforms $u_t^{(m')}(y_t)$ are conditionally independent
across positions and candidates, with density
$f_{\mathrm{correct}}(\cdot\mid p_t)$ when $m'=m_c$ and density
$f_{\mathrm{wrong}}\equiv1$ otherwise. For a fixed candidate $m'$,
the likelihood ratio of $H_1:m_c=m'$ to $H_0:m_c\neq m'$, based
on that candidate's scored uniforms, is
\begin{equation}
L_c(m')\;=\;\prod_{t\in \mathcal{T}_c}\frac{f_{\mathrm{correct}}\bigl(u^{(m')}_t(y_t)\mid p_t\bigr)}{f_{\mathrm{wrong}}\bigl(u^{(m')}_t(y_t)\bigr)}\;=\;\prod_{t\in \mathcal{T}_c}f_{p_t}\bigl(u^{(m')}_t(y_t)\bigr),
\label{eq:lr_prod}
\end{equation}
and its logarithm, the quantity the decoder computes, is the sum
\begin{equation}
\begin{aligned}
\ell_c(m')=\log L_c(m')
&=\sum_{t\in \mathcal{T}_c}\log f_{p_t}\bigl(u^{(m')}_t(y_t)\bigr)\\
&=\sum_{t\in \mathcal{T}_c}\Bigl[
  \bigl(\tfrac{1}{p_t}-1\bigr)\log u^{(m')}_t(y_t)
  +\log\tfrac{1}{p_t}\Bigr].
\end{aligned}
\tag{\ref{eq:llr}}
\end{equation}
The decoder $\arg\max_{m'}\ell_c(m')$ is the conditional
maximum-likelihood decoder based on all candidates' scored uniforms.
For distinct $m',m''$, the test of $H_0:m_c=m''$ against
$H_1:m_c=m'$ that rejects for large $\ell_c(m')-\ell_c(m'')$
is most powerful at its conditional level, with randomisation at
the threshold if needed.

More generally, let $w_t\ge0$ be finite weights determined by the
text, prompt and model, and write
\begin{equation}
R_{c,w}(m')=\sum_{t\in \mathcal{T}_c}w_t\log u_t^{(m')}(y_t),
\qquad \hat m_{c,w}=\arg\max_{m'}R_{c,w}(m').
\end{equation}
Under a wrong candidate, $-R_{c,w}(m')$ has the law of
$Z_{c,w}=\sum_{t\in \mathcal{T}_c:w_t>0}w_tE_t$, where the $E_t$ are
independent $\mathrm{Exp}(1)$ variables; zero weights contribute
$0$. Let $F_{c,w}(z)=\Pr[Z_{c,w}\le z]$. The certificate
\begin{equation}
\hat\delta_{c,w}(y)
=\min\bigl\{1,(2^{k_c}-1)
  F_{c,w}\bigl(-R_{c,w}(\hat m_{c,w})\bigr)\bigr\},
\end{equation}
set to $1$ if all weights are zero, gives the guarantee
\eqref{eq:cert} when the decoder returns $\hat m_{c,w}$ if
$\hat\delta_{c,w}\le\delta$ and abstains otherwise.
\end{theorem}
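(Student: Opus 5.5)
The plan has four steps. First establish the conditional joint law of all scored uniforms. Then read off the likelihood ratio and its optimality. Next derive the null law of the weighted score. Finally rerun the union-bound argument of Theorem~\ref{thm:cert}.

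\emph{Step 1: the joint law.} I would reuse the proof of Theorem~\ref{thm:scaling} (Appendix~\ref{app:scaling}) essentially verbatim. Conditional on the text and $\{\mathcal{T}_c\}_{c=1}^C$, it shows two things. The correct-candidate values $B_t=u_t(y_t)=M_t^{p_t}$, $t\in\mathcal{T}_c$, are independent $\mathrm{Beta}(1/p_t,1)$. The vector $W$ of all wrong-candidate values is independent of them, with i.i.d.\ $U(0,1)$ entries. The inputs in $W$ are distinct across $(t,m')$ and were never queried by the encoder (Appendix~\ref{app:null}). Together these give the claimed conditional independence across positions and candidates, with densities $f_{\mathrm{correct}}(\cdot\mid p_t)$ and $f_{\mathrm{wrong}}\equiv1$.

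\emph{Step 2: the likelihood ratio and its optimality.} Under the hypothesis that the true chunk value is $m'$, the joint density of all candidates' uniforms factorises. It equals $\prod_{t}f_{\mathrm{correct}}(u^{(m')}_t\mid p_t)$ times a product of ones over every other candidate. The hypotheses $H_1$ and $H_0$ differ only in which candidate carries the Beta factors, so the ratio is \eqref{eq:lr_prod}. Taking logarithms of $p^{-1}u^{1/p-1}$ gives \eqref{eq:llr}.

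The same factorisation shows that the full likelihood under ``$m_c=m'$'' equals $L_c(m')$. The argmax of $\ell_c$ is therefore the conditional maximum-likelihood decoder. For two simple hypotheses $m_c=m''$ and $m_c=m'$, the likelihood ratio of the full data is $L_c(m')/L_c(m'')$. The Neyman--Pearson lemma, with randomisation at the threshold, then gives the most-powerful claim.

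\emph{Step 3: the weighted null law.} For a wrong $m'$, set $E_t=-\log u^{(m')}_t(y_t)$. By Step 1 these are i.i.d.\ $\mathrm{Exp}(1)$, so $-R_{c,w}(m')=\sum_t w_tE_t$. This has the law of $Z_{c,w}$, where zero weights drop out. Because the $w_t$ are fixed by the conditioning, $F_{c,w}$ is a deterministic, continuous distribution function whenever some weight is positive.

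\emph{Step 4: the certificate.} The main obstacle is here. $F_{c,w}$ is continuous and strictly increasing on $[0,\infty)$, but I would not rely on inverting it. Instead I would argue directly. Acceptance of a wrong $\hat m_{c,w}=m'$ requires $(2^{k_c}-1)F_{c,w}(-R_{c,w}(m'))\le\delta$. For a continuous null distribution function, the random variable $F_{c,w}(Z)$ is $U(0,1)$ (probability integral transform), so $\Pr[F_{c,w}(-R_{c,w}(m'))\le\delta/(2^{k_c}-1)]=\delta/(2^{k_c}-1)$. A union bound over the $2^{k_c}-1$ wrong candidates gives $\delta$, exactly as in Appendix~\ref{app:cert}.

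Two cases remain. If all weights are zero, the decoder abstains. Finally, averaging over $(y,\{\mathcal{T}_c\})$ removes the conditioning and yields \eqref{eq:cert}. Taking $w_t=1/p_t-1$ recovers the $\ell_c$ decoder, since $A_c$ is constant across candidates.
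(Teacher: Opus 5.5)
Your proposal is correct and follows the paper's proof essentially step for step. The steps match: the joint conditional law taken from Appendix~\ref{app:scaling}, where the wrong-candidate array is independent of the correct column because that column is determined by the transcript $\mathcal{H}$; the factorised density $\prod_t f_{p_t}(u_{t,m'})$, which gives the likelihood ratio, maximum likelihood and Neyman--Pearson; the weighted exponential null; and the union bound over the $2^{k_c}-1$ wrong candidates.

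The only variation is in the last step. You use the probability integral transform $\Pr[F_{c,w}(Z)\le q]=q$, whereas the paper solves for an explicit threshold $z_\delta$ with $F_{c,w}(z_\delta)=\delta/(2^{k_c}-1)$. The two are equivalent here, and yours does not even need strict monotonicity. You should, however, dispose of $\delta=1$ separately: the clipped certificate's acceptance event reduces to $(2^{k_c}-1)F_{c,w}\le\delta$ only when $\delta<1$.
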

\setcounter{theorem}{\value{savedtheorem}}%
\endgroup

\begin{proof}
Fix the model, prompt, sampling settings and the values carried by all
chunks other than $c$. Let $\mathcal{M}_c=\{0,1\}^{k_c}$ and
$K=|\mathcal{M}_c|\ge2$. To compare the possible values of chunk $c$,
write $H_a$ for the hypothesis that $m_c=a$, where
$a\in\mathcal{M}_c$. This comparison does not require a prior
distribution on the message.

Condition on a realised text $Y_{1:n}=y$ and scored-position sets
$\{\mathcal{T}_c\}_{c=1}^C$ with positive probability. These fix
$\mathcal{T}_c$, $n_c$ and $p_t=p(y_t\mid x,y_{<t})\in(0,1]$. Appendix~\ref{app:scaling}
shows that the text law conditional on the chunk-assignment table is
$\prod_t p(y_t\mid x,y_{<t})$, independently of the message. The
assignment table itself is also independent of the message. Hence
$(Y_{1:n},\{\mathcal{T}_c\}_{c=1}^C)$ has the same law under every $H_a$, and the
same conditioning is valid for all candidates. Denote conditional
probability and expectation under $H_a$ by $\Pr_a$ and $\mathbb{E}_a$.

\smallskip\noindent
\textbf{The joint observation and its conditional density.}
For $t\in \mathcal{T}_c$ and $b\in\mathcal{M}_c$, put
\begin{equation}
U_{t,b}=u_t^{(b)}(y_t),\qquad
U_b=(U_{t,b})_{t\in \mathcal{T}_c},\qquad
\mathcal{U}=(U_b)_{b\in\mathcal{M}_c}.
\end{equation}
Thus $U_b$ is the column of scored uniforms for candidate $b$,
and $\mathcal{U}$ is the array used to compare all candidates.

Under $H_a$, Appendix~\ref{app:scaling} proves that the coordinates
of $U_a$ are independent, with distribution functions $u^{1/p_t}$
on $(0,1)$. Differentiation gives their densities:
\begin{equation}
\frac{\mathrm{d}}{\mathrm{d}u}u^{1/p_t}
=\frac{1}{p_t}u^{1/p_t-1}=f_{p_t}(u).
\end{equation}
The other columns consist of independent uniforms and are also
independent of $U_a$. To justify this last joint assertion, let
$\mathcal{H}$ contain the complete generation transcript and the
decoder's chunk-selection queries, as in Appendix~\ref{app:null}.
It determines both the conditioning information and the correct
column $U_a$. Every entry in the remaining columns is read at a
distinct input that was never queried during generation:
at context $c_t$, the encoder uses $a$, while a wrong column
uses $b\neq a$. Distinct retained contexts and the domain-separating
tags rule out the other possible input collisions.

Consequently, conditional on $\mathcal{H}$, the array
$U_{-a}=(U_b)_{b\neq a}$ has the product uniform law, denoted by
$\nu$, independently of the values in that transcript. For any
measurable sets $A$ and $B$, iterated conditional expectation gives
\begin{equation}
\begin{aligned}
\Pr_a[U_a\in A,U_{-a}\in B]
&=\mathbb{E}_a\!\left[
  \mathbf{1}_{\{U_a\in A\}}
  \Pr_a[U_{-a}\in B\mid\mathcal{H}]\right]\\
&=\mathbb{E}_a\!\left[
  \mathbf{1}_{\{U_a\in A\}}\nu(B)\right]\\
&=\Pr_a[U_a\in A]\,\nu(B).
\end{aligned}
\end{equation}
This factorisation establishes the required independence. In
particular, for $n_c\ge1$, the conditional density of the whole
array at $u=(u_{t,b})\in(0,1)^{n_cK}$ under $H_a$ is
\begin{equation}
\begin{aligned}
q_a(u)
&=\left(\prod_{t\in \mathcal{T}_c}f_{p_t}(u_{t,a})\right)
  \left(\prod_{b\in\mathcal{M}_c\setminus\{a\}}
        \prod_{t\in \mathcal{T}_c}1\right)\\
&=\prod_{t\in \mathcal{T}_c}f_{p_t}(u_{t,a}).
\end{aligned}
\end{equation}
All these densities are positive on the same observation space.

\smallskip\noindent
\textbf{The likelihood ratio for one candidate.}
Fix a candidate $b$ and consider only its column $U_b$. Under
$H_1:m_c=b$, its density at $z=(z_t)_{t\in \mathcal{T}_c}$ is
\begin{equation}
h_1(z)=\prod_{t\in \mathcal{T}_c}f_{p_t}(z_t).
\end{equation}
Under any hypothesis $H_a$ with $a\neq b$, this column instead
has density
\begin{equation}
h_0(z)=\prod_{t\in \mathcal{T}_c}1=1.
\end{equation}
Although $H_0:m_c\neq b$ contains several possible messages,
all of them give exactly this same distribution for $U_b$.
Thus this candidate-specific likelihood ratio is unambiguous
without choosing a distribution over the wrong messages.
Evaluating $h_1/h_0$ at the observed column gives
\begin{equation}
\frac{h_1(U_b)}{h_0(U_b)}
=\prod_{t\in \mathcal{T}_c}
  \frac{f_{\mathrm{correct}}(U_{t,b}\mid p_t)}
       {f_{\mathrm{wrong}}(U_{t,b})}
=\prod_{t\in \mathcal{T}_c}f_{p_t}(U_{t,b})
=L_c(b),
\end{equation}
which proves \eqref{eq:lr_prod}.

Every factor is positive and finite because $p_t>0$ and
$0<U_{t,b}<1$. Taking logarithms is therefore valid, and gives
\begin{equation}
\begin{aligned}
\ell_c(b)
&=\log\prod_{t\in \mathcal{T}_c}
  \left[p_t^{-1}U_{t,b}^{\,1/p_t-1}\right]\\
&=\sum_{t\in \mathcal{T}_c}
  \left[\log p_t^{-1}+
        \log U_{t,b}^{\,1/p_t-1}\right]\\
&=\sum_{t\in \mathcal{T}_c}
  \left[\left(\frac1{p_t}-1\right)\log U_{t,b}
        +\log\frac1{p_t}\right].
\end{aligned}
\end{equation}
The second line uses the logarithm of a finite product, and the
third uses $\log u^\beta=\beta\log u$ for $u>0$. This proves
\eqref{eq:llr}. If $p_t=1$, both coefficients in that token's
contribution vanish, so its log-likelihood contribution is $0$.
If $\mathcal{T}_c$ is empty, the product is $1$ and the sum is $0$ for
every candidate.

\smallskip\noindent
\textbf{Maximum likelihood from the full array of scored uniforms.}
The preceding candidate-specific ratio uses one column at a time.
To justify comparing candidates, we use the density of the same
full observation $\mathcal{U}$ under each hypothesis. The joint
density already derived satisfies
\begin{equation}
q_a(\mathcal{U})
=\prod_{t\in \mathcal{T}_c}f_{p_t}(U_{t,a})
=L_c(a).
\end{equation}
Hence, as sets of maximisers,
\begin{equation}
\arg\max_{a\in\mathcal{M}_c}q_a(\mathcal{U})
=\arg\max_{a\in\mathcal{M}_c}L_c(a)
=\arg\max_{a\in\mathcal{M}_c}\ell_c(a).
\end{equation}
The last equality holds because the logarithm is strictly
increasing. Choosing any fixed rule among tied maximisers thus
gives a conditional maximum-likelihood decoder for this
observation. The observation here consists of the scored
uniforms for all candidates, together with the fixed text,
scored sets and model probabilities.

\smallskip\noindent
\textbf{The pairwise test and its rejection direction.}
Fix two distinct candidates $b,d$. For the simple hypotheses
\begin{equation}
H_0:m_c=d,\qquad H_1:m_c=b,
\end{equation}
the likelihood ratio based on $\mathcal{U}$ is
\begin{equation}
\mathcal{R}(\mathcal{U})
=\frac{q_b(\mathcal{U})}{q_d(\mathcal{U})}
=\frac{L_c(b)}{L_c(d)}
=\exp\bigl(\ell_c(b)-\ell_c(d)\bigr).
\end{equation}
A large value therefore favours $H_1$, so the rejection region
for $H_0$ uses a large value of $\ell_c(b)-\ell_c(d)$.

For completeness, we prove the most-powerful assertion. Let
$\alpha\in(0,1)$ be the prescribed conditional test level.
Choose a threshold $\tau>0$ satisfying
\begin{equation}
\Pr_d[\mathcal{R}>\tau]\le\alpha
\le\Pr_d[\mathcal{R}\ge\tau].
\end{equation}
Such a threshold is a quantile of the positive, finite random
variable $\mathcal{R}$. Define the rejection probability
\begin{equation}
\phi^*(u)=
\begin{cases}
1,&\mathcal{R}(u)>\tau,\\
\gamma,&\mathcal{R}(u)=\tau,\\
0,&\mathcal{R}(u)<\tau,
\end{cases}
\end{equation}
where $\gamma\in[0,1]$ is chosen so that
\begin{equation}
\mathbb{E}_d[\phi^*]
=\Pr_d[\mathcal{R}>\tau]
 +\gamma\Pr_d[\mathcal{R}=\tau]
=\alpha.
\end{equation}
If the boundary has positive probability, take
$\gamma=(\alpha-\Pr_d[\mathcal{R}>\tau])/
\Pr_d[\mathcal{R}=\tau]$; the quantile inequalities place this
number in $[0,1]$. If the boundary has probability zero, the
tail probability already equals $\alpha$ and any $\gamma$ works.

Let $\phi(u)\in[0,1]$ be any other test using this observation
with $\mathbb{E}_d[\phi]\le\alpha$. At every $u$,
\begin{equation}
(\phi^*(u)-\phi(u))(q_b(u)-\tau q_d(u))\ge0.
\end{equation}
Indeed, above the threshold both factors are nonnegative,
below it both are nonpositive, and on the boundary the second
factor is zero. Integrating this inequality over the observation
space yields
\begin{equation}
\begin{aligned}
\mathbb{E}_b[\phi^*]-\mathbb{E}_b[\phi]
&\ge\tau\bigl(\mathbb{E}_d[\phi^*]
             -\mathbb{E}_d[\phi]\bigr)\\
&=\tau\bigl(\alpha-\mathbb{E}_d[\phi]\bigr)\ge0.
\end{aligned}
\end{equation}
Thus $\phi^*$ has at least as much power as every competing
level-$\alpha$ test. This is the Neyman--Pearson argument in
the present conditional experiment. The endpoint levels
$\alpha=0,1$ are handled by never rejecting and always
rejecting, respectively; positivity of both densities ensures
that a level-zero test also has zero power.

If $\mathcal{T}_c=\varnothing$, the observation is an empty array and all
likelihoods are $1$. The same decision problem then consists
only of randomisation. Likewise, if every scored token has
$p_t=1$, all hypotheses induce the same product uniform law,
$\ell_c(b)-\ell_c(d)=0$, and a size-$\alpha$ test has power
$\alpha$. Boundary randomisation includes these cases.

\smallskip\noindent
\textbf{The exact null distribution of a weighted log score.}
Return to a fixed true message $m_c$, and let the finite weights
$w_t\ge0$ be as in the theorem. Their dependence only on the
text, prompt and model makes them fixed under our conditioning.
For a wrong candidate $b\neq m_c$, define
\begin{equation}
X_{t,w}=-w_t\log U_{t,b},\qquad
Z_b=\sum_{t\in \mathcal{T}_c}X_{t,w}=-R_{c,w}(b).
\end{equation}
If $w_t>0$, then for $z\ge0$ the uniform null law gives
\begin{equation}
\begin{aligned}
\Pr_{m_c}[X_{t,w}>z]
&=\Pr_{m_c}[-w_t\log U_{t,b}>z]\\
&=\Pr_{m_c}[\log U_{t,b}<-z/w_t]\\
&=\Pr_{m_c}[U_{t,b}<e^{-z/w_t}]\\
&=e^{-z/w_t}.
\end{aligned}
\end{equation}
Thus $X_{t,w}$ is exponential with rate $1/w_t$ and density
$w_t^{-1}e^{-z/w_t}$ for $z>0$. If $w_t=0$, then
$X_{t,w}=0$ deterministically; no exponential rate is assigned
to that term. The summands are independent because the
$U_{t,b}$ are independent and the weights are fixed.
Equivalently,
\begin{equation}
Z_b\ \stackrel{d}{=}\ Z_{c,w}
=\sum_{t\in \mathcal{T}_c:w_t>0}w_tE_t,
\qquad E_t\ \text{independent }\mathrm{Exp}(1).
\end{equation}

This distribution is specified exactly even when positive weights
repeat. To make its computation explicit, enumerate the active
positions as $t_1,\ldots,t_r$ and put $\lambda_j=1/w_{t_j}$.
For $r\ge1$, let $g_j$ be the density of the sum of its first
$j$ exponential terms. Independence gives the convolution
recursion, for $z>0$,
\begin{equation}
g_1(z)=\lambda_1e^{-\lambda_1z},\qquad
g_j(z)=\int_0^z g_{j-1}(v)\lambda_j
                  e^{-\lambda_j(z-v)}\,\mathrm{d}v
\quad (2\le j\le r).
\end{equation}
Consequently,
\begin{equation}
F_{c,w}(z)=
\begin{cases}
0,&z<0,\\
\displaystyle\int_0^z g_r(v)\,\mathrm{d}v,&z\ge0.
\end{cases}
\end{equation}
Each $g_j$ is positive on $(0,\infty)$: this holds for $g_1$,
and the recursion integrates a positive integrand over $(0,z)$.
Hence $F_{c,w}$ is continuous and strictly increasing on
$[0,\infty)$, with $F_{c,w}(0)=0$ and limit $1$ at infinity.

An equivalent expression for the Laplace transform is
\begin{equation}
\mathbb{E}[e^{-sZ_{c,w}}]
=\prod_{j=1}^r
  \int_0^\infty\lambda_je^{-(\lambda_j+s)z}\,\mathrm{d}z
=\prod_{j=1}^r\frac{\lambda_j}{\lambda_j+s}
=\prod_{j=1}^r(1+s w_{t_j})^{-1},
\qquad s\ge0.
\end{equation}
If all active weights equal $w>0$, then
$Z_{c,w}/w\sim\Gamma(r,1)$; unequal weights are covered by
the same convolution recursion. If $r=0$, the sum is zero
with probability one.

\Needspace{12\baselineskip}
\smallskip\noindent
\textbf{The appropriate tail and the certification bound.}
The weighted decoder maximises $R_{c,w}$, whereas its known
nonnegative null variable is $Z_{c,w}=-R_{c,w}$.
For any fixed score value $s\le0$,
\begin{equation}
\Pr[-Z_{c,w}\ge s]
=\Pr[Z_{c,w}\le-s]=F_{c,w}(-s).
\end{equation}
Thus the upper tail of the score corresponds to the
\emph{lower} tail of its negation. Substituting the selected
score into this null tail and multiplying by the number of
wrong candidates gives exactly
\begin{equation}
\hat\delta_{c,w}
=\min\{1,(K-1)F_{c,w}(-R_{c,w}(\hat m_{c,w}))\}.
\end{equation}

We now verify its error guarantee, including the selection of
$\hat m_{c,w}$. At level $\delta=1$ the claim is immediate
because every error probability is at most $1$. For
$0<\delta<1$, if there are no positive weights, the certificate
is defined to be $1$ and the decoder always abstains. It
remains to consider $r\ge1$.

Set $q_\delta=\delta/(K-1)\in(0,1)$. The continuity and strict
increase of $F_{c,w}$ give a unique $z_\delta>0$ such that
\begin{equation}
F_{c,w}(z_\delta)=q_\delta.
\end{equation}
Since $\delta<1$, clipping the certificate at $1$ does not
change its acceptance event. Therefore
\begin{equation}
\begin{aligned}
\hat\delta_{c,w}\le\delta
&\quad\Longleftrightarrow\quad
F_{c,w}(-R_{c,w}(\hat m_{c,w}))\le q_\delta\\
&\quad\Longleftrightarrow\quad
-R_{c,w}(\hat m_{c,w})\le z_\delta.
\end{aligned}
\end{equation}
Let $D_{\delta,w}$ denote the decoder that returns $\hat m_{c,w}$
on this event and returns $\bot$ otherwise. If it accepts a
wrong candidate, that candidate's negated score must meet the
last inequality. Hence, under the current conditioning,
\begin{equation}
\{D_{\delta,w}(y)\notin\{m_c,\bot\}\}
\subseteq\bigcup_{b\neq m_c}\{Z_b\le z_\delta\}.
\end{equation}
Every fixed wrong candidate has the same null distribution
$F_{c,w}$. Applying the union bound thus gives
\begin{equation}
\begin{aligned}
\Pr_{m_c}[D_{\delta,w}(y)\notin\{m_c,\bot\}]
&\le\sum_{b\neq m_c}\Pr_{m_c}[Z_b\le z_\delta]\\
&=(K-1)F_{c,w}(z_\delta)\\
&=(K-1)\frac{\delta}{K-1}=\delta.
\end{aligned}
\end{equation}
This step needs no independence between candidate scores and
works with any fixed tie-breaking rule. The bound holds for
every conditioning pair of positive probability, so averaging
over both the text and the scored sets gives
\begin{equation}
\begin{aligned}
&\Pr_\kappa[D_{\delta,w}(Y_{1:n})\notin\{m_c,\bot\}]\\
&\quad=\mathbb{E}_\kappa\!\left[
  \Pr_\kappa[D_{\delta,w}(Y_{1:n})\notin\{m_c,\bot\}
  \mid Y_{1:n},\{\mathcal{T}_c\}_{c=1}^C]\right]
\le\delta.
\end{aligned}
\end{equation}
This is the guarantee \eqref{eq:cert} for the weighted decoder.

\Needspace{20\baselineskip}
\smallskip\noindent
\textbf{Applying the certificate to the model-aware score.}
Finally, define
\begin{equation}
w_t^*=\frac1{p_t}-1\ge0,\qquad
A_c=\sum_{t\in \mathcal{T}_c}\log\frac1{p_t}.
\end{equation}
The log-likelihood formula proved above becomes
\begin{equation}
\ell_c(b)=A_c+R_{c,w^*}(b).
\end{equation}
The term $A_c$ does not depend on the candidate, so maximising
$\ell_c$ is equivalent to maximising $R_{c,w^*}$. For the
selected candidate $\hat m_\ell=\arg\max_b\ell_c(b)$, its
negated weighted score is
\begin{equation}
-R_{c,w^*}(\hat m_\ell)=A_c-\ell_c(\hat m_\ell).
\end{equation}
The corresponding certificate is therefore
\begin{equation}
\min\left\{1,(K-1)
F_{c,w^*}\bigl(A_c-\ell_c(\hat m_\ell)\bigr)\right\},
\end{equation}
with value $1$ when all $w_t^*$ vanish. In particular, tokens
with $p_t=1$ have zero weight and contribute neither to the
likelihood score nor to its null sum. If all scored tokens
are deterministic, or if there are no scored positions, the
decoder abstains at every level $\delta<1$. This completes
both the likelihood and certification claims.
\end{proof}

\Needspace{12\baselineskip}
\section{Proof of Theorem~\ref{thm:robust}}
\label{app:robust}

\noindent The main text states the contamination model, the robust score and its guarantees in brief. The full statement, which the proof below establishes, follows; it keeps the theorem's number and equation number.
\begingroup
\setcounter{savedtheorem}{\value{theorem}}%
\setcounterref{theorem}{thm:robust}\addtocounter{theorem}{-1}%
\def\theHtheorem{full.\thetheorem}\def\theHequation{full.\arabic{equation}.\the\inputlineno}%
\begin{theorem}[Robust decoding under contamination, full statement]
\label{thm:robust_full}
Fix $\varepsilon\in(0,1)$ and a chunk $c$ with $k_c\ge1$.
Condition on the text and scored-position sets
$\{\mathcal{T}_c\}_{c=1}^C$, and assume $p_t=p_t(y_t)\in(0,1]$
for every $t\in \mathcal{T}_c$. In the contamination model, the labels of
scored tokens are independent, with probabilities $1-\varepsilon$
of being \emph{watermarked} and $\varepsilon$ of being \emph{foreign}.
Conditional on these labels, the scored uniforms for each fixed
candidate are independent across positions. For the true candidate,
their laws are $\mathrm{Beta}(1/p_t,1)$ at watermarked positions
and $U(0,1)$ at foreign positions; for every wrong candidate they
are $U(0,1)$ at all positions, regardless of the labels.
Then the log-likelihood ratio of $H_1:m_c=m'$ to
$H_0:m_c\neq m'$, based on candidate $m'$'s scored uniforms, is
\begin{equation}
r_c(m')\;=\;\sum_{t\in \mathcal{T}_c}\log\Bigl[(1-\varepsilon)\,f_{p_t}\bigl(u^{(m')}_t(y_t)\bigr)+\varepsilon\Bigr],
\tag{\ref{eq:robust}}
\end{equation}
and (i)~each summand lies between $\log\varepsilon$ and
$\log\bigl(\varepsilon+(1-\varepsilon)/p_t\bigr)$;
(ii)~a token with $p_t=1$ contributes exactly $0$; and
(iii)~every wrong candidate's score has the exact null law
\begin{equation}
Z_{c,\varepsilon}
=\sum_{t\in \mathcal{T}_c}\log\bigl[\varepsilon+(1-\varepsilon)f_{p_t}(U_t)\bigr],
\qquad U_t\ \text{i.i.d. }U(0,1).
\end{equation}
Write $Q_{c,\varepsilon}(s)=\Pr[Z_{c,\varepsilon}\ge s]$ and
$\hat m_{c,\varepsilon}=\arg\max_{m'}r_c(m')$. The certificate
\begin{equation}
\hat\delta_{c,\varepsilon}
=\min\bigl\{1,(2^{k_c}-1)
Q_{c,\varepsilon}\bigl(r_c(\hat m_{c,\varepsilon})\bigr)\bigr\}
\end{equation}
gives the error bound of Theorem~\ref{thm:cert} under this model:
returning $\hat m_{c,\varepsilon}$ only when
$\hat\delta_{c,\varepsilon}\le\delta$ and abstaining otherwise
has probability at most $\delta$ of returning an incorrect chunk.
If $\mathcal{T}_c$ is empty or every $p_t=1$, this certificate equals $1$.
\end{theorem}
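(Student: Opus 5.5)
The plan follows the template of Theorem~\ref{thm:llr_full}: first identify the per-token law of candidate $m'$'s uniform under each hypothesis, then take the product likelihood ratio, then read off the bounds and the null law, and finally reuse the union-bound argument of Theorem~\ref{thm:cert}.

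\textbf{Likelihood ratio.} Condition on the text and the sets $\{\mathcal{T}_c\}_{c=1}^C$, which fixes every $p_t$. Under $H_1:m_c=m'$, fix a position $t$ and marginalise its independent label. The density of $u^{(m')}_t(y_t)$ is then the mixture
\[
(1-\varepsilon)f_{p_t}(u)+\varepsilon\cdot 1 .
\]
Under any $H_a$ with $a\neq m'$, the column for $m'$ consists of $U(0,1)$ entries whatever the labels, so its density is $1$. This holds for every $a\neq m'$, so the composite alternative $H_0$ induces a single law on the column. The model states that the labels are independent across positions and that the uniforms are independent given the labels. Hence the column's joint density under $H_1$ factorises over $t$. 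Taking the ratio and its logarithm gives \eqref{eq:robust}.

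\textbf{Claims (i) and (ii).} For (i), note that $f_{p}(u)=p^{-1}u^{1/p-1}$ is nondecreasing in $u$ when $p\le1$. It takes values in $(0,1/p]$ for $p<1$ and is identically $1$ for $p=1$. Since $\log[\varepsilon+(1-\varepsilon)x]$ is increasing in $x$, the summand lies between $\log\varepsilon$ and $\log(\varepsilon+(1-\varepsilon)/p_t)$. For (ii), when $p_t=1$ we have $f_1\equiv1$, so the summand is $\log 1=0$.

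\textbf{Claim (iii).} Substitute the wrong candidate's independent $U(0,1)$ uniforms into the score. By the model, these are independent across positions whatever the labels. This gives exactly $Z_{c,\varepsilon}$, whose law depends only on the fixed $p_t$ and $\varepsilon$. This is the step where the assumption "wrong candidates are uniform regardless of labels" does all the work. In the actual scheme it is justified as in Appendix~\ref{app:null}: the wrong-candidate PRF inputs are never queried, even on edited text.

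\textbf{Certificate.} Repeat the proof of Theorem~\ref{thm:cert} with $Q_r$ replaced by $Q_{c,\varepsilon}$, noting one complication.

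\textbf{Main obstacle.} $Q_{c,\varepsilon}$ need not be continuous or strictly decreasing. Summands with $p_t=1$ are atoms at $0$, and if all $p_t=1$ the null law is a point mass. So instead of solving $Q=\delta/(K-1)$ exactly, I will use a threshold $s^*=\inf\{s:(K-1)Q_{c,\varepsilon}(s)\le\delta\}$ and argue in three steps.
\begin{itemize}
\item[1.] $Q_{c,\varepsilon}$ is nonincreasing and left-continuous, since it is the survival function $\Pr[Z\ge s]$. Left-continuity gives $(K-1)Q_{c,\varepsilon}(s^*)\le\delta$ whenever the infimum is finite. So accepting means $r_c(\hat m)\ge s^*$, up to the boundary where monotonicity gives the needed inclusion.
\item[2.] An accepted wrong candidate $b$ must therefore satisfy $r_c(b)\ge s^*$, and the union bound gives at most $(K-1)Q_{c,\varepsilon}(s^*)\le\delta$.
\item[3.] Average over the conditioning to remove it.
\end{itemize}

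\textbf{Degenerate cases.} If $\mathcal{T}_c$ is empty or every $p_t=1$, all scores are $0$ and $Q_{c,\varepsilon}(0)=1$. The certificate is then $\min\{1,K-1\}=1$, since $k_c\ge1$, so the decoder abstains for every $\delta<1$.
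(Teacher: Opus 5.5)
Your likelihood-ratio derivation, claims (i)--(iii), and the degenerate case all match the paper's proof. The gap is in the threshold step you flag as the main obstacle.

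Left-continuity of $Q_{c,\varepsilon}(s)=\Pr[Z_{c,\varepsilon}\ge s]$ gives the opposite of the inequality you claim. Every $s<s^*$ lies outside the acceptance set, so
$(K-1)Q_{c,\varepsilon}(s^*)=\lim_{s\uparrow s^*}(K-1)Q_{c,\varepsilon}(s)\ge\delta$. Whenever $Q_{c,\varepsilon}$ jumps at $s^*$, the bound $(K-1)Q_{c,\varepsilon}(s^*)\le\delta$ fails. The union bound over the events $\{r_c(b)\ge s^*\}$ then no longer yields $\delta$. Your own point-mass example shows this. If $Z_{c,\varepsilon}\equiv0$ and $\delta<1$, the acceptance set $\{s:(K-1)Q_{c,\varepsilon}(s)\le\delta\}$ is $(0,\infty)$. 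So $s^*=0$, while $(K-1)Q_{c,\varepsilon}(0)=K-1>\delta$. The device therefore breaks in exactly the situation it was introduced for, and your proof survives only because you treat that situation separately.

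Either of two repairs closes the gap.

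\emph{The paper's route.} Your premise is mistaken: a summand with $p_t=1$ is identically $0$, not an atom that survives in the sum. Once some $p_t<1$, that summand has the continuous distribution function $\bigl(p_t(e^x-\varepsilon)/(1-\varepsilon)\bigr)^{p_t/(1-p_t)}$ on its support. Adding an independent variable to an atomless one leaves it atomless. Hence $Q_{c,\varepsilon}$ is continuous outside the fully degenerate case, which you already handle. The paper then solves $Q_{c,\varepsilon}(s_\delta)=\delta/(K-1)$ exactly. It also proves strict decrease on the support, though continuity alone would suffice.

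\emph{Your distribution-free route, corrected.} Work with the acceptance set $A=\{s:(K-1)Q_{c,\varepsilon}(s)\le\delta\}$ itself rather than with $[s^*,\infty)$. An accepted wrong candidate $b$ has $r_c(b)\in A$. Since $A$ is an up-set, there are two cases. Either $A=[s^*,\infty)$ with $s^*\in A$, or $A=(s^*,\infty)$, in which case
$\Pr[Z_{c,\varepsilon}\in A]=\Pr[Z_{c,\varepsilon}>s^*]=\lim_{s\downarrow s^*}Q_{c,\varepsilon}(s)\le\delta/(K-1)$. The property you need is this right limit, not left-continuity. This version requires no continuity at all and covers the degenerate case automatically.
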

\setcounter{theorem}{\value{savedtheorem}}%
\endgroup

\begin{proof}
Fix the chunk $c$, the contamination parameter
$\varepsilon\in(0,1)$, the prompt, model and sampling settings.
Condition on the text $Y_{1:n}=y$ and scored-position sets
$\{\mathcal{T}_c\}_{c=1}^C$. Thus $\mathcal{T}_c$, $n_c$ and all
$p_t=p_t(y_t)\in(0,1]$ are fixed. Write
$\mathcal{M}_c=\{0,1\}^{k_c}$ and $K=|\mathcal{M}_c|\ge2$.
The parameter $\varepsilon$ is fixed before observing the
candidate uniforms. All conditional distributions below are
those of the contamination model stated in the theorem.

\smallskip\noindent
\textbf{Marginalising the contamination labels.}
Fix a candidate $a\in\mathcal{M}_c$ and abbreviate
$U_t=u_t^{(a)}(y_t)$ for $t\in \mathcal{T}_c$. Let $B_t=1$ mean that
position $t$ is watermarked and $B_t=0$ mean that it is foreign.
Under the model, the $B_t$ are independent, with
\begin{equation}
\Pr[B_t=1]=1-\varepsilon,\qquad
\Pr[B_t=0]=\varepsilon.
\end{equation}
For the hypothesis $H_1:m_c=a$, the conditional density of
$U_t$ is $f_{p_t}(u)$ when $B_t=1$ and $1$ when $B_t=0$.
The law of total probability therefore gives the marginal density
\begin{equation}
h_{p_t,\varepsilon}(u)
=(1-\varepsilon)f_{p_t}(u)+\varepsilon
=\frac{1-\varepsilon}{p_t}u^{1/p_t-1}+\varepsilon,
\qquad 0<u<1.
\end{equation}
This is a probability density: both component densities
integrate to $1$, so
\begin{equation}
\int_0^1h_{p_t,\varepsilon}(u)\,\mathrm{d}u
=(1-\varepsilon)\int_0^1f_{p_t}(u)\,\mathrm{d}u
 +\varepsilon\int_0^1 1\,\mathrm{d}u
=(1-\varepsilon)+\varepsilon=1.
\end{equation}

We also need the \emph{joint} density across scored positions.
Independence of the labels and independence of the uniforms
conditional on those labels are both used here. For a label
vector $b=(b_t)_{t\in \mathcal{T}_c}\in\{0,1\}^{\mathcal{T}_c}$, its probability
is $\prod_t(1-\varepsilon)^{b_t}\varepsilon^{1-b_t}$.
Conditional on this vector and $H_1$, the uniforms have joint
density $\prod_t f_{p_t}(u_t)^{b_t}$. Summing over all label
vectors yields
\begin{equation}
\begin{aligned}
q_1((u_t)_{t\in \mathcal{T}_c})
&=\sum_{b\in\{0,1\}^{\mathcal{T}_c}}
  \prod_{t\in \mathcal{T}_c}
  \left[(1-\varepsilon)^{b_t}\varepsilon^{1-b_t}
        f_{p_t}(u_t)^{b_t}\right]\\
&=\prod_{t\in \mathcal{T}_c}
  \left[\sum_{b_t\in\{0,1\}}
    (1-\varepsilon)^{b_t}\varepsilon^{1-b_t}
    f_{p_t}(u_t)^{b_t}\right]\\
&=\prod_{t\in \mathcal{T}_c}
  \left[\varepsilon+(1-\varepsilon)f_{p_t}(u_t)\right]\\
&=\prod_{t\in \mathcal{T}_c}h_{p_t,\varepsilon}(u_t).
\end{aligned}
\end{equation}
The second equality distributes the finite sum over all binary
label choices into one two-term sum per position. It shows
explicitly that independence across positions survives
marginalisation over the unobserved labels.

Under $H_0:m_c\neq a$, every $U_t$ is uniform even conditional
on the labels, and these uniforms are independent across
positions by assumption. Their conditional joint density is
therefore $1$ for every label vector. Averaging over the labels
leaves
\begin{equation}
q_0((u_t)_{t\in \mathcal{T}_c})
=\sum_{b\in\{0,1\}^{\mathcal{T}_c}}
  \prod_{t\in \mathcal{T}_c}(1-\varepsilon)^{b_t}\varepsilon^{1-b_t}
=\prod_{t\in \mathcal{T}_c}\bigl[(1-\varepsilon)+\varepsilon\bigr]=1.
\end{equation}
This is the same density for every possible true value other
than $a$. Hence the composite hypothesis $m_c\neq a$ induces
one common null law for this candidate's observed uniforms;
no prior over the other message values is required.

\smallskip\noindent
\textbf{The likelihood ratio and the robust score.}
Since $h_{p,\varepsilon}(u)\ge\varepsilon>0$ on $(0,1)$,
the likelihood ratio is positive and its logarithm is defined.
Evaluating the two densities at the candidate's observed
uniforms gives
\begin{equation}
\begin{aligned}
\log\frac{q_1((U_t)_{t\in \mathcal{T}_c})}{q_0((U_t)_{t\in \mathcal{T}_c})}
&=\log\prod_{t\in \mathcal{T}_c}h_{p_t,\varepsilon}(U_t)\\
&=\sum_{t\in \mathcal{T}_c}\log h_{p_t,\varepsilon}(U_t)\\
&=\sum_{t\in \mathcal{T}_c}
  \log\bigl[\varepsilon+(1-\varepsilon)f_{p_t}(U_t)\bigr]\\
&=r_c(a).
\end{aligned}
\end{equation}
This proves \eqref{eq:robust}. If $\mathcal{T}_c=\varnothing$, the
joint likelihoods are both the empty product $1$, their
ratio is $1$, and the robust score is the empty sum $0$.

\smallskip\noindent
\textbf{Bounds on a token's score contribution.}
For $p\in(0,1]$, the exponent $1/p-1$ is nonnegative.
Consequently, for $0<u<1$,
\begin{equation}
0\le u^{1/p-1}\le1,\qquad
0\le f_p(u)=\frac1p u^{1/p-1}\le\frac1p.
\end{equation}
Multiplication by $1-\varepsilon>0$ and addition of
$\varepsilon$ preserve these inequalities:
\begin{equation}
\varepsilon
\le h_{p,\varepsilon}(u)
\le\varepsilon+\frac{1-\varepsilon}{p}.
\end{equation}
The logarithm is increasing on $(0,\infty)$, so the
per-token score
\begin{equation}
s_{p,\varepsilon}(u)
:=\log h_{p,\varepsilon}(u)
\end{equation}
satisfies
\begin{equation}
\log\varepsilon
\le s_{p,\varepsilon}(u)
\le\log\left(\varepsilon+\frac{1-\varepsilon}{p}\right).
\end{equation}
Both endpoints are finite under the stated assumptions.
These bounds are algebraic and therefore hold for every
candidate and for either contamination label.

To state the negative-contribution bound precisely, let
$r_{c,-t}(a)$ be the sum of all the other fixed contributions.
Then
\begin{equation}
r_c(a)
=r_{c,-t}(a)+s_{p_t,\varepsilon}(U_t)
\ge r_{c,-t}(a)+\log\varepsilon
=r_{c,-t}(a)-\log(1/\varepsilon).
\end{equation}
Thus adding this summand can lower the existing sum by at most
$\log(1/\varepsilon)$. This proves claim (i) and the stated
bound on a token's additive negative contribution.

If $p=1$, then $f_1(u)=u^0=1$ for every $u\in(0,1)$.
In this case
\begin{equation}
h_{1,\varepsilon}(u)=(1-\varepsilon)+\varepsilon=1,
\qquad s_{1,\varepsilon}(u)=\log1=0.
\end{equation}
This proves claim (ii), for watermarked and foreign tokens alike.

\smallskip\noindent
\textbf{The exact null law of one summand.}
For a wrong candidate, the preceding joint-density calculation
shows that its $U_t$ are independent $U(0,1)$ variables.
Applying the fixed function $s_{p_t,\varepsilon}$ separately
to each coordinate preserves independence. Thus its score
has exactly the law
\begin{equation}
Z_{c,\varepsilon}
=\sum_{t\in \mathcal{T}_c}s_{p_t,\varepsilon}(U_t),
\qquad U_t\ \text{independent }U(0,1).
\end{equation}
This already identifies the null distribution using only
the fixed probabilities $p_t$ and $\varepsilon$. We now
derive its component distributions and an explicit recursion
for their sum.

Let $U\sim U(0,1)$ and first suppose $0<p<1$. Put
\begin{equation}
a_\varepsilon=\log\varepsilon,\qquad
b_{p,\varepsilon}
=\log\left(\varepsilon+\frac{1-\varepsilon}{p}\right).
\end{equation}
Because $(1-p)/p>0$, the function
\begin{equation}
u\longmapsto
s_{p,\varepsilon}(u)
=\log\left(\varepsilon+\frac{1-\varepsilon}{p}
                         u^{(1-p)/p}\right)
\end{equation}
is continuous and strictly increasing on $(0,1)$, with
endpoint limits $a_\varepsilon$ and $b_{p,\varepsilon}$.
For $a_\varepsilon<x<b_{p,\varepsilon}$, inversion gives
\begin{equation}
\begin{aligned}
s_{p,\varepsilon}(U)\le x
&\quad\Longleftrightarrow\quad
\varepsilon+\frac{1-\varepsilon}{p}U^{(1-p)/p}\le e^x\\
&\quad\Longleftrightarrow\quad
U^{(1-p)/p}\le\frac{p(e^x-\varepsilon)}{1-\varepsilon}\\
&\quad\Longleftrightarrow\quad
U\le\left(\frac{p(e^x-\varepsilon)}{1-\varepsilon}\right)^{p/(1-p)}.
\end{aligned}
\end{equation}
On this interval, the last threshold lies in $(0,1)$.
Using the uniform distribution function therefore yields
the exact null distribution function
\begin{equation}
F_{p,\varepsilon}(x)=
\begin{cases}
0,&x\le a_\varepsilon,\\[2pt]
\displaystyle
\left(\frac{p(e^x-\varepsilon)}{1-\varepsilon}\right)^{p/(1-p)},
  &a_\varepsilon<x<b_{p,\varepsilon},\\[7pt]
1,&x\ge b_{p,\varepsilon}.
\end{cases}
\end{equation}
Its endpoint limits are $0$ and $1$, so this distribution
is continuous. On the open support interval its density is
\begin{equation}
d_{p,\varepsilon}(x)
=\frac{p}{1-p}
 \left(\frac{p}{1-\varepsilon}\right)^{p/(1-p)}
 e^x(e^x-\varepsilon)^{p/(1-p)-1}>0.
\end{equation}
This expression follows by differentiating the interior
formula with the chain rule. When $p=1$, the summand is
instead identically zero, and its distribution function is
$F_{1,\varepsilon}(x)=\mathbf{1}\{x\ge0\}$.

\smallskip\noindent
\textbf{Obtaining the null distribution of the sum.}
Enumerate $\mathcal{T}_c=\{t_1,\ldots,t_{n_c}\}$ and let $G_j$ be the
distribution function of the sum of its first $j$ null
contributions. The initial empty sum gives
$G_0(x)=\mathbf{1}\{x\ge0\}$. Independence of the next uniform
and the previous sum gives, for $j=1,\ldots,n_c$,
\begin{equation}
G_j(x)=\int_0^1
  G_{j-1}\bigl(x-s_{p_{t_j},\varepsilon}(u)\bigr)\,\mathrm{d}u.
\end{equation}
Indeed, conditional on $U_{t_j}=u$, the sum is at most $x$
exactly when the previous sum is at most
$x-s_{p_{t_j},\varepsilon}(u)$; integration averages over the
uniform density $1$. Thus
$F_{c,\varepsilon}:=G_{n_c}$ is the exact null distribution
function. A position with $p_t=1$ leaves the recursion
unchanged because its score contribution is zero.

The inclusive upper tail used in the certificate is
\begin{equation}
Q_{c,\varepsilon}(x)
=\Pr[Z_{c,\varepsilon}\ge x]
=1-F_{c,\varepsilon}(x-),
\end{equation}
where $F(x-)=\lim_{z\uparrow x}F(z)$. The left limit makes
this identity valid even for a point mass.

For use in the error bound, let
$J_c=\{t\in \mathcal{T}_c:p_t<1\}$ be the positions with nonconstant
score contributions. If $J_c=\varnothing$, the null sum
is identically zero. Otherwise its support endpoints are
\begin{equation}
\underline{s}_c=|J_c|\log\varepsilon,\qquad
\overline{s}_c=\sum_{t\in J_c}
  \log\left(\varepsilon+\frac{1-\varepsilon}{p_t}\right).
\end{equation}
The sum has a continuous distribution and a density positive
on $(\underline{s}_c,\overline{s}_c)$. To see positivity,
each component has the positive density derived above.
If two densities are positive on intervals $(A_1,B_1)$
and $(A_2,B_2)$, then at any
$x\in(A_1+A_2,B_1+B_2)$ their convolution integrates a
positive function over
\begin{equation}
\bigl(\max\{A_1,x-B_2\},\ \min\{B_1,x-A_2\}\bigr),
\end{equation}
which is a nonempty interval. Repeating this argument proves
the assertion for every finite number of active positions.
The density also shows there are no point masses.
Consequently $Q_{c,\varepsilon}$ is continuous, equals $1$
at $\underline{s}_c$, equals $0$ at $\overline{s}_c$, and is
strictly decreasing between them.

\Needspace{12\baselineskip}
\smallskip\noindent
\textbf{Certification after selecting the largest robust score.}
Fix the actual true chunk $m_c$. Let $\Pr_*$ denote
probability in the contamination model conditional on the
text and scored sets fixed at the beginning of the proof.
Choose any fixed tie-breaking rule, and write
\begin{equation}
\hat m=\arg\max_{a\in\mathcal{M}_c}r_c(a),\qquad
\hat\delta=\min\{1,(K-1)Q_{c,\varepsilon}(r_c(\hat m))\}.
\end{equation}
The certified decoder $D_{\delta,\varepsilon}$ returns
$\hat m$ when $\hat\delta\le\delta$ and returns $\bot$
otherwise. We must bound the event that it accepts a wrong
candidate.

If $\delta=1$, the claim follows because every event has
probability at most $1$. Suppose $0<\delta<1$. If
$J_c=\varnothing$, every candidate's score is $0$, including
the case $\mathcal{T}_c=\varnothing$. The inclusive tail gives
\begin{equation}
Q_{c,\varepsilon}(0)=1,\qquad
\hat\delta=\min\{1,K-1\}=1.
\end{equation}
Since $K\ge2$, the decoder therefore abstains and its error
probability is zero. Using an inclusive tail is essential
here: the null law assigns all its mass to the observed
score $0$.

Now assume $J_c\neq\varnothing$. Since
$0<\delta/(K-1)<1$, continuity and strict decrease of the
null tail give a unique
$s_\delta\in(\underline{s}_c,\overline{s}_c)$ such that
\begin{equation}
Q_{c,\varepsilon}(s_\delta)=\frac{\delta}{K-1}.
\end{equation}
For $\delta<1$, clipping at $1$ does not change whether the
certificate is at most $\delta$. Monotonicity of the tail
then gives
\begin{equation}
\begin{aligned}
\hat\delta\le\delta
&\quad\Longleftrightarrow\quad
(K-1)Q_{c,\varepsilon}(r_c(\hat m))\le\delta\\
&\quad\Longleftrightarrow\quad
Q_{c,\varepsilon}(r_c(\hat m))
 \le Q_{c,\varepsilon}(s_\delta)\\
&\quad\Longleftrightarrow\quad
r_c(\hat m)\ge s_\delta.
\end{aligned}
\end{equation}
Therefore, under the fixed conditioning,
\begin{equation}
\begin{aligned}
\{D_{\delta,\varepsilon}(y)\notin\{m_c,\bot\}\}
&=\{\hat m\neq m_c\}\cap\{r_c(\hat m)\ge s_\delta\}\\
&\subseteq\bigcup_{a\neq m_c}\{r_c(a)\ge s_\delta\}.
\end{aligned}
\end{equation}
The inclusion holds because an accepted wrong candidate is
itself one of the wrong candidates whose score crosses the
threshold. It does not assign the null law to the selected
maximum.

\Needspace{23\baselineskip}
\smallskip\noindent
\textbf{Bounding and averaging the error probability.}
For each fixed wrong candidate, the null calculation gives
\begin{equation}
\Pr_*[r_c(a)\ge s_\delta]
=Q_{c,\varepsilon}(s_\delta)=\frac{\delta}{K-1}.
\end{equation}
Applying the conditional union bound over the $K-1$ wrong
candidates yields
\begin{equation}
\begin{aligned}
\Pr_*[D_{\delta,\varepsilon}(y)\notin\{m_c,\bot\}]
&\le\sum_{a\neq m_c}\Pr_*[r_c(a)\ge s_\delta]\\
&=(K-1)\frac{\delta}{K-1}=\delta.
\end{aligned}
\end{equation}
Independence between different candidates is not needed.
The mixture law of the correct candidate identifies the
score as a likelihood ratio; the certification bound uses
only the independent-uniform law within each wrong candidate.

The preceding bound holds for every conditioning pair
satisfying the theorem's assumptions. Averaging over the
text and scored sets, whenever they are random in the
contamination model, gives
\begin{equation}
\begin{aligned}
&\Pr[D_{\delta,\varepsilon}(Y_{1:n})\notin\{m_c,\bot\}]\\
&\quad=\mathbb{E}\!\left[
  \Pr[D_{\delta,\varepsilon}(Y_{1:n})\notin\{m_c,\bot\}
      \mid Y_{1:n},\{\mathcal{T}_c\}_{c=1}^C]\right]
\le\delta.
\end{aligned}
\end{equation}
The probability here is taken in the stated contamination
model. This establishes the certified error guarantee and
completes claim (iii), together with the likelihood formula,
the score bounds, and all degenerate cases.
\end{proof}

\Needspace{12\baselineskip}
\section{Additional theorems}
\label{app:additional}

Theorem~\ref{thm:exact} concerns a single generation. With a fixed key,
repeating the same prompt and message produces the same text, and two
generations can read the same sampling uniforms whenever their contexts
and active message values agree. To separate their sampling randomness,
draw a nonce $\nu$ uniformly from $\{0,1\}^{r}$ for every generation,
independently of the key and of all other nonces. Replace
\eqref{eq:encoder} by
\begin{equation}
\begin{aligned}
a_t&=
\begin{cases}
(\mathsf{fresh},\,\nu,\,c_t,\,t),
  &c_t\in\mathcal{C}_{t-1},\\[2pt]
(\mathsf{msg},\,\nu,\,c_t,\,m_{i_t}),
  &c_t\notin\mathcal{C}_{t-1},
\end{cases}\\
u_t(v)&=\rho\bigl(\PRF_\kappa(a_t,v)\bigr),
\qquad
\mathcal{C}_t=\mathcal{C}_{t-1}\cup\{c_t\}.
\end{aligned}
\label{eq:encoder_nonce}
\end{equation}
The seen-context set starts empty in each generation. Chunk selection
\eqref{eq:chunkidx} and token emission \eqref{eq:emit} are unchanged.

The decoder treats the nonce as part of each candidate. For chunk $c$,
it enumerates
\begin{equation}
\mathcal{P}_c=\{0,1\}^{r}\times\{0,1\}^{k_c},
\qquad K_c=|\mathcal{P}_c|=2^{r+k_c},
\end{equation}
and computes
\begin{equation}
\begin{aligned}
u_t^{(\nu',m')}(v)
&=\rho\bigl(\PRF_\kappa(\mathsf{msg},\nu',c_t,m',v)\bigr),\\
S_c(\nu',m')
&=\sum_{t\in \mathcal{T}_c}-\log\bigl(1-u_t^{(\nu',m')}(y_t)\bigr).
\end{aligned}
\end{equation}
Choose a fixed rule for breaking ties and write
$(\hat\nu_c,\hat m_c)=\arg\max_{(\nu',m')\in\mathcal{P}_c}S_c(\nu',m')$.
The returned chunk is the message component $\hat m_c$. When $r=0$,
there is a single possible nonce, so this construction reduces to the
original scheme up to the fixed extra field in the PRF arguments.

\begin{theorem}[Distortion-free generation under one deployed key]
\label{thm:nonce}
Treat $\PRF_\kappa$ as one random function shared by all generations.
Fix a nonnegative integer $r$, an integer $N\ge1$, the model and sampling
settings, and prompts $x^{(j)}$, messages $m^{(j)}\in\{0,1\}^{L}$ and
lengths $n_j$ for $j=1,\ldots,N$. Let $Y^{(j)}$ be generated by
\eqref{eq:encoder_nonce} and \eqref{eq:emit}, using independent uniform
nonces $\nu_j\in\{0,1\}^{r}$ that are independent of the random function.
Write $\Pr_{\kappa,\nu}$ for probability over that function and the
nonces, and define
\begin{equation}
P_j(w)=\prod_{t=1}^{n_j}p(w_t\mid x^{(j)},w_{<t}),
\qquad
D=\{\nu_1,\ldots,\nu_N\text{ are pairwise distinct}\}.
\end{equation}
Then the following hold.
\begin{enumerate}
\setlength{\itemsep}{4pt}
\item[(i)] \textbf{Single-generation law.}
For every $j$ and $w\in V^{n_j}$,
$\Pr_{\kappa,\nu}[Y^{(j)}=w]=P_j(w)$.

\item[(ii)] \textbf{Joint law for distinct nonces.}
If $N\le2^r$, then $\Pr_{\kappa,\nu}[D]>0$ and, for all
$w^{(j)}\in V^{n_j}$,
\begin{equation}
\Pr_{\kappa,\nu}\!\left[
  \bigcap_{j=1}^{N}\{Y^{(j)}=w^{(j)}\}\,\middle|\,D\right]
=\prod_{j=1}^{N}P_j(w^{(j)}).
\label{eq:joint_nonce}
\end{equation}
Thus, conditional on distinct nonces, the generations are independent
unwatermarked samples, even when their fixed prompts or messages coincide.

\item[(iii)] \textbf{Collision and total variation bounds.}
Let $P$ be the joint law of $(Y^{(1)},\ldots,Y^{(N)})$ and
$Q=\bigotimes_{j=1}^{N}P_j$. Then
\begin{equation}
d_{\mathrm{TV}}(P,Q)
\le\Pr_{\kappa,\nu}[D^c]
\le\min\left\{1,\frac{N(N-1)}{2^{r+1}}\right\}.
\end{equation}
This holds for every $N$, including $N>2^r$.

\item[(iv)] \textbf{Decoding and certification for one generation.}
Fix one generation and a chunk $c$ with $k_c\ge1$, and write $\nu$ for
its actual nonce. Conditional on its text, its scored-position sets
$\{\mathcal{T}_c\}_{c=1}^C$ and $\nu$, every pair
$(\nu',m')\ne(\nu,m_c)$ has score law $\Gamma(n_c,1)$ when $n_c\ge1$,
and score $0$ when $n_c=0$.

Let $G_n\sim\Gamma(n,1)$ for $n\ge1$, let $G_0=0$ almost surely,
and put $Q_n(s)=\Pr[G_n\ge s]$. The certificate
\begin{equation}
\hat\delta_c
=\min\left\{1,(K_c-1)Q_{n_c}\bigl(S_c(\hat\nu_c,\hat m_c)\bigr)\right\}
\end{equation}
has the guarantee of \eqref{eq:cert}: returning $\hat m_c$ when
$\hat\delta_c\le\delta$ and abstaining otherwise has probability
at most $\delta$ of returning an incorrect chunk, for
$\delta\in(0,1]$. Moreover, for $\Lambda_c$ defined in
\eqref{eq:scaling},
\begin{equation}
\Pr_{\kappa,\nu}[\hat m_c\ne m_c\mid Y_{1:n}=y,\{\mathcal{T}_c\}_{c=1}^C]
\le (K_c-1)e^{-\Lambda_c}
<\exp\bigl((k_c+r)\ln2-\Lambda_c\bigr).
\end{equation}
The corresponding whole-message bound is
$\sum_{c=1}^{C}(2^{k_c+r}-1)e^{-\Lambda_c}$.
An empty scored set has $\hat\delta_c=1$ and $\Lambda_c=0$.
\end{enumerate}
\end{theorem}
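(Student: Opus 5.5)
The plan is to reduce every part to the single-generation arguments already given (Appendices~\ref{app:exact}, \ref{app:null}, \ref{app:cert} and \ref{app:scaling}), treating the nonce as an extra field that either separates generations or, inside one generation, is just one more candidate coordinate.

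\textbf{Part (i).} Condition on the nonce $\nu_j=\nu$ and on the transcripts of any other generations run before it. Within generation $j$, Step~2 of Appendix~\ref{app:exact} goes through verbatim, because the nonce is a common fixed field: sampling inputs are new relative to earlier steps of the same generation.

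Inputs from other generations may coincide, however, since a shared nonce and context reuse uniforms. So for (i) I will not condition on other generations. Instead I note that generation $j$ alone queries only its own inputs. The random function, revealed lazily along generation $j$'s transcript, therefore gives the same one-step law $p(\cdot\mid x^{(j)},Y_{<t})$ as in Step~4. The recurrence of Step~5 then yields $P_j$ for each fixed $\nu$, and averaging over $\nu$ gives (i).

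\textbf{Part (ii).} On $D$, every sampling input of generation $j$ carries $\nu_j$, which differs from every other generation's nonce. Token-sampling input sets of different generations are therefore disjoint. Chunk-selection queries are shared, but they carry the $\mathsf{chunk}$ tag and do not affect sampling uniforms.

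I will run the generations sequentially and reveal the random function lazily. Let $\mathcal{H}^{(j)}_t$ be the history of all earlier generations together with generation $j$ up to step $t$. Each step's inputs are then unqueried, so they are i.i.d.\ uniform given $\mathcal{H}^{(j)}_t$. Lemma~\ref{lem:gumbel} gives the conditional token law $p(\cdot\mid x^{(j)},\cdot)$. Chaining over all steps of all generations gives the product $\prod_j P_j$ conditional on the nonce vector, for every distinct nonce vector; averaging over the nonces given $D$ gives \eqref{eq:joint_nonce}.

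Positivity of $\Pr[D]$ follows because $N\le2^r$ permits an injective assignment of nonces, and each such assignment has positive probability.

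\textbf{Part (iii).} I will use a coupling. Construct $Q$ by running the same procedure with independent random functions per generation; the argument of (ii) shows this has law exactly $Q$ for any nonces. On $D$, generations sharing one function versus independent ones can be coupled to coincide, since they query disjoint inputs. Hence $d_{\mathrm{TV}}\le\Pr[D^c]$. The second inequality is the birthday union bound, $\binom N2 2^{-r}$.

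The main subtlety is making this coupling rigorous; the cleanest route is to note that the conditional laws given $D$ agree by (ii), and that $d_{\mathrm{TV}}(P,Q)\le\Pr[D^c]$ holds whenever $P(\cdot\mid D)=Q$ on the same nonce space.

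\textbf{Part (iv).} I will repeat Appendix~\ref{app:null} with candidate pairs $(\nu',m')\neq(\nu,m_c)$. The input $\langle\mathsf{msg},\nu',c_t,m',y_t\rangle$ differs from every encoder query in the nonce field if $\nu'\ne\nu$, and in the message field otherwise, exactly as before. It gives i.i.d.\ uniforms at distinct retained contexts, hence the law $\Gamma(n_c,1)$. The certificate argument of Appendix~\ref{app:cert} then applies with $K_c-1$ wrong pairs.

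An error $\hat m_c\ne m_c$ forces the selected pair to be wrong. So both the certified-error bound and the union bound with the Chernoff exponent of Theorem~\ref{thm:scaling} apply over $K_c-1$ wrong pairs; the correct pair's Beta law is unchanged. The only care needed is that the decoder returns only the message coordinate, which only shrinks the error event.
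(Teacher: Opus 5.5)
Your proposal is correct. Parts (i), (iii) and (iv) follow the paper's Steps 1 and 3--8 closely. The genuine difference is in (ii).

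\emph{The paper's route for (ii).} The paper conditions on the whole shared chunk-assignment table $\mathcal{A}$. Given $(\mathcal{A},\nu_j=a_j)$, each text is a deterministic function of the random function restricted to the sampling inputs carrying nonce $a_j$. Restrictions to disjoint sets are independent, which gives conditional independence of the texts. The paper must then check separately that this survives averaging over the shared table; it does only because $\prod_j P_j(w^{(j)})$ does not depend on $\mathcal{A}$.

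\emph{Your route for (ii).} You concatenate the generations into one sequential run and apply the chain rule over all $\sum_j n_j$ steps, with a history that contains every earlier generation's transcript. For a fixed distinct nonce vector every sampling input is still unqueried. So Lemma~\ref{lem:gumbel} applies at each step, and the product comes out of the same recursion as in Appendix~\ref{app:exact}.

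\emph{Comparison.} Your route is more elementary and handles the shared chunk table for free, since a repeated $\mathsf{chunk}$ query just returns a stored value and never enters the token law. The paper's route makes explicit the structural reason for independence: the texts are functions of disjoint pieces of the random function.

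Two smaller points.

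In (iii), the justification of your first coupling overlooks that $\mathsf{chunk}$ inputs carry no nonce. Under one function, all generations see the same chunk table. Fully independent per-generation functions have independent tables. So identifying values on the disjoint sampling inputs does not make the texts coincide on $D$. Auxiliary functions that share the chunk table would work, and by your argument for (ii) they still give law $Q$. Your fallback is exactly the paper's Step~4 and suffices: $P(\cdot\mid D)=Q$ by (ii), hence $|P(E)-Q(E)|\le\Pr[D^c]$. For $N>2^r$ the bound is trivial, since then $\Pr[D^c]=1$.

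In (iv), the input $\langle\mathsf{msg},\nu',c_t,m',y_t\rangle$ is unqueried only by \emph{this} generation's encoder. Another generation with nonce $\nu'$, the same context and chunk value $m'$ may well have queried it. The null law still holds because the conditioning involves only this generation, so the other generations are integrated out. You should say this explicitly, as the paper does. Likewise, the scaling bound is stated without conditioning on $\nu$, so it needs a final average over $\nu$. That step is harmless because $(K_c-1)e^{-\Lambda_c}$ does not depend on $\nu$.
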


The nonce separates the sampling inputs of generations whose nonces
differ. The decoder searches $2^r$ times as many candidates, so its
certificate and error bound have the same candidate-count factor as
a message with $r$ additional bits. As in Theorem~\ref{thm:scaling},
about $r\ln2$ additional accumulated evidence offsets this factor.
The decoder searches for the nonce jointly with the message using
the text and key. Our experiments use $r=0$.

\begin{proof}
Keep the model, sampling settings, prompts, messages and lengths fixed
as in the theorem. Throughout the proof, $\Pr$ and $\mathbb{E}$ denote
probability and expectation over the single shared random function
and the independent nonces. The function is drawn once and reused;
it is not redrawn between generations.

Let $\mathcal{A}$ be the complete context-to-chunk assignment table
determined by the inputs tagged $\mathsf{chunk}$. These inputs are
disjoint from all sampling inputs, whose tags are $\mathsf{msg}$ or
$\mathsf{fresh}$. Under the continuous random-function idealisation
following \eqref{eq:rho}, conditioning on $\mathcal{A}$ therefore
leaves the sampling values independent $U(0,1)$ variables. The nonces
are also independent of $\mathcal{A}$ and of those values. We first
work with $\mathcal{A}$ fixed, and later average over it. This
conditioning is useful because the assignment table is shared even
when the nonces differ.

\smallskip\noindent
\textbf{Step 1: the law of one generation with its nonce fixed.}
Fix $j$ and a nonce value $a\in\{0,1\}^{r}$, and condition on
$(\mathcal{A},\nu_j=a)$. Let $\mathcal{H}^{(j)}_t$ contain this
information and all queries and answers in generation $j$ before
its step-$t$ sampling vector is read. It determines the prefix,
the context, the seen-context set, the active chunk and the
next-token law.

Every sampling input at step $t$ is new within this generation.
To verify this, consider the two cases in \eqref{eq:encoder_nonce}.
On a first occurrence of the context, the input for token $v$ is
$\langle\mathsf{msg},a,c_t,m_{i_t},v\rangle$. Its context field
differs from that of every earlier message input. On a repeated
context, the input is $\langle\mathsf{fresh},a,c_t,t,v\rangle$;
its position field differs from that of every earlier fresh input.
The tags separate these two families from each other and from
chunk-selection inputs. Finally, distinct tokens $v$ give distinct
inputs because the tuple encoding is injective.

A random function can be exposed as queries arrive: an unqueried
input receives an independent value, and a repeated input returns
its stored value. The step-$t$ input is chosen from the preceding
history, so adaptively choosing it introduces no conditioning on
its as-yet unobserved value. Thus, conditional on
$\mathcal{H}^{(j)}_t$, the sampling vector consists of independent
uniforms. Lemma~\ref{lem:gumbel} gives
\begin{equation}
\Pr[Y^{(j)}_t=v\mid\mathcal{H}^{(j)}_t]
=p(v\mid x^{(j)},Y^{(j)}_{<t}).
\end{equation}

For a fixed text $w\in V^{n_j}$, put
$E_t=\{Y^{(j)}_{1:t}=w_{1:t}\}$ and $E_0=\Omega$.
Iterated conditional expectation gives
\begin{equation}
\begin{aligned}
\Pr[E_t\mid\mathcal{A},\nu_j=a]
&=\mathbb{E}\!\left[
  \mathbf{1}_{E_{t-1}}
  \Pr[Y^{(j)}_t=w_t\mid\mathcal{H}^{(j)}_t]
  \,\middle|\,\mathcal{A},\nu_j=a\right]\\
&=p(w_t\mid x^{(j)},w_{<t})
  \Pr[E_{t-1}\mid\mathcal{A},\nu_j=a].
\end{aligned}
\end{equation}
On $E_{t-1}$ the prefix is $w_{<t}$, which justifies the second
equality. This recursion also covers zero-probability prefixes
without conditioning on such a prefix. Starting at $t=1$ yields
\begin{equation}
\Pr[Y^{(j)}=w\mid\mathcal{A},\nu_j=a]
=\prod_{t=1}^{n_j}p(w_t\mid x^{(j)},w_{<t})
=P_j(w).
\label{eq:step1_nonce}
\end{equation}
The right-hand side depends on neither the table nor the nonce.
Consequently,
\begin{equation}
\Pr[Y^{(j)}=w]
=\mathbb{E}\!\left[\Pr[Y^{(j)}=w\mid\mathcal{A},\nu_j]\right]
=P_j(w),
\end{equation}
which proves (i). This is a marginal calculation for generation $j$;
it does not condition on the outputs or queries of other generations.

\Needspace{8\baselineskip}
\smallskip\noindent
\textbf{Step 2: independence when the nonces are distinct.}
For a nonce value $a$, let $\mathcal{I}(a)$ be the set of all sampling
inputs of the forms
\begin{equation}
\langle\mathsf{msg},a,c,b,v\rangle,
\qquad
\langle\mathsf{fresh},a,c,t,v\rangle,
\end{equation}
with admissible contexts, message values, positions and tokens.
Injectivity of the tuple encoding implies
$\mathcal{I}(a)\cap\mathcal{I}(a')=\varnothing$ when $a\ne a'$:
their nonce fields differ. All these sets are also disjoint from
the chunk-selection inputs.

Given $\mathcal{A}$ and $\nu_j=a_j$, the text $Y^{(j)}$ is a
deterministic function of the random-function values on
$\mathcal{I}(a_j)$ and the fixed prompt, message and sampling
settings. Although the queried inputs depend on previous tokens,
all of them stay within this fixed set. For pairwise distinct
$a_1,\ldots,a_N$, the restrictions of the random function to
these sets are independent, even conditional on $\mathcal{A}$.
Their resulting texts are therefore independent under that
conditioning. With
$E_{\mathbf{w}}=\bigcap_{j=1}^{N}\{Y^{(j)}=w^{(j)}\}$,
\eqref{eq:step1_nonce} gives
\begin{equation}
\begin{aligned}
\Pr[E_{\mathbf{w}}\mid\mathcal{A},\nu_1=a_1,\ldots,\nu_N=a_N]
&=\prod_{j=1}^{N}
  \Pr[Y^{(j)}=w^{(j)}\mid\mathcal{A},\nu_j=a_j]\\
&=\prod_{j=1}^{N}P_j(w^{(j)}).
\end{aligned}
\end{equation}
Conditional independence alone would not in general imply
independence after averaging over a shared table. Here it does,
because this product does not depend on $\mathcal{A}$ or on the
particular distinct nonce values.

When $N\le2^r$, at least one distinct nonce tuple exists and has
positive probability, so $\Pr[D]>0$. Averaging the preceding
identity conditional on $D$ yields
\begin{equation}
\begin{aligned}
\Pr[E_{\mathbf{w}}\mid D]
&=\mathbb{E}\!\left[
  \Pr[E_{\mathbf{w}}\mid\mathcal{A},\nu_1,\ldots,\nu_N]
  \,\middle|\,D\right]\\
&=\prod_{j=1}^{N}P_j(w^{(j)}),
\end{aligned}
\end{equation}
proving (ii). When $N>2^r$, distinct nonces are impossible, so no
conditional law given $D$ is asserted.

\smallskip\noindent
\textbf{Step 3: bounding nonce collisions.}
For any two distinct generations $j<j'$,
\begin{equation}
\begin{aligned}
\Pr[\nu_j=\nu_{j'}]
&=\sum_{a\in\{0,1\}^{r}}\Pr[\nu_j=a,\nu_{j'}=a]\\
&=2^r\cdot 2^{-r}\cdot 2^{-r}=2^{-r}.
\end{aligned}
\end{equation}
The second equality uses independence and uniformity of the two
nonces. Since $D^c$ occurs exactly when at least one pair agrees,
the union bound gives
\begin{equation}
\Pr[D^c]
\le\sum_{1\le j<j'\le N}\Pr[\nu_j=\nu_{j'}]
=\binom{N}{2}2^{-r}
=\frac{N(N-1)}{2^{r+1}}.
\end{equation}
Combining this with $\Pr[D^c]\le1$ gives the stated collision
bound. Independence between the pairwise collision events is
not needed. For $N=1$ the union is empty and its probability
is zero; for $N>2^r$ the collision probability is one.

\smallskip\noindent
\textbf{Step 4: converting the collision bound to total variation.}
Write $\mathbf{Y}=(Y^{(1)},\ldots,Y^{(N)})$ and $q=\Pr[D^c]$.
We use the convention
\begin{equation}
d_{\mathrm{TV}}(P,Q)
=\sup_{E\subseteq\prod_{j=1}^{N}V^{n_j}}|P(E)-Q(E)|.
\end{equation}
If $q=1$, then $d_{\mathrm{TV}}(P,Q)\le1=q$, since $P(E)$
and $Q(E)$ are probabilities. Now suppose $q<1$, so $D$ has
positive probability. By (ii), for every set $E$ of text tuples,
\begin{equation}
\Pr[\mathbf{Y}\in E,D]
=(1-q)\,Q(E).
\end{equation}
Decomposing according to $D$ and $D^c$ therefore gives
\begin{equation}
\begin{aligned}
P(E)-Q(E)
&=(1-q)Q(E)+\Pr[\mathbf{Y}\in E,D^c]-Q(E)\\
&=\Pr[\mathbf{Y}\in E,D^c]-qQ(E).
\end{aligned}
\end{equation}
Both terms in the last difference lie in $[0,q]$, so its absolute
value is at most $q$. Taking the supremum over $E$ gives
$d_{\mathrm{TV}}(P,Q)\le q$. Step~3 completes (iii), including
the case in which conditioning on distinct nonces is impossible.

\Needspace{8\baselineskip}
\smallskip\noindent
\textbf{Step 5: the null law for every wrong nonce--message pair.}
Fix one generation and drop its superscript $j$. Fix its true
nonce $\nu=a$, a chunk $c$ with $k_c\ge1$, and write
\begin{equation}
K=2^{r+k_c},\qquad
\theta_*=(a,m_c),\qquad
\mathcal{W}_c=\mathcal{P}_c\setminus\{\theta_*\}.
\end{equation}
For the remainder of the proof, let $\Pr_*$ and $\mathbb{E}_*$
denote conditioning on
$(Y_{1:n}=y,\{\mathcal{T}_c\}_{c=1}^C,\nu=a)$, for a realisation of positive
probability. This fixes $\mathcal{T}_c$, $n_c$ and
$p_t=p(y_t\mid x,y_{<t})>0$ at each scored position.

Let $\mathcal{H}$ contain $\nu$, the complete query transcript
of this generation, and the decoder's chunk-selection queries
before candidate scoring. It determines the text, the scored sets
and the correct pair's sampling values. For
$\theta=(a',b)\in\mathcal{W}_c$ and $t\in \mathcal{T}_c$, the decoder reads
the input
\begin{equation}
d_{t,\theta}=\langle\mathsf{msg},a',c_t,b,y_t\rangle.
\end{equation}
These inputs are distinct over all $(t,\theta)$. Different scored
positions have different contexts, and different candidate pairs
differ in their nonce or message field.

None of these inputs appears in $\mathcal{H}$. A chunk or fresh
query has a different tag. An encoder message query with context
different from $c_t$ differs in its context field. A message query
with context $c_t$ uses chunk $c$, since the chunk assignment is
determined by that context, and therefore has the form
$\langle\mathsf{msg},a,c_t,m_c,v\rangle$. It differs from
$d_{t,\theta}$ in the nonce field if $a'\ne a$, and in the
message field if $a'=a$ and $b\ne m_c$. This comparison covers
all encoder positions and tokens, including queries after $t$.

Conditional on $\mathcal{H}$, the values at these distinct,
unqueried inputs are independent uniforms. More explicitly, set
$W_{t,\theta}=u_t^\theta(y_t)$. For numbers
$z_{t,\theta}\in[0,1]$, the random-function argument gives
\begin{equation}
\Pr[W_{t,\theta}\le z_{t,\theta}\text{ for all }t,\theta
      \mid\mathcal{H}]
=\prod_{t\in \mathcal{T}_c}\prod_{\theta\in\mathcal{W}_c}z_{t,\theta}.
\end{equation}
The right-hand side is fixed under $\Pr_*$, so averaging over
$\mathcal{H}$ proves the same factorisation under $\Pr_*$.
Thus every wrong pair has independent uniform scored values.
Other generations may have queried some of these inputs, but
their transcripts and outputs are not part of this conditioning.
Reading a random-function value does not change it, and integrating
out those other generations leaves the marginal law just derived.

For $U\sim U(0,1)$ and $s\ge0$,
\begin{equation}
\Pr[-\log(1-U)\ge s]
=\Pr[U\ge1-e^{-s}]=e^{-s}.
\end{equation}
Hence each wrong-pair score is a sum of $n_c$ independent
$\mathrm{Exp}(1)$ variables. It has law $\Gamma(n_c,1)$ for
$n_c\ge1$, by the convolution calculation in
Appendix~\ref{app:null}. When $n_c=0$, it is the empty sum $0$.
This proves the null-law assertion in (iv).

\Needspace{8\baselineskip}
\smallskip\noindent
\textbf{Step 6: the correct score and its independence from wrong scores.}
For the scaling bound we also need the correct pair's joint law
after conditioning on the full text. The reasoning in
Appendix~\ref{app:scaling} continues to apply with the fixed nonce
field; we give the factorisation explicitly.

First condition on $\mathcal{A}$ and $\nu=a$ before generation.
Write $\mathcal{H}_t$ for the single-generation history from Step~1,
with the generation superscript omitted. At each step define
\begin{equation}
M_t=\max_{v:p_t(v)>0}u_t(v)^{1/p_t(v)}.
\end{equation}
The maximiser is the emitted token, because this is an equivalent
form of the Gumbel-max rule. For a token $v$ with $p_t(v)>0$ and
$z\in[0,1]$, independence of the fresh uniforms in Step~1 gives
\begin{equation}
\begin{aligned}
\Pr[Y_t=v,M_t\le z\mid\mathcal{H}_t]
&=\int_0^z p_t(v)s^{p_t(v)-1}
  \prod_{\substack{w\ne v\\p_t(w)>0}}s^{p_t(w)}\,\mathrm{d}s\\
&=p_t(v)\int_0^z s^{\sum_{w:p_t(w)>0}p_t(w)-1}\,\mathrm{d}s\\
&=z\,p_t(v).
\end{aligned}
\end{equation}
The integrand fixes the winning transformed uniform at $s$ and
requires every competitor to be at most $s$. Ties have probability
zero. The last equality uses $\sum_w p_t(w)=1$; for $p_t(v)=0$,
both sides of the identity are zero.

Apply the prefix recursion from Step~1, now also requiring
$M_t\le z_t$ at each step. At step $t$ its multiplier is
$z_t p(y_t\mid x,y_{<t})$, so
\begin{equation}
\Pr[Y_{1:n}=y,\ M_t\le z_t\text{ for all }t
       \mid\mathcal{A},\nu=a]
=\prod_{t=1}^{n}z_t p(y_t\mid x,y_{<t}).
\end{equation}
Dividing by the positive text probability from
\eqref{eq:step1_nonce} gives
\begin{equation}
\Pr[M_t\le z_t\text{ for all }t
       \mid Y_{1:n}=y,\mathcal{A},\nu=a]
=\prod_{t=1}^{n}z_t.
\end{equation}
For fixed $y$, the table determines $\{\mathcal{T}_c\}_{c=1}^C$. Set $z_t=1$
outside $\mathcal{T}_c$ and average over the tables consistent with
$(y,\{\mathcal{T}_c\}_{c=1}^C,a)$. The remaining product is unchanged, proving
that $(M_t)_{t\in \mathcal{T}_c}$ are independent uniforms under $\Pr_*$.

At a scored position, the correct pair reconstructs the encoder's
uniform and satisfies
\begin{equation}
B_t:=u_t^{\theta_*}(y_t)=M_t^{p_t},
\qquad
\Pr_*[B_t\le b]=b^{1/p_t}\quad(0<b<1).
\end{equation}
Consequently the $B_t$ are independent
$\mathrm{Beta}(1/p_t,1)$ variables under $\Pr_*$, and the correct
score is
$S_*=S_c(\theta_*)=\sum_{t\in \mathcal{T}_c}-\log(1-B_t)$.

The complete transcript $\mathcal{H}$ determines $S_*$.
Combining this fact with the conditional uniform factorisation
from Step~5 yields, for any real $s$,
\begin{equation}
\begin{aligned}
\Pr_*[S_*\le s,\ W\le z]
&=\mathbb{E}_*\!\left[
  \mathbf{1}_{\{S_*\le s\}}\Pr[W\le z\mid\mathcal{H}]\right]\\
&=\Pr_*[S_*\le s]
  \prod_{t\in \mathcal{T}_c}\prod_{\theta\in\mathcal{W}_c}z_{t,\theta}.
\end{aligned}
\end{equation}
Here inequalities between vectors are coordinatewise. This
factorisation proves that the entire array of wrong-pair uniforms
is independent of the correct score under $\Pr_*$. In particular,
each wrong score is independent of $S_*$.

\Needspace{8\baselineskip}
\smallskip\noindent
\textbf{Step 7: the certificate after selecting a candidate pair.}
Write $\hat\theta=(\hat\nu_c,\hat m_c)$ and let
$D_\delta^{\mathrm{pair}}$ return $\hat\theta$ if
$\hat\delta_c\le\delta$, and $\bot$ otherwise. We first bound
acceptance of a wrong pair. If $\delta=1$, the claimed bound
is automatic. Suppose $0<\delta<1$.

If $n_c=0$, every score is zero and $Q_0(0)=1$, so
$\hat\delta_c=\min\{1,K-1\}=1$ because $K\ge2$. The decoder
therefore abstains. If $n_c\ge1$, the Gamma density is positive
on $(0,\infty)$, so its upper tail $Q_{n_c}$ is continuous and
strictly decreasing there, from $1$ to $0$. Since
$0<\delta/(K-1)<1$, there is a unique $s_\delta>0$ with
\begin{equation}
Q_{n_c}(s_\delta)=\frac{\delta}{K-1}.
\end{equation}
For $\delta<1$, clipping the certificate at $1$ does not change
its acceptance event. Monotonicity of the tail gives
\begin{equation}
\begin{aligned}
\hat\delta_c\le\delta
&\quad\Longleftrightarrow\quad
Q_{n_c}(S_c(\hat\theta))\le\frac{\delta}{K-1}\\
&\quad\Longleftrightarrow\quad
S_c(\hat\theta)\ge s_\delta.
\end{aligned}
\end{equation}
Thus an accepted wrong pair must be one of the $K-1$ wrong
pairs whose score crosses this fixed threshold:
\begin{equation}
\{D_\delta^{\mathrm{pair}}\notin\{\theta_*,\bot\}\}
\subseteq
\bigcup_{\theta\in\mathcal{W}_c}\{S_c(\theta)\ge s_\delta\}.
\end{equation}
Each fixed wrong pair has the null law from Step~5, so
\begin{equation}
\begin{aligned}
\Pr_*[D_\delta^{\mathrm{pair}}\notin\{\theta_*,\bot\}]
&\le\sum_{\theta\in\mathcal{W}_c}
      \Pr_*[S_c(\theta)\ge s_\delta]\\
&=(K-1)Q_{n_c}(s_\delta)=\delta.
\end{aligned}
\end{equation}
The union bound is applied to the fixed candidates, rather than
assigning the Gamma law to their selected maximum.

Returning a wrong message requires accepting a wrong pair;
a pair with an incorrect nonce but the correct message is
not a message error. Hence the certified message decoder
$D_\delta$ satisfies the same conditional bound. Averaging
over this generation's text, scored sets and nonce gives
\begin{equation}
\Pr[D_\delta(Y_{1:n})\notin\{m_c,\bot\}]
=\mathbb{E}\!\left[
  \Pr[D_\delta(Y_{1:n})\notin\{m_c,\bot\}
       \mid Y_{1:n},\{\mathcal{T}_c\}_{c=1}^C,\nu]\right]
\le\delta.
\end{equation}
This is the guarantee in \eqref{eq:cert}.

\Needspace{8\baselineskip}
\smallskip\noindent
\textbf{Step 8: the payload-scaling bound for pair candidates.}
Assume first that $n_c\ge1$, fix $\theta\in\mathcal{W}_c$,
and choose $0<\lambda<1$. The exponential moment of a wrong
score is
\begin{equation}
\mathbb{E}_*[e^{\lambda S_c(\theta)}]
=\prod_{t\in \mathcal{T}_c}\int_0^1(1-u)^{-\lambda}\,\mathrm{d}u
=(1-\lambda)^{-n_c}.
\end{equation}
For a correct contribution, put $\alpha_t=1/p_t$. Its
negative exponential moment is
\begin{equation}
\begin{aligned}
\mathbb{E}_*[(1-B_t)^\lambda]
&=\alpha_t\int_0^1 b^{\alpha_t-1}(1-b)^\lambda\,\mathrm{d}b\\
&=\frac{\Gamma(\alpha_t+1)\Gamma(\lambda+1)}
        {\Gamma(\alpha_t+\lambda+1)}.
\end{aligned}
\end{equation}
The last equality is the beta integral proved in
Appendix~\ref{app:scaling}. Step~6 gives the independence
needed to multiply these moments. Since
$\{S_c(\theta)\ge S_*\}=\{e^{\lambda(S_c(\theta)-S_*)}\ge1\}$,
Markov's inequality gives
\begin{equation}
\begin{aligned}
\Pr_*[S_c(\theta)\ge S_*]
&\le\mathbb{E}_*[e^{\lambda(S_c(\theta)-S_*)}]\\
&=\mathbb{E}_*[e^{\lambda S_c(\theta)}]\,
  \mathbb{E}_*[e^{-\lambda S_*}]\\
&=(1-\lambda)^{-n_c}
  \prod_{t\in \mathcal{T}_c}
  \frac{\Gamma(1/p_t+1)\Gamma(\lambda+1)}
       {\Gamma(1/p_t+\lambda+1)}\\
&=\exp\left(-\sum_{t\in \mathcal{T}_c}\eta_{p_t}(\lambda)\right).
\end{aligned}
\end{equation}
The last line is precisely the definition of $\eta_p$ in
\eqref{eq:eta}. Taking the infimum over $0<\lambda<1$ yields
\begin{equation}
\Pr_*[S_c(\theta)\ge S_*]
\le
\exp\left(-\sup_{0<\lambda<1}
                 \sum_{t\in \mathcal{T}_c}\eta_{p_t}(\lambda)\right)
=e^{-\Lambda_c}.
\end{equation}
No maximiser is required: values approaching the supremum give
the same bound. Finiteness of $\Lambda_c$ follows from the
endpoint and continuity argument in Appendix~\ref{app:scaling}.

If the maximising pair is wrong, some wrong pair has score at
least $S_*$. This implication also holds for a tie, whatever
the fixed tie-breaking rule. Therefore
\begin{equation}
\begin{aligned}
\Pr_*[\hat m_c\ne m_c]
&\le\Pr_*[\hat\theta\ne\theta_*]\\
&\le\sum_{\theta\in\mathcal{W}_c}
       \Pr_*[S_c(\theta)\ge S_*]\\
&\le(K-1)e^{-\Lambda_c}
<\exp\bigl((k_c+r)\ln2-\Lambda_c\bigr).
\end{aligned}
\end{equation}
The strict inequality uses $K-1<K=2^{k_c+r}$ and
$e^{-\Lambda_c}>0$.

If $n_c=0$, then $\Lambda_c=0$ and every score is zero.
The same bound holds because the error probability is at
most $1\le K-1$. Thus all scored-set sizes are covered.
The right-hand side depends on the text and scored sets but
not on the actual nonce. Averaging over $\nu$ conditional
on $(Y_{1:n}=y,\{\mathcal{T}_c\}_{c=1}^C)$ proves the conditional bound
stated in (iv). Finally, an error in the concatenated message
requires an error in at least one chunk. A union bound under
this same conditioning gives
\begin{equation}
\Pr[\hat m\ne m\mid Y_{1:n}=y,\{\mathcal{T}_c\}_{c=1}^C]
\le\sum_{c=1}^{C}
  \Pr[\hat m_c\ne m_c\mid Y_{1:n}=y,\{\mathcal{T}_c\}_{c=1}^C]
\le\sum_{c=1}^{C}(2^{k_c+r}-1)e^{-\Lambda_c}.
\end{equation}
This completes (iv) and the proof.
\end{proof}

\end{document}